\date{}
\renewcommand{\@algocf@capt@plain}{above}
\newtheorem{thm}{Theorem}[section]
\newtheorem{asu}[thm]{Assumption}
\newtheorem{cor}[thm]{Corollary}
\newtheorem{prop}[thm]{Proposition}
\newtheorem{lem}[thm]{Lemma}
\spnewtheorem*{namedprop}{Proposition}{\bfseries}{\itshape}
\spnewtheorem*{namedthm}{Theorem}{\bfseries}{\itshape}
\spnewtheorem*{namedlemma}{Lemma}{\bfseries}{\itshape}
\spnewtheorem*{namedcor}{Corollary}{\bfseries}{\itshape}
\newcommand{\reals}{\mathbb{R}}
\newcommand{\rn}{\reals^{n}}
\newcommand{\setin}{\subseteq}
\newcommand{\sgn}{\operatorname{sgn}}
\newcommand{\mc}[1]{\mathcal{#1}}
\newcommand{\rank}{\operatorname{rank}}
\newcommand{\tr}{\operatorname{tr}}
 \DeclareMathOperator*{\argmax}{arg\,max}
\renewcommand{\l}{\left}
\renewcommand{\r}{\right}
\let\oldnl\nl
\newcommand{\nonl}{\renewcommand{\nl}{\let\nl\oldnl}}
\newenvironment{pf}{\begin{proof}}{%
  \end{proof}\ignorespacesafterend
}
\newenvironment{eq}{\begin{equation}}{%
  \end{equation}\ignorespacesafterend
}
\newcommand{\addr}[1]{#1}
\newcommand{\name}[1]{#1}
\title{
   The
  duality structure gradient descent
  algorithm: analysis and applications to neural networks.  
}
\authorrunning{Thomas Flynn}
\titlerunning{Duality Structure Gradient Descent}
\author{Thomas Flynn}
\institute{
  Thomas Flynn \at
  \addr Brookhaven National Laboratory\\
  Computational Science Initiative \\
        Upton, NY 11973, USA \\
        \email tflynn@bnl.gov }
\begin{document}
\maketitle
\begin{abstract}
   The training of machine learning models is typically carried out using some form of gradient descent, often with great success. However, non-asymptotic analyses of first-order optimization algorithms typically employ a gradient smoothness assumption (formally, Lipschitz continuity of the gradient) that is too strong to be applicable in the case of deep neural networks. To address this, we propose an algorithm named duality structure gradient descent  (\textproc{dsgd})
  that is amenable to non-asymptotic performance analysis, under mild assumptions on the training set and network architecture. The algorithm can be viewed as a form of layer-wise coordinate descent, where at each iteration the algorithm chooses one layer of the network to update. The decision of what layer to update is done in a greedy fashion, based on a rigorous lower bound on the improvement of the objective function for each choice of layer. In the analysis, we bound the time required to reach approximate stationary points, in both the deterministic and stochastic settings. The convergence is measured in terms of a parameter-dependent family of norms that is derived from the network architecture and designed to confirm a smoothness-like property on the gradient of the training loss function. We empirically demonstrate the behavior of \textproc{dsgd} in several neural network training scenarios. 
\end{abstract}
\keywords{Stochastic optimization \and Machine learning \and Non-convex optimization}
\section{Introduction}
Gradient descent and its variants are often used to train machine learning models, and these algorithms have led to impressive results in many different applications.
These include training neural networks for tasks such as representation learning 
\citep{nlnca,lee2009}, image classification \citep{hintonscience,imagenet}, scene labeling \citep{farabet2013}, and multimodal signal processing \citep{srivastava2014multimodal}, just to name a few.
In each case, these systems employ some form gradient based optimization, and the algorithm settings must be carefully tuned to guarantee success.
For example, choosing small step-sizes leads to slow optimization, and step-sizes that are too large result in unstable algorithm behavior.
Therefore it would be useful to have a theory that provides a rule for the step-sizes and other settings that guarantees the success of optimization. 
     
  A common assumption in analyses of gradient descent for non-convex functions, some of which are reviewed below in Section \ref{sect:rel-wrk},  is Lipschitz continuity of the gradient of the objective function, or related criteria. This assumption, also termed ``gradient smoothness'' or simply ``smoothness'', typically states that the objective function has a Lipschitz gradient. Assuming the function is twice continuously differentiable, this is equivalent to requiring that the second derivative is bounded. For example, if the objective has a Lipschitz continuous gradient, with Lipschitz constant $L$, then gradient descent with a step-size of $1/L$ is guaranteed to find approximate stationary points \cite[Section~1.2.3]{nesterov2013introductory}. However, it is doubtful that this approach to the analysis can be applied to multi-layer neural networks, since there are very simple neural network training problems where the objective function does not have a Lipschitz continuous gradient. In Section \ref{funsect} we present an example problem of this type. 
     
     Our approach has three main components.
The starting point is a layer-wise Lipschitz property that is satisfied by the neural network optimization objective.  Motivated by this we design our algorithm to choose one layer of the network to update at each iteration, using a lower bound for the amount of function decrease that each choice of layer would yield.
The second component is an analytical framework based on parameter-dependent norms 
that is used to prove the convergence of our algorithm, and that we believe may also be of general interest.
Thirdly, the geometric point of view is not just a tool for analysis but offers flexibility, as a variety of algorithms with convergence guarantees can be described this way: by defining the search directions within each layer according to a possibly non-Euclidean norm one can generate a variety of different update rules, and these can all be accommodated in our analysis framework.
Combining these three components leads to an optimization procedure that is provably convergent for a significant class of multi-layer neural network optimization problems. Note that the focus of this paper is mostly theoretical, and we do not explore practical variants of DSGD in depth in the present work.
We  now describe these three components in more detail. 
\paragraph{Layer-wise Lipschitz property}     A  network with no hidden layers presents a relatively straight forward optimization problem (under mild assumptions on the loss function). Typically, the resulting objective function has a Lipschitz continuous gradient; for instance, this follows from Proposition \ref{hess-bd-prop} by restricting to a network with a single layer. When multiple layers are connected in the typical feed-forward fashion, the result is a hierarchical system that as a whole generally does not have a Lipschitz continuous gradient. This is rigorously established in Proposition \ref{prop:nn-hard} below. 
   However, if we focus our attention to only one layer of the multi-layer network, then the task is somewhat simplified. Specifically, consider a neural network with the weight matrices ordered from input to output as  $w_1,w_2,\hdots,w_{L}$. Then under mild assumptions, the magnitude of the second derivative (Hessian matrix) of the objective function restricted to the weights in layer $i$ can be bounded by a polynomial in the norms of the successive matrices
   $w_{i+1},\hdots,w_{L}$, which is a sufficient for Lipschitz continuity of the gradient. This is formalized below in Proposition \ref{hess-bd-prop}.
This fact can be used to infer a lower bound on the function decrease that will happen when taking a gradient descent step on layer $i$.
   By computing this bound for each possible layer
   $i \in \{1,\hdots, L\}$ we can choose which update to perform using the greedy heuristic of picking the layer that maximizes the lower bound. The pseudocode for the procedure is presented in Algorithm \ref{algo:finslernn} below.

\paragraph{Duality Structure Gradient Descent}
A widely used success criterion for non-convex optimization is that the algorithm yields a point where the Euclidean norm of the  derivative of the objective is small. This is motivated by the fact that points where the derivative is zero are stationary points. It appears difficult to establish this sort of guarantee in the situation described above, where one layer at a time is updated according to a greedy criteria. However, the analysis becomes simpler if we are willing to adjust the geometric framework used to define convergence.
In the geometry we introduce, the norm at each point in the parameter space is determined by the weights of the neural network, and
the convergence criterion we use is that the algorithm generates a point with a small derivative as measured using the local norm. 

Our analysis is based on a continuously varying family of norms that is designed in response to the structure of the neural network, taking into account
our bound on the Lipschitz constants, and our greedy ``maximum update'' criterion.
The family of norms encodes our algorithm in the sense that one step of the algorithm  corresponds to  taking a step in the steepest descent direction as defined by the family of norms. The steepest descent directions are computed by solving a secondary optimization problem at each iteration, in order to identify the layer maximizing the lower bound on the function decrease. Formally, a \textit{duality structure} represents the solutions to these sub-problems.  

\paragraph{Intralayer update rules}
A third component of our approach, which turns out to be key to obtaining an algorithm that is not only theoretically convergent but also effective in practice, is to consider the geometry within each layer of the weight matrices. Typically in first order gradient descent, the update direction is the vector of partial derivatives of the objective function. This can be motivated using Taylor's theorem:
if it is known that the spectral norm of the Hessian matrix of a given function
$f$
is bounded by a constant
$L$,
then Taylor's theorem provides a quadratic upper bound for the objective of the form
$f(w-\Delta) \leq f(w) - \tfrac{\partial f}{\partial w}(w)\cdot\Delta + \tfrac{L}{2}\|\Delta\|_{2}^{2}$,
and setting
$\Delta = \tfrac{1}{L}\tfrac{\partial f}{\partial w}(w)$
results in a function decrease of magnitude at least
$\tfrac{1}{2L}\|\tfrac{\partial f}{\partial w}(w)\|_2^{2}$.
Using a different norm when applying Taylor's theorem
results in a different quadratic upper bound, and 
a general theorem about gradient descent for arbitrary norms is stated in
Proposition \ref{simple-gd-lem}.
The basic idea is that if
$\|\cdot\|$
is an arbitrary norm and
$L$
is a global bound on the norm of the bilinear maps
$\tfrac{\partial^{2} f}{\partial w^{2}}(w)$,
as measured with respect to
$\|\cdot\|$,
then
$f$ satisfies a quadratic bound of the form
$f(w-\Delta) \leq f(w) - \tfrac{\partial f}{\partial w}(w)\cdot\Delta + \frac{L}{2}\|\Delta\|^{2}$.
Using the notion of a duality map $\rho$ for the norm $\|\cdot\|$ (see Equation \eqref{dmapprops} for a formal definition), the update 
$\Delta = \frac{1}{L}\rho(\frac{\partial f}{\partial w}(w))$
leads to a decrease of magnitude at least
$ \frac{1}{2L} \|\tfrac{\partial f}{\partial w}(w)\|^2$.
For example, when the argument
$w$
has a matrix structure and
$\|\cdot\|$
is the spectral norm, then the update direction is a spectrally-normalized version of the Euclidean gradient, in which the matrix of partial derivatives has its non-zero singular values set to unity, a fact that is recalled in Proposition \ref{one-space-dual} below.
 The choice of norm for the weights can be encoded in the overall family of norms, and each norm leads to a different provably convergent variant of the algorithm.  In our experiments we considered update rules based on the matrix norms induced by $\|\cdot\|_q$ for $q=1,2$, and $\infty$.

Despite the possible complexity of the family of norms, the analysis is straight forward and mimics the standard proof of convergence for Euclidean gradient descent. In the resulting convergence theory, we study how quickly the norm of the gradient tends to zero, measured with respect to the local norms 
   $\|\cdot\|_{w(t)}$.
   Roughly speaking, the quantity that is proved to tend to zero is 
   $\|\tfrac{\partial f}{\partial w}(w(t))\|/p(\|w(t)\|)$,
   where
   $\|w(t)\|$ is the norm of the network parameters and
   $p$
   is an polynomial that depends on the architecture of the neural network. 
   This is in contrast to the usual Euclidean non-asymptotic performance analysis, which tracks the gradient measured with respect to a fixed norm, that is, 
   $\|\frac{\partial f}{\partial w}(w(t))\|$. See Proposition \ref{hess-bd-prop} and the discussion following it for more details on how the local norms are defined in the case of neural networks.  To finish this section, we would like to comment on the applicability of our results. We consider the case of multi-layer neural networks that use activation functions that are bounded and differentiable (up to 2nd order). This excludes neural networks with rectified linear units (ReLU), which constitute a popular class of models. The main reason for this is that ReLu units do not have second derivatives at every point in their domain, which is a requirement in our theory.
 
\subsection{Outline}
After reviewing some related work, in Section \ref{funsect} we present an example of a neural network training problem where the objective function does not have bounded second derivatives. In Section \ref{sect:gd} we introduce the abstract duality structure gradient descent (\textproc{dsgd}) algorithms and the convergence analyses.
The main result in this section is
Theorem \ref{prop:sgd-prop},
concerning the expected number of iterations needed to reach an approximate stationary point in \textproc{dsgd}.
Corollaries \ref{non-asympt-sgd} and \ref{non-asympt-gd}
consider special cases, including batch gradient descent and that of a trivial family of norms, in the latter case recovering the known rates for for standard stochastic gradient descent.  In Section \ref{sect:nnapp} we show how \textproc{dsgd} may be applied to neural networks with multiple hidden layers. The main results in this section are convergence analyses for the neural network training procedure presented in Algorithm \ref{algo:finslernn}, both in the deterministic case
(Theorem \ref{main-thm-app})
and a corresponding analysis for the mini-batch variant of the algorithm
(Theorem \ref{main-thm-app-sgd}).
Numerical experiments on the MNIST, Fashion-MNIST, CIFAR-10, and SVHN benchmark data
sets are presented in Section \ref{sect:num}.  We finish with a discussion in Section \ref{sect:discuss}.
Several proofs are deferred to an appendix. 
\subsection{Related work\label{sect:rel-wrk}}
There are a number of performance analyses of gradient descent for non-convex functions which utilize the assumption that one or more  higher derivatives are  bounded. Although we are specifically concerned with non-convex optimization, it is worth mentioning that \textproc{sgd} for convex functions is more well understood, and there are analyses that bypass the smoothness assumption \citep{sgd-and-hogwild, bachmoulines11}. 

For non-convex optimization, gradient-descent using a step-size proportional to $1/L$ achieves a convergence guarantee on the order of  $1/T$, where $T$ is the running time of the algorithm  \cite[Section~1.2.3]{nesterov2013introductory}.
Note the inverse relationship between the Lipschitz constant $L$ and the step-size $1/L$, which is characteristic of results that rely on a Lipschitz property of the gradient for non-asymptotic analysis.
 Most practical algorithms in machine learning are stochastic variants of gradient descent.  The Randomized Stochastic Gradient (RSG) algorithm  is one such example \citep{ghadimi-lan}. In RSG, a stochastic gradient update is run for $T$ steps,  and then a random iterate is returned.
In  \citep{ghadimi-lan} it was proved that the expected squared-norm  of the returned gradient tends to zero at rate of $1/\sqrt{T}$.
 Their assumptions include a Lipschitz gradient and uniformly bounded variance of gradient estimates. 
 
A variety of other, more specialized algorithms have also been analyzed under the Lipschitz-gradient assumption.
The Stochastic Variance Reduced Gradient (SVRG) algorithm combines features of deterministic and stochastic gradient descent, alternating between full gradient calculations and \textproc{sgd} iterations \citep{svrg}. Notably, it was shown that SVRG for non-convex functions requires fewer gradient evaluations on average compared to RSG \citep{zeyuan-svrg} \citep{reddi2016stochastic}. The step-sizes follow a $1/L$ rule, and the variance assumptions are weaker compared to RSG. For machine learning on a large scale, distributed and decentralized algorithms become of interest. Decentralized \textproc{sgd} was analyzed in \citep{decentralized}, leading to a  $1/L$-type result for this setting. 

Adaptive gradient methods, including Adagrad \citep{duchi2011adaptive},
RMSProp \citep{Tieleman2012} and ADAM \citep{kingma2014adam}
  define another important variant of gradient descent. These methods update learning rates on the fly based on the trajectory of observed (possibly stochastic) gradients. Convergence bounds for Adagrad-style updates in the context of non-convex functions have recently been derived \citep{li2018convergence, ward19a, jin2022on}. A key difference  between these adaptive gradient methods and our work, aside from the relaxation of the smoothness assumption that we pursue, is that in \textproc{dsgd}, gradients are scaled by the norm of the iterates, rather than the sum of the norms of the gradients. High probability bounds for AdaGrad style algorithms have also been derived \citep{kavis2022high}. Another form of adaption is clipping, whereby updates are rescaled if their magnitude is too large. Convergence rates for Clipped \textproc{gd} and Clipped \textproc{sgd} have recently been derived in \citep{Zhang2020Why}. It was shown that Clipped \textproc{gd} converges for a broader class of functions than those having a Lipschitz gradient. However, it is not clear if their generalized smoothness condition holds in the setting of deep neural networks.   Clipped \textproc{sgd} also admits convergence guarantees in the case of convex functions with rapidly growing subgradients, and weakly convex functions  \citep{mai21a}. While the class of weakly convex functions includes those with a Lipschitz gradient, it is too restrictive to include neural networks with multiple layers.
  
One approach to extend the results on gradient descent is to augment or replace the assumption on the second derivative with an analogous assumption on third order derivatives. In an analysis of cubic regularization methods, \citet{cartis2011adaptive}  proved a bound on the asymptotic rate of convergence for non-convex functions that have a Lipschitz-continuous Hessian.  In a non-asymptotic analysis of a trust region algorithm in \citep{curtis2017trust}, convergence was shown to points that approximately satisfy a second order optimality condition, assuming a Lipschitz gradient and Lipschitz Hessian.  

A natural question is whether these results can be generalized to exploit the Lipschitz properties of derivatives of arbitrary order. This question was taken up by \citet{Birgin2017},  where it is assumed that the derivative of order $p$ is Lipschitz continuous, for arbitrary $p\geq 1$ .
They consider an algorithm that constructs a $p+1$ degree polynomial majorizing the objective at each iteration, and the next iterate is obtained by approximately minimizing this polynomial.
The algorithm in a sense generalizes first order gradient descent and well as cubic regularization methods.
A remarkable feature of the analysis is that the convergence rate improves as $p$ increases. Note that the trade off is that higher values of $p$ lead to subproblems of minimizing potentially high degree multivariate polynomials. 

Another approach to generalizing smoothness assumptions uses the concept of \textit{relative smoothness}, defined by  \citet{relativelysmooth} and closely related to the condition $LC$ proposed by \citet{beyondlip}. Roughly speaking, a function $f$ is defined to be relatively smooth relative to a reference function $h$ if the Hessian of $f$ is upper bounded by the Hessian of $h$ (see Proposition 1.1 in \citep{relativelysmooth}.) In the optimization procedure, one solves sub-problems that involve the function $h$ instead of $f$, and if $h$ is significantly simpler than $f$ the procedure can be practical.
A non-asymptotic convergence guarantee is established under an additional relative-convexity condition.   We note that recent works have explored relative smoothness in non-convex  \citep{bolte} \citep{Bauschke2019} and stochastic settings \citep{davis2018stochastic} \citep{zhang2018convergence}.
 Our work in this paper is also concerned with generalized gradient smoothness condition; however, there are two primary differences. Firstly, our primary assumption (Assumption \ref{f-asu}) differs in that it does not require bounding the action of the Hessian on all possible update directions, but only in those directions relevant to the algorithm update steps (this is manifested by the presence of the duality mapping in the criterion). We posit that this enables step-sizes that are less conservative. In addition, motivated by applications to neural networks, in our analysis of the stochastic settings we confirm that the minibatch gradients satisfy a generalized bound on the variance as well. 

In this work we utilize the notion of a continuously varying family of norms in the convergence analysis of \textproc{dsgd}, ideas that are also used in variable metric methods \citep{davidon, davidon1991} and optimization on manifolds more generally. 
Notable instances of optimization on manifolds include optimizing over spaces of structured matrices \citep{absil2009optimization}, and parameterized probability distributions, as in information geometry \citep{amari1998natural}.
In the context of neural networks, natural gradient approaches to optimization have been explored \citep{kurita1993,amari1998natural}, and recently \citet{yann} considered some practical variants of the approach, while also extending it to networks with multiple hidden layers.

We note that several heuristics for step-size selection in the specific case of gradient descent for neural networks have been proposed, including \citep{pesky,duchi2011adaptive,kingma2014adam}, but the theoretical analyses in these works is limited to convex functions.  Other heuristics include  forcing Lipschitz continuity of the gradient by constraining the parameters to a bounded set, for instance using weight clipping, although this leads to the problem of how to choose an appropriate bounded region, and how to determine learning rate and other algorithm settings based on the size of this region. Finally, we point out that in certain cases one can establish that SGD converges to a local minimum, rather than a stationary point, given an initial point in a small enough neighborhood to such a minimum \citep{sgdminimum}.

 \begin{table}[!htb]\label{tbl:notation}
   \caption{Notation}

   \begin{minipage}{.49\linewidth}

      \begin{tabular}{|ll|}
        \hline
            $f$& an objective function\\
$f^*$ & a lower bound on values of $f$ \\
$w$& optimization variable\\
$n$ & dimension of parameter space\\
$t$& iteration number \\
$\mc{L}(\reals^d,\reals)\hspace{-0.5em}$& the linear maps from $\reals^d$ to  $\reals$\\
$\ell$& an element in $\mc{L}(\reals^d,\reals)$\\
$\epsilon$& step-size for optimization\\
$g(t)$ & an approximate derivative\\
$\delta$ & error of a derivative estimate\\
$L$& Lipschitz-type constant\\
$K$& number of layers in a network\\
$n_{k}$& number of nodes in layer $k$\\
$y^{k}$& state of layer $k$ of network\\
$x$& input to  network \\
        $z$& output target for a network \\
             \hline
        \end{tabular}
    \end{minipage}%
    \begin{minipage}{.5\linewidth}
      \begin{tabular}{|ll|}
\hline
        $m$& number of items in a training set\\
        $f_{i}$& loss function for training example $i$\\
$\|\cdot\|$& a norm\\
$\|\cdot\|_{w}\hspace{-5em}$& norm depending on a parameter $w$\\
$\rho$& a duality map\\
$\rho_{w}$& duality map as a function of $w$\\
$h$& the function computed by a  layer\\
$q$& a choice of norm in $\{1,2,\infty\}$\\
$\tr$& trace of a matrix\\
$b$& batch size\\
$B(t)$& items in the batch at time $t$\\
$T$& final iterate of  optimization\\
$r,v,s\hspace{-5em}$& functions used to bounds derivatives\\
$J$& loss function used for network\\
$\sgn$& $\sgn(x) = 1_{x\geq 0} -1_{x<0}$ \\
        $w_{1:k}$& a subvector $w_{1:k} = (w_{1},\hdots,w_k)$\\
        \hline
        \end{tabular}
      \end{minipage}
     
  \end{table}
  \paragraph{Notation}
The notation we will use is listed  in Table \ref{tbl:notation}.
In addition,   we will use the following definitions and conventions.
Given two linear maps 
$A_1: Z \to U$ and 
$A_2: Z \to U$, the direct sum 
$A_1 \oplus A_2$ is the linear map from 
$Z\times Z$ to 
$U \times U$ that maps a vector 
$(z_1, z_2)$ to 
$(A_1z_1, A_2z_2)$. $C(x,y)$ is the  result of applying bilinear map $C$ to the vectors $x,y$; in terms of components, $C(x,y) = \sum_{i=1}^{n}\sum_{j=1}^{m}C_{i,j}x_iy_j$.
The norm of a bilinear map $C:X \times Y\to Z$, is 
$\|C\| = \sup_{\|x\|_{X}=\|y\|_{Y}=1}\|C(u_1,u_2)\|_{Z}$.
If
$\ell :\reals^n\to\reals$
is a linear functional, then we represent the value of $\ell$ at the point
$u\in\reals^n$ by $\ell\cdot u$, following the notation used by \citet{abraham2012manifolds}. The derivative of a function
$f : \reals^n \to \reals^m$ at point
$x_0 \in \reals^n$ is a linear map from $\reals^n$ to $\reals^m$, denoted by
$\tfrac{\partial f}{\partial x}(x_0)$.
The result of applying this linear map to a vector $u \in \reals^n$ is a vector in $\reals^m$ denoted
$\tfrac{\partial f}{\partial x}(x_0)\cdot u$.
The second derivative of a function $f$ at $x_0$ is a bilinear map from $\reals^n\times\reals^n$ to $\reals^m$, denoted by
$\tfrac{\partial^2 f}{\partial x^2}(x_0)$, and we use the notation
$\tfrac{\partial^2 f}{\partial x^2}(x_0)\cdot (u,v)$ to represent the $\reals^m$-valued result of applying this bilinear map to the pair of vectors $(u,v)$.

\section{Motivating example}\label{funsect}
 For completeness,  this section details   a simple neural network training problems where the gradient of the objective function does not have a Lipschitz gradient.
Consider the  network depicted in Figure \ref{fig:nn}. This network maps a real-valued input to a hidden layer with two nodes and produces a real-valued output. Suppose that the sigmoid activation function
$\sigma(u) = 1/(1+e^{-u})$
is used, so that the function computed by the network is
\begin{equation}\label{nn-fn-comp}
  y(w_1,w_2,w_3,w_4;x) = \sigma(w_3\sigma(w_1 x) + w_4\sigma(w_2 x) )
  \end{equation}
Consider training the network to map the input
$x=1$
to the output
$0$,
using a squared-error loss function. The optimization objective $f:\reals^4\to\reals$ will be
\begin{equation}\label{opt-obj}
f(w_1,w_2,w_3,w_4) = |y(w_1,w_2,w_3,w_4;1)|^{2}.
\end{equation}
Proposition \ref{prop:nn-hard}  establishes that $f$ does not have bounded second derivatives, and hence cannot have a  Lipschitz continuous gradient.
\begin{prop}\label{prop:nn-hard}
The function $f$ defined in Equation \eqref{opt-obj} has unbounded second derivatives:
        $\sup_{w\in\reals^4}\|\tfrac{\partial^2 f}{\partial w^{2}}(w)\| = \infty.$
\end{prop}
\begin{figure}
\centering
   \begin{tikzpicture}[
            > = stealth, 
            shorten > = 1pt, 
            auto,
            node distance = 2.5cm, 
            semithick 
        ]

        \tikzstyle{every state}=[
            draw = black,
            thick,
            fill = white,
            minimum size = 17pt
        ]
        \node[state,draw=none] (x) {$x$};
        \node[state] (y1) [above right of=x] {};
        \node[state] (y2) [above left of=x] {};
        \node[state] (z) [above left of=y1] {};

        \path[->] (x) edge node[swap] {$w_1$} (y1);
        \path[->] (x) edge node {$w_2$} (y2);
        \path[->] (y1) edge node[swap] {$w_3$} (z);
        \path[->] (y2) edge node {$w_4$} (z);
    \end{tikzpicture}
    


    \caption{The small network used as a motivating example in Section \ref{funsect}. We show that the training problem of mapping the input $1$ to the output $0$, using the logistic activation function and squared-error loss, leads to an objective where the gradient is not Lipschitz continuous. \label{fig:nn}}
\end{figure}
The proof is deferred to an appendix.

A consequence of this proposition is that analyses assuming the objective function has a Lipschitz gradient  cannot be used to guarantee the convergence of gradient descent for this (and related) functions. 
Intuitively, when the parameters tend towards regions of space where the second derivative is larger, the steepest descent curve could be changing direction very quickly, and this means  first-order methods may have to use ever smaller step-sizes to avoid over-stepping and increasing the objective function. Note our example can be extended to show that third and higher-order derivatives of the objective $E$ are also not globally bounded, and therefore convergence analysis that shift the requirement of a derivative bounded onto such higher-order  derivatives would also not be applicable. On the other hand, the \textproc{dsgd} algorithm introduced below allows us to prove convergence for a variety of neural network training scenarios, including the function  \eqref{opt-obj}.
Finally, the negative conclusion in Proposition \ref{prop:nn-hard} does not mean that  algorithms like stochastic gradient descent would fail in practice, but it does suggest that the theory would be needed to be extended in order analyze the convergence in the context of training neural networks.

\section{Duality Structure Gradient Descent}\label{sect:gd}
We begin by assuming there is a user-defined family of norms parameterized by elements of the search space. 
The norm of a vector $u$ at parameter $w$ is denoted by  $\|u\|_{w}$.  The family of norms is subject to a continuity condition:
\begin{asu}\label{continuous-norms}The function
$(w,u) \mapsto \| u \|_w$ is continuous on $\reals^n\times \mathbb{R}^{n}$. 
\end{asu}
 Intuitively, the Assumption stipulates  that two norms $\|\cdot\|_{w_1}$ and $\|\cdot\|_{w_2}$ should be similar if $w_1$ and $w_2$ are close. 
This implies that the family of norms defines a \textit{Finsler structure} on the search space \cite[Definition~27.5]{deimling1985nonlinear}. In the remainder we use the terminology ``family of norms'' to refer to any collection of norms satisfying  Assumption \ref{continuous-norms}.  An example that the reader may keep in mind is $\|u\|_w = \frac{1}{1+\|w\|_2}\|u\|_2$, where  $\|\cdot\|_2$ is the standard Euclidean norm on $\mathbb{R}^n$. 

The family of norms  induces a norm on the dual space
$\mathcal{L}(\reals^{n},\reals)$ at each $w \in \reals^n$; if $\ell \in
\mathcal{L}(\mathbb{R}^{n},\mathbb{R})$ then  
\begin{equation}\label{dual-norm-def}
\|\ell\|_{w} =
\sup_{\|u\|_{w}=1}\ell\cdot u. 
\end{equation}
It is the case that for any family of norms the dual norm map 
$(w,\ell) \mapsto \|\ell\|_{w}$
is continuous on
$\reals^n\times \mathcal{L}(\mathbb{R}^{n},\mathbb{R})$.
This follows from \citep[Proposition~27.7]{deimling1985nonlinear}.

A vector $u$ achieving the supremum in
Equation \eqref{dual-norm-def} always exists, as it is the maximum of a continuous function over a compact set.
We represent scaled versions of vectors achieving the supremum in \eqref{dual-norm-def} using a duality map: 
\begin{definition}\label{dualitymap-def}A \textit{duality map} at $w$ is a function  $\rho_{w} : \mathcal{L}(\mathbb{R}^{n},\reals) \to \reals^{n}$ such that for all $\ell \in \mc{L}(\reals^n,\reals)$,
\begin{subequations}\label{dmapprops}
\begin{align} 
\| \rho_{w}(\ell)\|_{w} &= \|\ell\|_{w} \label{dual-map-prop-a}, \\
\ell\cdot \rho_{w}(\ell) &= \|\ell\|^{2}_{w}. \label{dual-map-prop-b}
\end{align}
\end{subequations}
\end{definition}
If the underlying norm
$\|\cdot\|_{w}$
is an inner product norm, then it can be shown that there is a unique choice for the duality map at $w$.
In detail, let
$Q_w$
be the positive definite matrix such that 
$\|u\|_{w} = \sqrt{u \cdot (Q_wu)}$ for all vectors $u$.
Then the duality map for this norm is
$\rho_w(\ell) = Q_{w}^{-1}\ell.$ 
However, in general there might be more than one choice for the duality map. For instance, consider the norm $\|\cdot\|_{\infty}$ on $\reals^2$, and let $\ell$ be the linear functional $\ell(x_1,x_2) = x_1$. Then we could set $\rho_{\infty}(\ell)$ to be either of the vectors $(1,1)$ or $(1,-1)$, and in both cases properties \eqref{dual-map-prop-a}, \eqref{dual-map-prop-b} would be satisfied.  

A duality structure assigns a duality map to each $w \in \reals^n$: 
\begin{definition}\label{dual-struct-def}
A \textit{duality structure} is
a function 
$\rho: \reals^n \times \mathcal{L}(\mathbb{R}^{n},\reals)
\to \reals^{n}$
such that for all $w \in \reals^n$,
the function
$\rho_{w}  \in \mc{L}(\reals^n,\reals)$
satisfies the two properties \eqref{dual-map-prop-a} and \eqref{dual-map-prop-b}.
\end{definition}
The simplest family of norms is the one that assigns the Euclidean norm to each parameter. In this situation  the dual norm is also the Euclidean norm and the duality map at each point is simply the identity function. For the family of norms $\|u\|_w = (1+\|w\|_2)\|u\|_2$, the reader may verify that a duality structure is $\rho_{w}(\ell) = \frac{1}{(1+\|w\|_2)^2}\ell$.
Before continuing, let us consider a less trivial example. 
\begin{example}\label{less-trivial}
  Let $h:\reals\to\reals$ and $g:\reals\to\reals$ be continuous.
  Consider the following family of norms on $\reals^2$:
  $$
  \|(u,v)\|_{(x,y)}
  =
  \sqrt{1 + |h(y)|}\, |u| + \sqrt{1 + |g(x)|}\, |v|.$$
   As the function $(x, y, u, v) \mapsto \|(u, v)\|_{(x,y)}$ is continuous,
   this family of norms is a well defined.
   Denoting a linear functional on $\reals^2$ by $\ell = (\ell_1,\ell_2)$, it follows from Proposition \ref{duality-product} below that the dual norm is
$$
\|(\ell_1,\ell_2)\|_{(x,y)}
=
\max\left\{ \frac{|\ell_1|}{\sqrt{1+|h(y)|}}, \frac{|\ell_2|}{\sqrt{1+|g(x)|}} \right\}.
$$
and a duality map is
$$
\rho_{(x,y)}(\ell_1,\ell_2)
=
\begin{cases}
\left( \frac{\ell_1}{1+|h(y)|}, 0 \right)
&\text{ if }
 \frac{|\ell_1|}{\sqrt{1+|h(y)|}} \geq
 \frac{|\ell_2|}{\sqrt{1+|g(x)|}}, \\
\left(0, \frac{\ell_2}{1+|g(x)|} \right)
&\text{ else. }
\end{cases}
$$
This concludes the example. 
\end{example}
As the example of the norm $\|\cdot\|_{\infty}$ shows, there may be multiple duality structures that can be chosen for a  family of norms. In this case, any one of them can be chosen without affecting the convergence bounds of the algorithms.

Given the definition of duality structure, we can now explain the steps of the \textproc{dsgd} algorithm, shown in Algorithm \ref{algo:finsgd}. Each iteration of this algorithm uses an estimate $g(t)$ of the derivative.
 The algorithm computes the duality map on this estimate, and the result serves as the update direction. A step-size $\epsilon(t)$ determines how far to go in this direction. Note that for a trivial family of norms ($\|\cdot\|_w= \|\cdot\|_2$ for all $w$), the algorithm reduces to standard \textproc{sgd}.

Our analysis seeks to bound the expected number of iterations until the algorithm generates an approximate stationary point for the function $f$, measured relative to the local norms. Formally, for $\gamma >0$ we define the stopping time $\tau$  as the first time the gradient has a norm less than or equal to $\gamma$: 
\begin{equation}  \label{taudef}
\tau
=
\inf
\Set{
t \geq 1 \, | \Big\|\frac{\partial f}{\partial w}(w(t))\Big\|_{w(t)}^{2} \leq \gamma
},
\end{equation}
and the goal of our analysis is to find an upper bound for $\mathbb{E}[\tau]$.
In our definition of $\tau$, the magnitude of the gradient is measured relative to the local norms $\|\cdot\|_{w(t)}$. This criterion for success is a standard notion in the literature of optimization on manifolds (see Theorem 4 of \citep{boumal2016globalrates}, and Theorem 2 of \citep{riemannian-svrg}.)  Note that this stopping rule is not part of the algorithms described in this paper, but is an analytical tool to prove something about the behavior of the iterates generated by the algorithms.

Next, we describe the conditions on the function $f$ and the derivative estimates $g(t)$ that will be used in the convergence analysis.
The conditions on the objective $f$ are that the function is differentiable and obeys a quadratic bound along each ray specified by the duality map. 
\begin{asu}\label{f-asu}
The function $f :\rn \to \reals$ is continuously differentiable,  bounded from below by
 $f^{*} \in \reals$,
and there is an
$L\geq 0$
such that, for all
$w\in \rn$, all
$\eta \in \mc{L}(\reals^n,\reals)$,
and all $\epsilon \in \reals$,
\begin{displaymath}
\left|
f(w + \epsilon \rho_{w}(\eta)) - f(w) - \epsilon \frac{\partial f}{\partial w}(w)\cdot\rho_{w}(\eta)
\right|
\leq
\frac{L}{2}\epsilon^2\|\eta\|_{w}^{2}.
\end{displaymath}
\end{asu}
 Note that when the family of norms is simply the Euclidean norm, $\|\cdot\|_w = \|\cdot\|_2$ for all $w$, then this condition, often called \textit{smoothness}, is satisfied by functions that have a Lipschitz continuous gradient \citep[Lemma~1.2.3]{nesterov2013introductory}.
While superficially appearing similar to the standard Lipschitz gradient assumption, by allowing for a family of norms in the definition (and also by involving the duality structure), a larger set of functions can be seen to satisfy the criteria, and hence this is assumption is strictly more general than the standard smoothness assumption.
Most importantly for us, in Proposition \ref{lip-prop-nn} below, we show that Assumption \ref{f-asu} is satisfied for the empirical loss function of a deep neural network,  by appropriate choice of the family of norms.  This setting of course includes the simple network with two hidden nodes presented in Section \ref{funsect}, and we refer the reader to Proposition \ref{simple-ex-lip} in the appendix which explicitly establishes Assumption \ref{f-asu} for the example of Section \ref{funsect}.
Another function class that does not satisfy Euclidean smoothness but does satisfy Assumption \ref{f-asu} is given below in Example \ref{another-example}.
Roughly speaking, if the family of norms is defined using a scalar function like $\|\cdot\|_w = g(x)\|\cdot\|$ for some underlying norm $\|\cdot\|$ and a real-valued function $g$, then the condition \eqref{f-asu} can be interpreted as requiring that the 2nd derivative of $f$ grows slower than $g(x)^2$.
Finally, this assumption can also be compared with condition A3 in \citep{boumal2016globalrates}, except it concerns the simple search space of $\reals^n$, and it is adapted to  use a a family of norms on Euclidean space instead of a Riemannian structure. 

\SetAlgoNoLine

\begin{algorithm}[t]
\caption{Duality structure gradient descent (\textproc{dsgd})\label{algo:finsgd}}
\DontPrintSemicolon
\textbf{input:} Initial point $w(1) \in \reals^n$ and step-size sequence $\epsilon(t)$. \\ \vspace{0.5em}

\For{$t=1,2,\hdots$}{
\vspace{0.5em}
$\blacktriangleright$ Obtain derivative estimate $g(t)$ \label{stoch-g-def} \\
\vspace{0.5em}
$\blacktriangleright$ Compute the search direction $\Delta(t) = \rho_{w(t)}\left(g(t)\right)$ \label{stoch-delta-def} \\
\vspace{0.5em}
$\blacktriangleright$  Update the parameter $w(t+1) = w(t) - \epsilon(t) \Delta(t)$  \label{stoch-w-def}
\vspace{0.5em}
}
\end{algorithm}

To begin the analysis of \textproc{dsgd}, we  define the filtration
$\{\mc{F}(t)\}_{t=0,1,\hdots}$, where $\mc{F}(0) = \sigma( w(1))$ and for $t\geq 1$,
$\mc{F}(t) = \sigma(w(1),g(1),g(2),\hdots,g(t))$.
We assume that the derivative estimates $g(t)$ are unbiased, and have
bounded variance relative to the family of norms: 
\begin{asu}\label{bias-asu}
  For $t=1,2,\hdots,$ define 
$\delta(t) = g(t) - \frac{\partial f}{\partial w}(w(t))$.
The $\delta(t)$ must satisfy
\begin{subequations}
\begin{align}
\mathbb{E}\left[
\delta(t)
\mid \mc{F}(t-1)
\right]
&=
0 \label{bias-eq},
\\
 \mathbb{E}\left[\|\delta(t)\|^{2}_{w(t)} \mid \mc{F}(t-1)
   \right] &\leq \sigma^2 < \infty. \label{var-ineq}
\end{align}
\end{subequations}
\end{asu}
When  $\|\cdot\|_w = \|\cdot\|_2$ for all $w$, Equation \eqref{var-ineq} states that, conditioned on past observations, the gradient estimates have uniformly bounded variance in the usual sense. In ML applications, the stochastic gradient is typically computed  with respect to a random minibatch of examples. 
However, in the context of using \textproc{sgd} to train deep neural nets, it may be problematic to require  the right hand side of \eqref{var-ineq} be bounded uniformly over all $t$ and all $w(1)$, since, as we demonstrate in Proposition \ref{nn-unbounded-var} below, it is possible to construct training data sets for the neural network model of Section \ref{funsect} such that the variance of the minibatch gradient estimator with unit batch-size is unbounded as a function of $\|w\|$.
By allowing for a family of norms, Assumption \ref{bias-asu} avoids this problem; as shown below in
Lemma \ref{nn-grad-var},
the variance of mini-batch gradient estimator for  \textproc{dsgd}  is bounded relative to the appropriate family of norms. 

Compared to the analysis of \textproc{sgd} in the Euclidean case (e.g, that of \citep{ghadimi-lan}) the analysis of \textproc{dsgd} is more involved because of the duality map, which may be a nonlinear function.
This means that even if $g(t)$ is unbiased in the sense of
Assumption \ref{bias-asu}, it may be the case that
$\mathbb{E}[\rho_{w(t)}(g(t))] \neq  \rho_{w(t)}(\tfrac{\partial f}{\partial w}(w(t)))$.
To address this, we quantify the bias in the update directions in terms of a convexity parameter of the family of norms, defined as  follows: 
\begin{definition}\cite[Section~4.1]{chidume}
        A family of norms $\|\cdot\|_w$ is  \textit{$2$-uniformly convex with parameter $c$} if there is a constant $c\geq 0$ satisfying, for all $w, x,y \in \reals^n$,
        \begin{eq}\label{uni-cvx-eqn}
         \l\|\frac{x + y}{2}\r\|_w^2
         \leq \frac{1}{2}\l\|x\r\|_w^2 + \frac{1}{2}\|y\|_w^2 - \frac{c}{4}\|x-y\|_w^2.
        \end{eq}
\end{definition}
Equivalently, this definition states that the function $x\mapsto \|x\|^2_w$ is strongly convex with parameter $2c$, uniformly over $w$. For example, if the family of norms is such that each $\|\cdot\|_w$ is an inner product norm, then $c=1$, while if $\|\cdot\|_w=\|\cdot\|_{p}$ for some $1<p<2$  we can take $c= p-1$ \citep[Proposition~3]{Lieb1994}. Note that  Equation \eqref{uni-cvx-eqn} always holds with $c=0$, although positive values of $c$ lead to better convergence rates, as we show below. 

The following Lemma states bounds we shall use to relate bias and convexity:
\begin{lem}\label{the-k-lemma}
Let
$\|\cdot\|$
be a norm on
$\reals^n$ that is $2$-uniformly convex with parameter $c$,
and let
$\rho$
be a corresponding duality map. 
If
$\delta$
is a 
$\reals^n$-valued
random variable such that
$\mathbb{E}[\delta] = 0$ then
\begin{subequations}
\begin{align}
\mathbb{E}\left[\ell \cdot \rho\left(\ell + \delta\right)\right]
&\geq
\left(\frac{1+c}{2}\right)\|\ell\|^{2} -
\left(\frac{1-c}{2}\right)\mathbb{E}\left[\|\delta\|^{2}\right],
\label{bias-expect}
\\
\mathbb{E}\left[\|\ell +\delta\|^2\right]
&\leq (2-c)\|\ell\|^{2} + \left(2-c^2\right)\mathbb{E}\l[\|\delta\|^2\r]. 
\label{bias-norm}
\end{align}
\end{subequations}

\end{lem}
The proof of this lemma is in the appendix. Note that when $\|\cdot\|$ is an inner product norm, the convexity coefficient is $c=1$ and both relations in the lemma are qualities. 

We can now proceed to our analysis of \textproc{dsgd}.
This theorem gives some conditions on $\epsilon$ and  $\gamma$ that guarantee finiteness of the expected amount of time to reach a $\gamma$-approximate stationary point.  
\begin{thm}\label{prop:sgd-prop}
Let
Assumptions \ref{f-asu} and \ref{bias-asu} hold, and suppose the family of norms has convexity parameter $c\geq 0$.
Consider running
Algorithm \ref{algo:finsgd} using constant step-sizes $\epsilon(t) := \epsilon > 0$.
Suppose that $\gamma$ and $\epsilon$ satisfy
\begin{equation}\label{step-k-cond}\begin{split}
\gamma &> \left(\frac{1-c}{1+c}\right)\sigma^2, \\
\epsilon &<
\frac{1}{L}\times
  \frac{(1+c)\gamma - (1-c)\sigma^2}{ (2-c)\gamma + (2-c^2)\sigma^2}
\end{split}
\end{equation}
\noindent Define  $G = f(w(1)) - f^{*}$.  If $\tau$ is defined as in
Equation \eqref{taudef}, then,
\begin{equation}\label{stoch-lemma}
\mathbb{E}[\tau]
\leq
\frac{
2G + \left(1+c - L\epsilon(2-c)\right)
\gamma
}{
 \epsilon
\left(1+c - L\epsilon(2-c)\right)
\gamma - \epsilon
\left(L\epsilon(2-c^2) + 1-c\right)\sigma^2
}.
\end{equation}
\end{thm}
 The proof of Theorem \ref{prop:sgd-prop} is deferred to appendix. Let us consider some special cases of this result. 
\begin{cor}\label{non-asympt-sgd}
Under Assumptions \ref{f-asu} and \ref{bias-asu}, the following special cases of Theorem \ref{prop:sgd-prop} hold:
\begin{enumerate}
\item (Standard SGD) Suppose
$\|\cdot\|_w = \|\cdot\|_2$ for all $w$, and let
$\rho_w$ be the identity mapping
$\rho_w(\ell) = \ell$. Then for any
$\gamma > 0$, setting
$\epsilon = \frac{1}{L}\left(\frac{\gamma}{\gamma + \sigma^2}\right)$ leads to 
$$\mathbb{E}[\tau] \leq
  \frac{2GL\sigma^2}{\gamma^2} + \frac{4L(G+\sigma^2)}{\gamma} +
  8L.
$$
\item
More generally, suppose $\|\cdot\|_w$ has parameter of convexity $c\geq 0$.
Then for any $\gamma > \left(\frac{1-c}{1+c}\right)\sigma^2$, setting
$\epsilon = \frac{1}{2L}\left( \frac{(1+c)\gamma - (1-c)\sigma^2}{(2-c)\gamma + (2-c^2)\sigma^2}\right)$ leads to 
$$
  \mathbb{E}[\tau]
  \leq
    \frac{8LG(2-c^2)\sigma^2}
    {( (1+c)\gamma + (1-c)\sigma^2)^2}
    +
    \frac{8L(2-c^2)(G+\sigma^2)}{
       (1+c)\gamma + (1-c)\sigma^2
    }
    +
    8L(2-c).
    $$
    
\end{enumerate}                   
\end{cor}
\begin{pf}
The claimed inequalities follow by plugging the given values of $\epsilon$ into Equation \ref{stoch-lemma}. The details are given in the appendix. 
\end{pf}
Note that the batch case, represented by $\sigma^2=0$, admits a simpler proof that avoids dependence on the convexity parameter, leading to the following: 
\begin{cor}\label{non-asympt-gd}
Let
Assumption \ref{f-asu} hold, and assume
$g(t) = \tfrac{\partial f}{\partial w}(w(t))$.
Then for any family of norms $\|\cdot\|_w$, if $\gamma = \frac{1}{L}$ then
$$
\min_{1\leq t \leq T}
\left\|\frac{\partial f}{\partial w}(w(t))\right\|_{w(t)}^{2} 
\leq 
\frac{2GL}{T}.
$$
Expressed using the stopping variable $\tau$,  this says
$\tau \leq \left\lceil 2GL/\gamma\right\rceil$.
Furthermore, any accumulation point
 $w^{*}$
 of the algorithm is a stationary point of $f$, meaning
 $\frac{\partial f}{\partial w}(w^{*}) = 0$.
\end{cor}
\begin{proof}
The proof closely follows that of Theorem \ref{prop:sgd-prop}, and the details are deferred to an appendix. 
\end{proof}

For the trivial family of norms that simply assigns the Euclidean norm to each point in the space, Algorithm \ref{algo:finsgd} reduces to standard gradient descent and we recover the known $1/T$ convergence rate for GD \citep{nesterov2013introductory}.
In the general case, the non-asymptotic performance guarantee concerns the quantities
$\|\tfrac{\partial f}{\partial w}(w(t))\|_{w(t)}$,
where the gradient magnitude is measured relative to the local norms $w(t)$.
We leave to future work the interesting question of under what conditions a relation can be established between the convergence of
$\|\tfrac{\partial f}{\partial w}(w(t))\|_{w(t)}$
and the convergence of
$\|\tfrac{\partial f}{\partial w}(w(t))\|$,
where the norm is fixed. 

The main application of Theorem \ref{prop:sgd-prop} will come in the following section, where it is used to prove the convergence of a layer-wise training algorithm for neural networks
(Theorems \ref{main-thm-app} and \ref{main-thm-app-sgd}).
For another example of an optimization problem where this theory applies, consider the following. 
\begin{example}\label{another-example}
Consider applying \textproc{dsgd} to the function
$f:\reals^2 \to \reals$ defined by
$f(x,y) = g(x)h(y)$, where
$g :\reals\to\reals$ and
$h : \reals\to\reals$ are functions that have bounded second derivatives.
For simplicity, assume that
$\|g''\|_{\infty} \leq 1$ and
$\|h''\|_{\infty} \leq 1$. Furthermore, assume that
$\sup_{(x,y)\in\reals^{2}}g(x)h(y) \geq f^*$ for some $f^* \in \reals$  (for instance, this occurs if $g$ and $h$ are non-negative).
The function $f$ need not have a Lipschitz continuous gradient, as the example of $g(x) = x^2$ and $h(y)= y^2$ demonstrates.

Let us denote pairs in $\reals^2$ by $w=(x,y)$.
Define the family of norms
$$\|(\delta x,\delta y)\|_{w} = \sqrt{1+|h(y)|}|\delta x| + \sqrt{1+|g(x)|}|\delta y|.$$
The dual norm and duality map are as previously defined in
Example \ref{less-trivial}.

Let us show that the conditions of
Assumption \ref{f-asu} are satisfied  with $L=1$.
Let
$\eta = (\eta_1,\eta_2)$
be any vector. 
If
$\frac{\eta_1}{\sqrt{1+|h(y)|}} \geq \frac{\eta_2}{\sqrt{1+|g(x)|}}$,
then
$\|\eta\|_{w} = \frac{|\eta_1|}{\sqrt{1+|h(y)|}}$ and 
$\rho(\eta) = \left( \frac{\eta_1}{1+|h(y)|}, 0\right)$.
Then, since the function $x\mapsto f(x,y)$ has a  second derivative that is bounded by $|h(y)|$, we
can apply a standard quadratic bound (Proposition \ref{simple-gd-lem}) to conclude that
\begin{align*}
\left|
f(w + \epsilon \rho_{w}(\eta)) - f(w) - \epsilon \frac{\partial f}{\partial w}(w)\cdot\rho_w(\eta)
\right|
&\leq \epsilon^2\frac{1}{(1+|h(y)|)^{2}}\frac{1}{2}|h(y)|\|\eta_1\|^{2} \\
&= \epsilon^2\frac{|h(y)|}{1+|h(y)|}\frac{1}{2}\|\eta\|_{w}^{2} \\
&\leq
\frac{\epsilon^2}{2}\|\eta\|_{w}^{2}. 
\end{align*}
The case  $\frac{\eta_1}{\sqrt{1+|h(y)|}} < \frac{\eta_2}{\sqrt{1+|g(x)|}}$ is similar.
Hence
Assumption \ref{f-asu} is satisfied.

According to Corollary \ref{non-asympt-gd}, convergence will be guaranteed in batch \textproc{dsgd} with $\epsilon=1$. In more details, in the first step
(Line \ref{stoch-delta-def}) the algorithm computes the duality map on the  derivative $g(t) = \tfrac{\partial f}{\partial w}(w(t))$. If
\begin{equation}\label{the-case}
\left|\frac{\partial f}{\partial x}(x(t),y(t))\right|\frac{1}{\sqrt{1+|h(y(t))|}} \geq 
\left|\frac{\partial f}{\partial y}(x(t),y(t))\right|\frac{1}{\sqrt{1+|g(x(t))|}}
\end{equation}
then the update direction is
$\Delta(t) = \left( \tfrac{\partial f}{\partial x}(x(t),y(t)) \frac{1}{1+|h(y(t))|}, 0\right)$.
In this case, at the next step
(Line \ref{stoch-w-def}) the next point is computed by keeping $y$ the same
($y(t+1) = y(t)$)
and updating $x$ as
$x(t+1) = x(t) - \epsilon\tfrac{\partial f}{\partial x}(x(t),y(t)) \frac{1}{1+|h(y(t))|}$.
If \eqref{the-case} does not hold, then $y$ is updated instead: $x(t+1) = x(t)$ and
$y(t+1) = y(t) - \epsilon\tfrac{\partial f}{\partial y}(x(t),y(t)) \frac{1}{1+|g(x(t))|}$.
The resulting convergence guarantee associated with the algorithm is that
$$
\max\left\{
\frac{|\frac{\partial f}{\partial x}(x(t),y(t))|}{\sqrt{1+|h(y(t))|}} ,
\frac{|\frac{\partial f}{\partial y}(x(t),y(t))|}{\sqrt{1+|g(x(t))|}}
\right\} \rightarrow 0 $$
as $t\to \infty$.

\end{example}
This example could be extended easily to the case where $f$ is defined as the product of arbitrarily many functions that have  bounded second derivatives.

\section{Application to Neural Networks with Multiple Layers}\label{sect:nnapp}
In order to implement and analyze the \textproc{dsgd} algorithm for minimizing a particular objective function, there are three tasks: 1) Define the family of norms for the space, 2) Identify a duality structure to use, and 3) Verify the generalized gradient smoothness condition of
  Assumption \ref{f-asu}.
In this section we carry out these steps in the context of a neural network with multiple layers. 

We first define the parameter space and the objective function. The network consists of an input layer and $K$ non-input layers. We are going to consider the case that each layer is fully connected to the previous one and uses the same activation function. 
Networks with heterogeneous layer types (consisting for instance of convolutional layers, softmax layers, etc.) and networks with biases at each layer can also be accommodated in our theory. 

Let the input to the network be of dimensionality
$n_{0}$, 
and let
$n_{1},\hdots, n_{K}$ specify the number of nodes in each of
$K$ non-input layers. 
For $k=1,\hdots,K$ define 
$W_{k} = \mathbb{R}^{n_{k}\times n_{k-1}}$
to be the space of
$n_{k}\times n_{k-1}$ matrices; 
a matrix in
$w_k \in W_{k}$ specifies weights from nodes in layer
$k-1$
to nodes in layer $k$. The overall parameter space is then 
$W = W_1 \times \hdots \times W_{K}$.
We define the output of the network as follows.
For an input
$x \in \mathbb{R}^{n_0}$, and weights $w = (w_1,\hdots, w_K) \in W$, the output is
$y^{K}(w;x) \in \mathbb{R}^{n_{K}}$ where 
$y^0(w;x) = y$ and for $1 \leq k \leq K$,
\begin{equation*}
y^{k}_{i}(w;x) 
= 
\sigma
\bigg(
  \textstyle
  \sum\limits_{j=1}^{n_{k-1}}
    w_{k,i,j}y^{k-1}_{j}(w;x)
\bigg), 
\quad 
i = 1,2,\hdots,n_{k}.
\end{equation*}
Given
$m$ input/output pairs
$(x_1,z_1),(x_2,z_2),\hdots,(x_m,z_m)$, where
$(x_n,z_n) \in \mathbb{R}^{n_0}\times\mathbb{R}^{n_K}$,
 we seek to minimize
 the \textit{empirical error}
\begin{equation}\label{emp-err}
\begin{split}
f(w) = \frac{1}{m}\textstyle\sum\limits_{i=1}^{m}f_{i}(w), 
\end{split}
\end{equation}
where the $f_i$  are
\begin{equation}\label{the-fi}
  f_{i}(w) = \|y^{K}(w;x_i) - z_i\|_2^{2}, \quad i=1,2,\hdots,m.
  \end{equation}
and $\|\cdot\|_{2}$ is the Euclidean norm.

\subsection{Layer-wise gradient smoothness for Neural Networks}
Our assumptions on the
activation $\sigma$,
the inputs $x_i$, and
targets $z_i$, are as follows: 
\begin{asu}\label{nonlin-asu}\quad 
\begin{enumerate}
   \renewcommand{\theenumi}{\roman{enumi}}
\item \label{nlbd}(Activation bounds)  The activation function $\sigma$ and its first two derivatives are bounded. Formally, 
$
\|\sigma\|_{\infty} \leq 1
$, 
$
\|\sigma'\|_{\infty} < \infty$, and $\|\sigma''\|_{\infty} < \infty$.
\item \label{iobd}(Input/Target bounds) 
 $\|x_i\|_{\infty} \leq 1$ and $\|z_i\|_{\infty} \leq 1$ for $i=1,2,\hdots,m$.
\end{enumerate}
\end{asu}
Note that the first part of the assumption is satisfied by the sigmoid function
$\sigma(u) = 1/(1+e^{-u})$
and also the hyperbolic tangent function
$\sigma(u) =-1 + 2/(1+e^{-2u})$. Note that this assumption is not satisfied by the ReLU function $\sigma(u)= \max\{0,x\}$.
The second part of Assumption \ref{nonlin-asu} states that the components of the inputs and targets are between $-1$ and $1$. 

In
Proposition \ref{hess-bd-prop} we establish that, under  Assumption \ref{nonlin-asu},
  the restriction of the objective function to the weights in any particular layer is a function with a bounded second derivative. 
  To confirm the boundedness of the second derivative, any norm on the weight matrices can be used, because on finite dimensional spaces all norms are strongly equivalent. However, different norms will lead to different specific bounds. For the purposes of gradient descent, each norm and Lipschitz bound implies a different quadratic upper bound on the objective, which in general may lead to a variety of update steps, hence defining different algorithms.
   Our construction considers the induced matrix norms corresponding to the vector norms
$\|\cdot\|_{q}$ for
$1 \leq q \leq \infty$. 

\begin{asu}\label{q-asu}
  For some 
  $q \in [1,\infty]$ and all   $1\leq i \leq  K$, 
  each space $\reals^{n_{i}}$
  has the norm $\|\cdot\|_{q}$.
\end{asu}
We are going to be working with the matrix norm induced by the given choice of $q$. Recall that, for an $r \times c$ matrix $A$, 
if $q=1$, then,
$$\|A\|_1 = \max\limits_{1\leq j\leq c}\,\sum\limits_{i=1}^{r}|A_{i,j}|,$$ 
while if $q=2$,
  $$\|A\|_{2} =
  \max\limits_{1\leq i\leq \min\{r,c\}}\,\sigma_{i}(A), 
  $$
  where for a matrix $A$, we define
  $\sigma(A) = \left(\sigma_1(A),\hdots,\sigma_{\min\{r,c\}}(A)\right)$
  to be the vector  of singular values of $A$. Lastly,
  if
$q=\infty$,
  $$\|A\|_{\infty} =
    \max\limits_{1\leq i\leq r}\,\sum\limits_{j=1}^{c}|A_{i,j}|. 
  $$
  That is, when
  $q = 1$ the matrix norm is the maximum absolute column sum, when
  $q = 2$ the norm is the largest singular value, also known as the spectral norm, and when
  $q = \infty$ the norm is the largest absolute row sum \citep{matanal}. 

  For ease of notation,  throughout this section, we will assume that all the layers have the same number of nodes. Formally, this means
  $n_{i} = n_{K}$  for $i=0,\hdots,K$. Also, at times we use the subvector notation $z_{1:i}$ to denote the first $i$ components of a vector $z$.
  Next, let us introduce a set of functions that will be used to express our bounds on the second derivatives.
    Let $r_0=1$, and for $1 \leq n \leq K-1$ the function $r_n$ is
  \begin{equation*}
    r_{n}(z_1,\hdots,z_n) 
    = 
    \|\sigma'\|_{\infty}^{n}
    \textstyle\prod_{i=1}^{n}z_i.
  \end{equation*}
  Then define $v_n$ recursively, with
  $v_0 = 0$,
  and for
  $1 \leq n \leq K-1$, the function $v_n$ is
  \begin{equation*}
    v_n(z_1,\hdots,z_n) 
    = 
    \|\sigma''\|_{\infty}\|\sigma'\|_{\infty}^{2(n-1)}
    \textstyle\prod_{i=1}^{n}z_i^{2} 
    + 
    \|\sigma'\|_{\infty} z_n v_{n-1}(z_1,\hdots,z_{n-1}).
  \end{equation*}
  Define constants
  $d_{q,1}$,
  $d_{q,2}$ and
  $c_{q}$
  as in
  Table \ref{table:defs}. 
   Then for $ 0 \leq n \leq K-1$ the function $s_{n}$ is
  \begin{equation*}
    s_{i}(z_1,\hdots,z_i) = 
    d_{q,2}c_{q}^{2}\|\sigma'\|_{\infty}^{2}
    r_{i}^{2}(z_{1:i})  
    + 
    d_{q,1}c_{q}^{2}\|\sigma'\|_{\infty}^{2}v_{i}(z_{1:i}) 
    + 
    d_{q,1}c_{q}^{2}\|\sigma''\|_{\infty}r_{i}(z_{1:i}).
  \end{equation*}
\begin{prop}\label{hess-bd-prop}
  Let
  Assumptions \ref{nonlin-asu} and \ref{q-asu} hold, and let $q$ be the constant from 
  Assumption \ref{q-asu}.
  Let the spaces 
  $W_1,\hdots,W_{K}$ 
  have the norm induced by 
  $\|\cdot\|_{q}$
  and define functions
   $p_1,\hdots,p_K$ as
  \begin{align}\label{p-definition}
    p_{i}(w) &= \sqrt{ s_{K-i}\left(\|w_{i+1}\|_{q}, \hdots, \|w_{K}\|_{q}\right) + 1}.
\end{align}
Let $f$ be as in \eqref{emp-err}.
Then for all $w \in W$ and $1 \leq i \leq K$, the bound 
$
\|
 \tfrac{\partial^{2} f}
      {\partial w_{i}^{2}}(w)\|_{q} \leq p_{i}(w)^{2}
$ holds.
\end{prop}
  \begin{table}[]
    \centering
    \begin{tabular}[t]{lcccc}
      \toprule
               & $q=1$ & $1<q<2$      & $2 \leq q < \infty$ & $q=\infty$ \\
      \midrule
      $c_q$    & $n$     & $n^{1/q}$     & $n^{1/q}$            &  $1$   \\
      $d_{q,1}$ &  $4$  & $4n^{(q-1)/q}$ & $4n^{(q-1)/q}$        & $4n$   \\
      $d_{q,2}$ &  $2$  & $2$          & $2n^{(q-2)/q}$       & $2n$ \\
      \bottomrule
    \end{tabular}
    \caption{The definitions of constants used in Proposition \ref{hess-bd-prop}. $c_q$ represents the magnitude of the vector $(1,1,\hdots,1)$ in the norm $\|\cdot\|_q$, and $d_{q,1}, d_{q,2}$ are bounds on the first and second derivatives, respectively, of the function $J(x) = \|x-z\|_2^2$,  measured with the norm $\|\cdot\|_q$. \label{table:defs}}
  \end{table}%

\noindent For example,  a network with one hidden layer
yields polynomials
$p_1,p_2$ where
\begin{subequations}
  \begin{equation}\label{p1-one-hidden}
    \begin{split}
  p_{1}(w)
  &= \sqrt{s_{1}(\|w_{2}\|_{q}) + 1} \\
  &= \sqrt{d_{q,2}c_{q}^{2}\|\sigma'\|^2_{\infty}r_{1}(\|w_2\|_{q})^{2}
    +
    d_{2,1}c_{q}^{2}\|\sigma'\|_{\infty}^{2}v_{1}(\|w_{2}\|_{q})
    +
    d_{q,1}c_{q}^{2}r_{1}(\|w_2\|_{q})
    +
    1
    } \\
  &=\sqrt{
    \left( d_{q,2}c_{q}^{2}\|\sigma'\|_{\infty}^{4}
    +
    d_{q,1}c_{q}^{2}\|\sigma'\|_{\infty}^{2}\|\sigma''\|_{\infty}\right)\|w_{2}\|_q^{2}
    +
    d_{q,1}c_{q}^{2}\|\sigma''\|_{\infty}\|\sigma'\|_{\infty} \|w_2\|_q
    +
    1
    },
    \end{split}
  \end{equation}
  \begin{equation}\label{p2-one-hidden}
    \begin{split}
  p_{2}(w)
  &=
  \sqrt{s_0 +  1} \\
  &=
  \sqrt{
    d_{q,2}c_{q}^{2}\|\sigma'\|_{\infty}^{2}r_0^2
    +
    d_{q,1}c_{q}^{2}\|\sigma'\|_{\infty}^{2}v_0
    +
    d_{q,1}c_{q}^{2}\|\sigma''\|_{\infty}r_0
    +
    1
  } \quad\quad\quad\quad\quad\quad\quad\quad\quad\quad\\
  &=
  \sqrt{
    c_{q}^{2}( d_{q,2}\|\sigma'\|_{\infty}^{2} + d_{q,1}\|\sigma''\|_{\infty})
    +1
  }.
    \end{split}
  \end{equation}
\end{subequations} 

 Note that in Proposition \ref{hess-bd-prop}, the norm of the Hessian matrix is bounded by a polynomial in the norms of the weights, and terms of a similar form appear in norm-based complexity measures for deep networks \citep{liang2019fisher,Neyshabur15,bartlettspectrally}.  

Proposition  \ref{hess-bd-prop} enables us to analyze algorithms that update only one layer at a time. Specifically, if we update a layer in the direction of the image of the gradient under the duality map, then an appropriate step-size guarantees improvement of the objective. This is a consequence of the following Lemma: 
\begin{prop}\label{simple-gd-lem}
  Let
  $f:\reals^n\to\reals$
  be a function with continuous derivatives up to 2nd order.
  Let
  $\|\cdot\|$
  be an arbitrary norm on $\reals^n$ and let $\rho$ be a corresponding duality map. 
  Suppose that 
  $
  \sup_{w}\sup_{\|u_1\|=\|u_2\|=1}
  \left\|\tfrac{\partial^2 f}{\partial w^{2}}(w)\cdot(u_1,u_2)\right\|
  \leq L
  $.
  Then for any $\epsilon>0$ and any $\Delta \in \reals^n$,
  $f(w-\epsilon\Delta) \leq  f(w) -
  \epsilon \frac{\partial f}{\partial w}(w)\cdot\Delta +
  \epsilon^2\frac{L}{2}\|\Delta\|^{2}.$
  In particular,
  $
  f\left(w-\epsilon\rho\left(\frac{\partial f}{\partial w}(w)\right)\right)
  \leq
  f(w)
  -
  \epsilon\left(1- \frac{L}{2}\epsilon\right)
  \|\frac{\partial f}{\partial w}(w)\|^{2}.
  $
\end{prop}
This proposition motivates the following greedy algorithm: Identify a layer $i^{*}$ such that
$
i^{*}
=
\argmax_{1\leq i \leq K}\frac{1}{p_{i}(w)}\|\tfrac{\partial f}{\partial w_i}(w)\|
$
and  update parameter $w_{i^*}$, using a step-size
$\tfrac{1}{p_{i^{*}}(w)^{2}}$
in the direction
$\rho(\tfrac{\partial f}{\partial w}(w))$.
As a consequence of
Proposition \ref{hess-bd-prop} and
Proposition \ref{simple-gd-lem}, this
update will lead to a decrease in the objective of at least
$\tfrac{1}{2 p_{i^{*}}(w)^2}\|\tfrac{\partial f}{\partial w_{i^{*}}}(w)\|^2$.
This greedy algorithm is depicted (in a slightly generalized form) in
Algorithm \ref{algo:finslernn}.
In the remainder of this section, we will show how this sequence of operations can be explained with a particular duality structure on $\reals^n$, in order to apply the convergence theorems of
Section \ref{sect:gd}. 
\subsection{Family of norms and duality structure}\label{subsect-fins}
In this section we define a family of norms and  associated duality structure that encodes the layer-wise update criteria.
The family of norms is constructed using the functions $p_i$ from \eqref{p-definition} as follows.
For any $w = (w_1,\hdots,w_K) \in W$ and any $(u_1, \hdots, u_{K}) \in W$,
define  $\|(u_1,\hdots,u_K)\|_{w}$ as 
\begin{equation}\label{the-finsler-structure}
\|(u_1,\hdots,u_K)\|_{w} 
= 
 p_{1}(w)\|u_1\|_{q} + \hdots + p_{K}(w)\|u_{K}\|_{q}.
\end{equation}
Note that the family of norms and the polynomials $p$ also depend on the user-supplied parameter $q$ from
Assumption \ref{q-asu}, although we omit this from the notation for clarity. 

To obtain the duality structure, we derive duality maps for matrices with the norm
$\|\cdot\|_{q}$, and then use a general construction for product spaces. 
The first part
is summarized in the following Proposition. Note that when we use the $\argmax$ to find the index of the largest entry of a vector, any tie-breaking rule can be used in case there are multiple maxima. For instance, the $\argmax$ may be defined to return the smallest such index.

\begin{prop}\label{one-space-dual}
Let $\ell \in \mathcal{L}(\mathbb{R}^{r\times c},\reals)$ be a linear functional defined on
a space of matrices with the norm $\|\cdot\|_{q}$ for
$
q
\in
\{
1,
2,
\infty\}
$. Then the dual norm is 
\begin{equation}\label{mat-dual-norm}
  \|\ell\|_{q} =
  \begin{cases}
    \sum\limits_{j=1}^{c}\max\limits_{1\leq i\leq r}|\ell_{i,j}| &\text{ if } q = 1, \vspace{0.3em}\\ 
     \sum\limits_{i=1}^{\min\{r,c\}}\sigma_i(\ell) &\text{ if } q = 2, \vspace{0.3em} \\
     \sum\limits_{i=1}^{r}\max\limits_{1\leq j\leq c}|\ell_{i,j}| &\text{ if } q = \infty \vspace{0.3em}.
  \end{cases}
\end{equation}
Possible choices for duality maps are as follows:\\
For $q=1$, the duality map $\rho_{1}$ sends $\ell$ to a matrix that picks out a maximum in each column: $ \rho_{1}(\ell)  = \|\ell\|_{1}m$  where $m$ is the $r\times c $ matrix
\begin{equation}\label{mat-duality-1}
 m_{i,j} = \begin{cases}
  \sgn(\ell_{i,j}) &\text{ if } i = \argmax_{1\leq k \leq r} |\ell_{k,j}|, \\
  0 &\text{ otherwise. }
  \end{cases}
\end{equation}
For $q=2$ the duality map  $\rho_{2}$ normalizes the singular values of  $\ell$:
If $\ell = U\Sigma V^{T}$ is the singular value decomposition of $\ell$, written in terms of column vectors as  $ U = [ u_1,\hdots, u_c], V= [ v_1,\hdots, v_c]$, and denoting the rank of the matrix $\ell$ by $\rank \ell$, then
\begin{equation}\label{mat-duality-two}
\rho(\ell)_{2}  = \|\ell\|_{2}\sum\limits_{i=1}^{\rank \ell}u_{i}v_{i}^{T}.
\end{equation}
For $q=\infty$, the duality map $\rho_{\infty}$  sends $\ell$ to a matrix that picks out a maximum in each row: $\rho_{\infty}(\ell)  = \|\ell\|_{\infty}m$
  where $m$ is the $r\times c $ matrix
\begin{equation}\label{mat-duality}
  m_{i,j} = \begin{cases}
  \sgn(\ell_{i,j}) &\text{ if } j = \argmax_{1\leq k \leq c} |\ell_{i,k}|, \\
  0 &\text{ otherwise. }
  \end{cases}
\end{equation}

\end{prop}
The proof of this proposition is in the appendix. 

Next, we construct a duality map for a product space from duality maps on the components.
Recall that in a product vector space 
$Z=X_1\times \hdots \times X_K$,
each linear functional 
$\ell \in \mathcal{L}(Z,\reals)$ uniquely decomposes as 
$
\ell 
= 
(\ell_1,\hdots,\ell_K) 
\in 
\mathcal{L}(X_1,\reals) \times \hdots \times \mathcal{L}(X_K,\reals)$. 
\begin{prop}\label{duality-product}
If $X_1,\hdots,X_K$ are normed spaces, 
carrying duality maps 
$\rho_{X_1},\hdots,\rho_{X_K}$
respectively, and the product $Z=X_1\times \hdots \times X_K$ has norm 
$\|(x_1,\hdots,x_K)\|_{Z} = p_1\|x_1\|_{X_1} + \hdots + p_K\|x_K\|_{X_K}$,
for some positive coefficients $p_1,\hdots,p_K$, then the dual norm for $Z$ is 
\begin{equation}\label{the-dual-norm-gen}
\|(\ell_1,\hdots,\ell_K)\|_{Z}
\,=
\max
\left\{\frac{1}{p_1}\|\ell_1\|_{X_1},\hdots,\frac{1}{p_K}\|\ell_K\|_{X_K}\right\}
\end{equation}
and a duality map is given by
$$\rho_{Z}(\ell_1,\hdots,\ell_K)
=
\left(
0,
\hdots,
\frac{1}{\left(p_{i^{*}}\right)^{2}}\rho_{X_{i^{*}}}(\ell_{i^{*}}),
\hdots,
0
    \right)$$  where  $i^{*} = \argmax_{1\leq i\leq K} \left\{\frac{1}{p_i}\|\ell_i\|_{X_i}\right\}.$
\end{prop}
See the appendix for a proof of
Proposition \ref{duality-product}.
Based on
Proposition \ref{duality-product}, and the definition of the family of norms from \eqref{the-finsler-structure}, the dual norm at a point
$w \in W = W_1 \times \hdots \times W_K$ is
\begin{equation}\label{the-dual-norm}
\|(\ell_1,\hdots, \ell_K)\|_{w} 
= 
\max\limits_{1\leq i\leq K} \frac{1}{p_{i}(w)}\|\ell_i\|_{q}.
\end{equation}
We define the  duality structure on the neural net parameter space as follows:
\begin{enumerate}
\item Each space $W_1,\hdots,W_K$ has duality map $\rho_{q}(\cdot)$, defined by Proposition
  \ref{one-space-dual}.
\item The duality map at each point $w$ is defined according to
  Proposition \ref{duality-product}:
    \begin{equation}\label{the-duality-structure}
      \rho_{w} (\ell_1,\hdots,\ell_K)
      = 
      \left(0,\hdots, 
        \frac{1}{\left(p_{i^{*}}(w)\right)^{2}}\rho_{q}(\ell_{i^{*}}),
        \hdots,0\right) 
    \end{equation}
where $i^{*} = \argmax_{1\leq i\leq K} \left\{\frac{1}{p_i(w)}\|\ell_i\|_{q}\right\}$.
\end{enumerate}

\subsection{Convergence Analysis}
Throughout this section,  we associate with $W$ the family of norms
 $\|\cdot\|_w$ from  \eqref{the-finsler-structure} and duality structure
 $\rho_w$ from \eqref{the-duality-structure}, and the function $f$ is defined as in \eqref{emp-err}.  The convergence analysis of  Algorithm \ref{algo:finsgd} is based on the  idea  that the update performed in the algorithm is exactly equivalent to taking a step in the direction of the duality map \eqref{the-duality-structure} as applied to the derivative of $f$, so the algorithm is simply a special case of Algorithm \ref{algo:finsgd}. 
Recall that the convergence property of Algorithm \ref{algo:finsgd} depends on verifying the generalized smoothness condition set forth in 
Assumption \ref{f-asu}. This smoothness condition is confirmed in the following proposition. 
\begin{lem}\label{lip-prop-nn}
  Let
  Assumptions \ref{nonlin-asu} and \ref{q-asu} hold, and let $q$ be the constant chosen in
  Assumption \ref{q-asu}. Let $f$  defined as in  \eqref{emp-err}. Then  Assumption \ref{f-asu}  is satisfied with $L=1$.
\end{lem}
Now that
Assumption \ref{f-asu} has been established, we can proceed to the analysis of batch and stochastic \textproc{dsgd}. 

\subsection{Batch analysis}
First we consider analysis of Algorithm \ref{algo:finslernn} running in Batch mode. 
Each iteration starts on
Line \ref{bp} by computing the derivatives of the objective function. This is a standard back-propagation step.  Next, on
Line \ref{dual}, for each layer $i$ the polynomials $p_i$ and the $q$-norms of the derivatives $g_i$ are computed. Note that for any $i < K$, computing $p_i$ will require the matrix norms $\|w_{i+1}\|_q,\hdots,\|w_{K}\|_{q}$.
In
Line \ref{findmax}, we identify which layer $i$ has the largest value of $\|g_i(t)\|_q/p_i(w(t))$.
Note that this is equivalent to maximizing $\|g_i(t)\|_q^{2}/2p_i(w(t))^{2}$, which is exactly the lower bound guaranteed by
Proposition \ref{simple-gd-lem}.
Having chosen the layer, in
Lines \ref{update-step} through \ref{donestep} we perform the update of  layer $i^*$, keeping parameters in other layers fixed. 
\LinesNumbered 
\SetAlgoNoLine

\begin{algorithm}[t]

  \caption{Duality structure gradient descent for a multi-layer neural network\label{algo:finslernn}}
  \textbf{input:} Parameter $q \in \{1, 2,\infty\}$, training data $(y_i,z_i)$ for $1\leq i \leq m$, initial point $w(1) \in W$, step-size $\epsilon$,  selection of mode $\operatorname{Batch}$ or $\operatorname{Stochastic}$, and batch-size $b$ (only required for Stochastic mode.)\\ \vspace{0.5em}
  \For{$t=1,2,\hdots$}{
    \vspace{0.5em}
    \uIf{$\mathrm{Mode} = \mathrm{Batch}$}{    \vspace{0.5em}
      $\blacktriangleright$ Compute full derivative $
           g(t) =
                \tfrac{\partial f}{\partial w}(w(t))
                $. \label{bp}
                    \vspace{0.5em}
    }
    \ElseIf{$\mathrm{Mode} = \mathrm{Stochastic}$}{     \vspace{0.5em}
      $\blacktriangleright$ Compute mini-batch derivative
      $ g(t) = \frac{1}{b}\sum\limits_{j \in B(t)}\frac{\partial f_j}{\partial w}(w(t)).$ \label{sbp}
    }
        \vspace{0.5em}
    $\blacktriangleright$ Compute 
    $
    \frac{1}{p_{1}(w(t))}\|g_1(t)\|_{q},
    \hdots,
    \frac{1}{p_{K}(w(t))}\|g_K(t)\|_{q}.
    $
    \hfill (Using \eqref{p-definition} and Prop. \ref{one-space-dual})\label{dual}
    \\     \vspace{0.5em}
    $\blacktriangleright$ Select layer to update: 
    $
    i^{*}
    =
    \argmax_{1\leq i \leq K}\frac{1}{p_{i}(w(t))}\|g_i(t)\|_{q}
   $.    \label{findmax}   \\ \vspace{0.5em}
 $\blacktriangleright$ Update 
    $
    w(t+1)_{i^{*}}
    =
    w(t)_{i^{*}} - \epsilon\frac{1}{p_{i^{*}}(w(t))^2}\rho_{q}\left(g_{i^*}(t)\right)
    $. \hfill (Using Prop. \ref{one-space-dual}) \label{update-step}
    \\ \vspace{0.5em}
    \For{$i \in \{1,2,\hdots, K\} \setminus \{i^*\}$ }{
           \vspace{0.5em}
             $\blacktriangleright$ Copy previous parameter: $w(t+1)_{i} = w(t)_{i}$.
               \vspace{0.5em}
     }\label{donestep} \vspace{0.5em}
  }
\end{algorithm}

\begin{thm}\label{main-thm-app}
  Let the function $f$ be defined as in
  \eqref{emp-err}, let
  Assumptions \ref{nonlin-asu} and \ref{q-asu} hold, and let $q$ be the constant chosen in
  Assumption
  \ref{q-asu}.
  Associate with $W$ the family of norms
 $\|\cdot\|_w$ from  \eqref{the-finsler-structure} and duality structure
  $\rho_w$ from \eqref{the-duality-structure}.
 Consider running
  Algorithm \ref{algo:finslernn}
  in batch mode, using step-size
  $\epsilon = 1$.
	Then 
  $\min_{1\leq t \leq T}\|\frac{\partial f}{\partial w}(w(t))\|^2_{w(t)} \leq \delta$
  when
  $T \geq 2 f(w(1))/\delta$.
\end{thm}
\begin{pf} It is evident that 
  the update performed in
  Algorithm \ref{algo:finslernn}
  running in batch mode is of the form
  $w(t+1) = w(t) - \epsilon \Delta(t)$, where
  $\epsilon =1$ and
  $\Delta(t) = \rho_{w(t)}(\tfrac{\partial f}{\partial w}(w(t)))$.
  Hence the algorithm is a particular case of
  Algorithm \ref{algo:finsgd}.
  We have established  
  Assumption \ref{f-asu} in Lemma  \ref{lip-prop-nn}, and the result follows by
  Corollary \ref{non-asympt-gd}, using $L=1$ and $f^{*}= 0$. 
\end{pf}
To get some intuition for this convergence bound,
 note that the local derivative norm
 may be lower bounded as
\begin{align*}
  \left\|\frac{\partial f}{\partial w}(w(t))\right\|_{w(t)}
  =
  \max_{1\leq i \leq K}\frac{\left\|\frac{\partial f}{\partial w_i}(w(t))\right\|_{q}}{p_i(w(t))} 
  &\geq
  \frac{  \sum\limits_{i=1}^{K}\left\|\frac{\partial f}{\partial w_i}(w(t))\right\|_{q}}{K \sum\limits_{j=1}^{K}p_j(w(t))}.
\end{align*}
Therefore, using a step-size $\epsilon=1$, a consequence of the convergence bound is 
$$\min_{1\leq t \leq T}\frac{\sum\limits_{i=1}^{K}\left\|\frac{\partial f}{\partial w_i}(w(t))\right\|_{q}}{K \sum\limits_{j=1}^{K}p_j(w(t))}
 \leq \delta
 $$
 when
$ T \geq  2 f(w(1))/\delta$. 
In this inequality, the term on the left-hand side is the magnitude of the gradient relevant to a fixed norm independent of the weights $w$, divided  by a term that is an increasing function of the weight norms $\|w(t)\|$. 

\begin{remark}\label{rem}
  Note that in our analysis of \textproc{dsgd} for neural networks, it is important that the abstract theory is not constrained to update schemes based on inner-product norms.
In our case, the family of norms on the parameter space \eqref{the-finsler-structure} is defined so that the corresponding duality structure \eqref{the-duality-structure} generates updates
that are confined to a single layer. This feature will not be present in the duality map for any inner product norm, since the duality map for an inner product norm is always linear. More explicitly, suppose that
$\ell_1$ and $\ell_2$ are linear functionals and $\rho$ is a duality map for an inner product norm.
If
$\rho(\ell_1)$ has non-zero components in only the first layer, and  $\rho(\ell_2)$ only has non-zero components in the second layer, then, due to linearity, $\rho(\ell_1+\ell_2) = \rho(\ell_1) +\rho(\ell_2)$ has non-zero components in both layers. 
\end{remark}

Next, let us consider the setting of mini-batch duality structure stochastic gradient descent.
This corresponds to executing the steps of
Algorithm \ref{algo:finslernn} in with ``stochastic'' mode, where  instead of computing the full derivative at each iteration, approximate derivatives are calculated by averaging the gradient of our loss function over some number of randomly selected instances in our training set.
Formally, this is expressed in
Line \ref{sbp} of 
Algorithm \ref{algo:finslernn}.
We represent $b$ randomly chosen instances  as a random subset  $B(t) \setin \{1,\hdots,m\}^{b}$ and  the gradient estimate $g(t)$ is
\begin{equation}\label{d-estim}
  g(t) = \frac{1}{b}\sum\limits_{j \in B(t)}\frac{\partial f_j}{\partial w}(w(t)).
\end{equation}
We first show that this gradient estimate has a uniformly bounded variance relative to the family of norms \eqref{the-finsler-structure} 
\begin{lem}\label{nn-grad-var}
  Let
  Assumptions \ref{nonlin-asu} and \ref{q-asu} hold and let $q$ be the constant chosen in
  Assumption \ref{q-asu}.
  Let $g(t)$ be as in \eqref{d-estim}
  and define
  $\delta(t) = \frac{\partial f}{\partial w}(w(t)) - g(t)$.
  Then the variance of $g(t)$ is bounded as 
\begin{equation}\label{the-var-cond}
  \mathbb{E}
  \left[
    \|\delta(t)\|^{2}_{w(t)} \, \middle| \, \mc{F}(t-1)
    \right]
  \leq
  \frac{1}{b}\times
   32 K n^{\max\{1+2/q, 4-4/q\}}.
\end{equation}
\end{lem}
\begin{remark}Note that the right-hand side of \eqref{the-var-cond} is bounded independently of $w(1),\hdots,w(t-1)$. This is notable as such a guarantee cannot be made in standard (Euclidean) \textproc{sgd}, a fact we formally prove in Proposition \ref{nn-unbounded-var}.\end{remark}
\noindent 
Now that we have established a bound on the variance of the gradient estimates $g(t)$, we can move to the performance guarantee for stochastic gradient descent. 
\begin{thm}\label{main-thm-app-sgd}
  Let the function $f$ be defined as in  \eqref{emp-err},  let
  Assumptions \ref{nonlin-asu} and \ref{q-asu} hold let $q$ be the constant chosen in
  Assumption \ref{q-asu}.
  Associate with $W$ the family of norms \eqref{the-finsler-structure} and duality structure \eqref{the-duality-structure}.
  Set
  $
  \sigma^{2}
  =
  \frac{32}{b} K n^{\max\{1+2/q,4-4/q\}}.
  $
  Consider running
  Algorithm \ref{algo:finsgd} in stochastic mode, with a batch size $b$ and  step-size 
  $\epsilon =  \frac{1}{4}\frac{\gamma - \sigma^2}{\gamma + \sigma^2}$.
  Then for any
  $\gamma >\sigma^{2}$ if $\tau$, is the stopping time \eqref{taudef}
  it holds that 
  $$
   \mathbb{E}[\tau]
  \leq
    \frac{16G\sigma^2}
    {( \gamma + \sigma^2)^2}
    +
    \frac{16(G+\sigma^2)}{
       \gamma + \sigma^2
    }
    +
    16.
$$
\end{thm}
\begin{proof}The reasoning follows the proof of  Theorem \ref{main-thm-app}:
  Assumption \ref{f-asu} was established in Lemma \ref{lip-prop-nn},
  and Assumption \ref{bias-asu} follows from Lemma \ref{nn-grad-var},  and hence
  the result follows from Corollary \ref{non-asympt-sgd}, using $L=1$, $c=0$, and $f^{*} = 0$. 
\end{proof}
Note that our result requires that $\gamma$ be at least $\sigma^2$. This is in contrast to the deterministic case (Theorem \ref{main-thm-app}) which does not restrict $\gamma$. An interesting avenue of future work would be to see whether this bound can be improved.


\section{Numerical Experiment}\label{sect:num}

The previous section established  convergence guarantees for \textproc{dsgd}, in  both batch
(Theorem \ref{main-thm-app})  and minibatch
(Theorem \ref{main-thm-app-sgd}) 
settings. In this section we investigate the practical efficiency of \textproc{dsgd} with numerical experiments on several machine learning benchmark problems. 
These benchmarks included the MNIST \citep{lecun1998mnist}, Fashion-MNIST \citep{fashionmnist}, SVHN \citep{netzer2011reading}, and CIFAR-10 \citep{Krizhevsky09} image classification tasks.
In our experiments, the networks all had one hidden layer ($K=2$). The hidden layer had $n_1 = 300$ units, and the output layer had $n_2 =10$ units (one for each class).  For the MNIST and Fashion-MNIST datasets, the input size was
$n_{0} = 784$ 
 and,
for the SVHN and CIFAR-10 datasets the input size is 
$n_{0} = 3072$. 
The nonlinearity used in all the experiments was the logistic function
$\sigma(x) = 1/(1+e^{-x})$.
For all datasets, the objective function is defined as in Equation \eqref{emp-err}.
The number of training instances
was
$m = 60,000$ for MNIST and Fashion-MNIST,
$m=50,000$ for CIFAR-10, and
$m = 73,257$ in the SVHN experiment.
In all cases, a training pair 
$(y_n,t_n)$ consists of an image and a $10$ dimensional indicator vector representing the label for the image.

The details of the \textproc{dsgd} procedure are shown in
Algorithm \ref{algo:finslernn}. Note that the algorithm calculates different matrix norms and duality maps depending on the choice of $q$. For instance, when $q=2$, computing the polynomials $p_1$ involves computing the spectral norm of the weight-matrix $w_2$, while computing the norms of $g_1$ and $g_2$ uses the norm dual to the spectral norm, as defined in the second case of
Equation \eqref{mat-dual-norm}. 

For  experiments where \textproc{dsgd} is used in batch mode, the theoretically specified step-size $\epsilon=1$ was used.
In all other cases, the choice of step-size was determined  experimentally using a validation set (details of the validation procedure, as well as weight initialization, are deferred to an appendix.) 
In the batch experiments, the algorithm ran for $20,000$ weight updates. In the stochastic algorithms, each mini-batch had $128$ training examples, and training ran for $500$ epochs. 

\begin{figure}[!t]
\begin{center}
\includegraphics[width=1\linewidth]{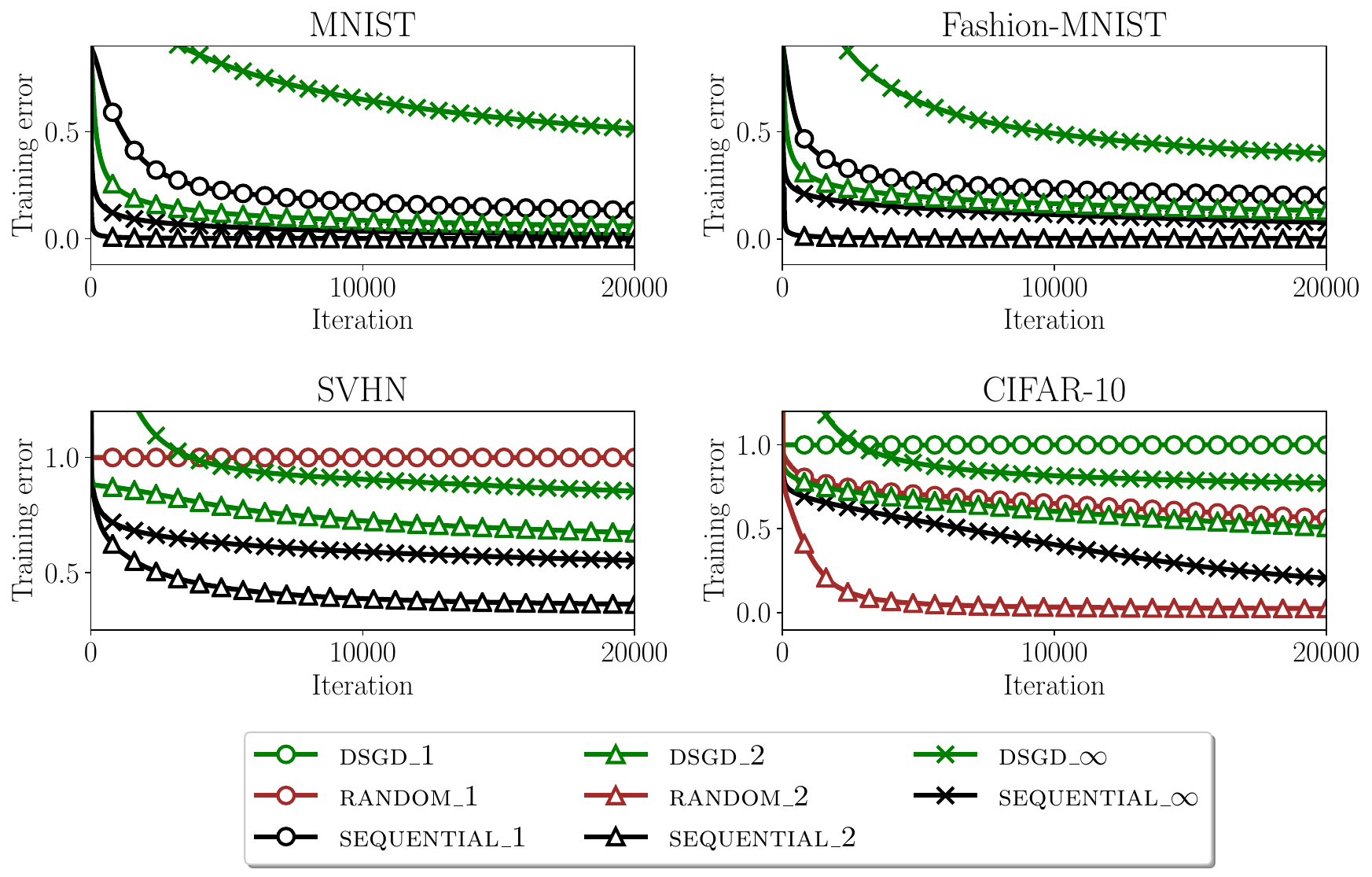}
\end{center}
\caption{
A  comparison of batch \textproc{dsgd} with layer-wise algorithms. For each dataset and choice of $q\in \{1, 2, \infty \}$ we plot the  training error for of \textproc{dsgd} as well as the best layer-wise algorithm among \textproc{random} and \textproc{sequential}.
 Best viewed in color. 
}\label{fig:batch_train}
\end{figure} 
\subsection{Batch DSGD using theoretically specified step-sizes}
We performed several experiments involving  \textproc{dsgd} in batch mode using the theoretically prescribed step-size $\epsilon=1$ from
Theorem \ref{main-thm-app}
in order to understand the practicality of the algorithm. This was achieved by comparing the  performance of \textproc{dsgd} with two  other layer-wise training algorithms termed \textproc{random} and \textproc{sequential}. In the
\textproc{random}  algorithm, the layer to update is chosen uniformly at random at each iteration. In the
\textproc{sequential} algorithm, the layer to update alternates deterministically at each iteration. For both \textproc{random} and \textit{sequential}, the step-sizes are chosen based on performance on a validation set. For each of the three algorithms (\textproc{dsgd}, \textproc{random}, and \textproc{sequential}), we repeated optimization using three different underlying norms $q=1,2,$ or $\infty$. Some of the results are shown in Figure \ref{fig:batch_train}, which indicates the trajectory of the training error over the course of optimization. Note that although \textproc{dsgd} does not have the best performance when measured in terms of final training error, it does carry the benefit of having theoretically justified step-sizes, while the other layer-wise algorithms use step-sizes defined through heuristics. An additional plot featuring the trajectory of testing accuracy for these experiments may be found in the appendix (Figure \ref{fig:batch_test}). 

As the \textproc{dsgd} algorithm selects the layer to update at runtime, based on the trajectory of weights, it may be of interest to consider how these updates are distributed.  This information is presented in Figure \ref{fig:counts} for the MNIST and CIFAR-10 datasets. Interestingly. when using the norm
$\|\cdot\|_1$,
 all updates occur in the second layer. For the norm
$\|\cdot\|_2$,
the rates of updates in each layer remained relatively constant throughout optimization. For
$\|\cdot\|_{\infty}$,
there was a greater range in the rate of updates in each layer as training progressed. 

Let us remark on the runtime performance of \textproc{dsgd} compared with the other layer-wise algorithms. Compared to standard \textproc{dsgd} has the additional step of computing the duality map and norms of the gradients, and the norm of the weights. However, for the batch algorithms the time per epoch  is dominated by  forward and backward passes over the network. The other layer-wise algorithms that were compared against also compute duality maps, but not norms of the weights. Due to this, epochs of \textproc{dsgd} are only about 2\% - 3\% slower than their \textproc{random} and \textproc{sequential} counterparts.   Data sets and code generated during the current study are available from the  author on reasonable request.
\begin{figure}[!t]
\begin{center}
\includegraphics[width=1\linewidth]{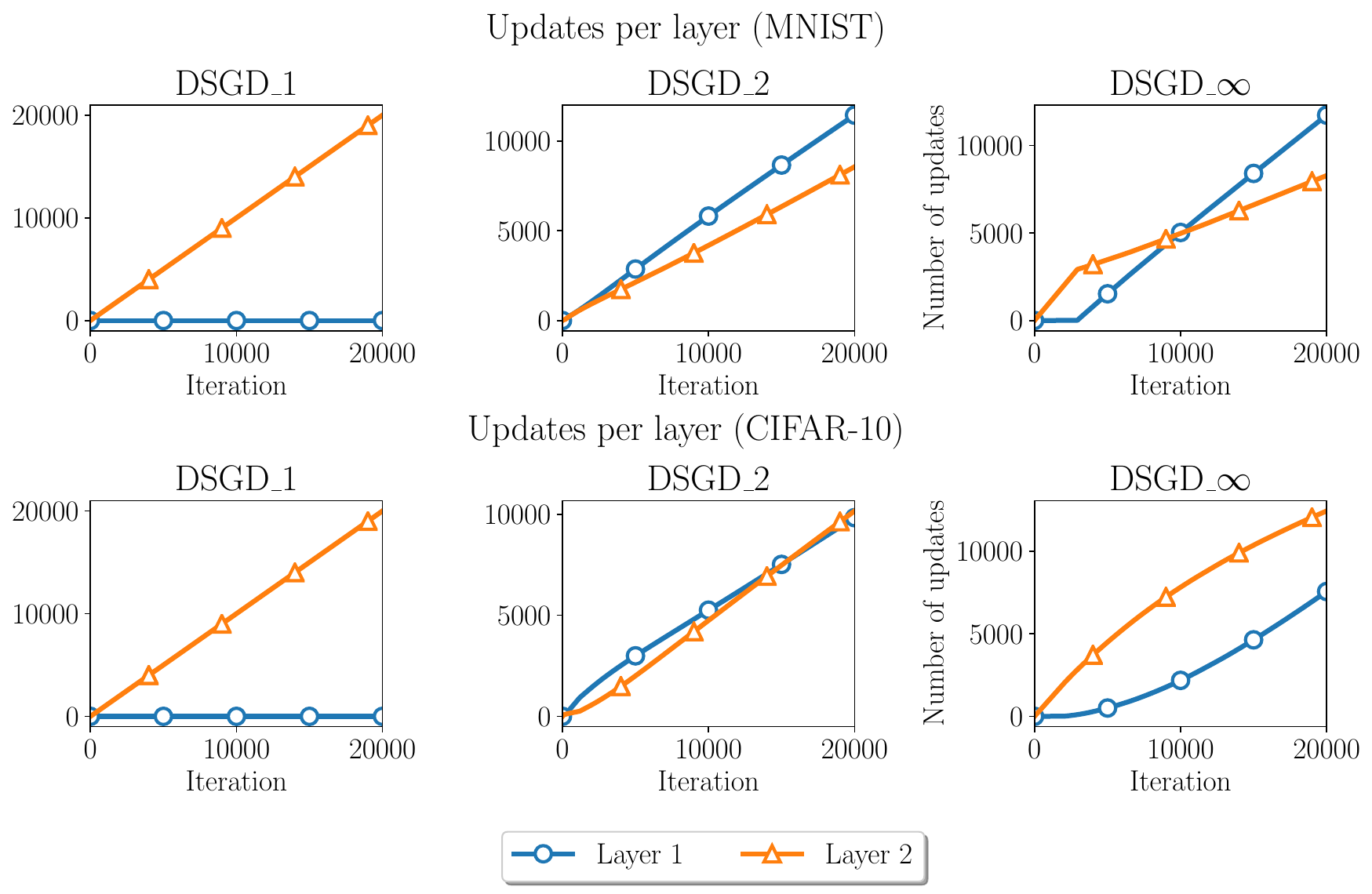}
\end{center}
\caption{
These figures show how frequently each layer was updated in the \textproc{dsgd} algorithm, for the case of batch training on the MNIST and CIFAR-10 datasets. The graphs show the running count of the number of updates by layer. Evidently, there is a range of behaviors depending on the choice of norm. Under the norm $\|\cdot\|_1$, all updates during optimization were confined to the second layer. For the norm $\|\cdot\|_2$, the rate of updates in each layer is more or less constant. For $\|\cdot\|_{\infty}$, there is more variation depending on the dataset and stage of optimization (beginning or end). 
}\label{fig:counts}
\end{figure}

\subsection{Practical variants of DSGD}
In this section we compared a variant of  \textproc{dsgd} with \textproc{sgd}. The variant of \textproc{dsgd} that we consider is termed \textproc{dsgd\_all}. In this algorithm, the step-sizes are computed as in Line \ref{update-step} of Algorithm \ref{algo:finslernn}, but the update is performed in both layers, instead of only one as is done in \textproc{dsgd}. This is to enable a more accurate comparison with algorithms that update both layers. The variants of \textproc{sgd} we considered were  standard \textproc{sgd} using Euclidean updates (\textproc{sgd\_standard}), and \textproc{sgd} using updates corresponding to the $\|\cdot\|_1, \|\cdot\|_2,$ and $\|\cdot\|_{\infty}$ norms. For all the algorithms, the step-size was determined using performance on a validation set, following the protocol set forth in the Appendix. The trajectories of testing accuracy for  the algorithms is shown in Figure \ref{fig:sgd_test}.
We observe that for all the datasets, the variant of \textproc{dsgd\_all} using the norm $\|\cdot\|_2$ performs the best among the \textproc{dsgd} algorithms. However, we also observed that \textproc{sgd\_standard} outperformed \textproc{dsgd}. Corresponding plots for the training error may be found in Figure \ref{fig:sgd_train} in the appendix. 

In terms of performance, \textproc{dsgd} requires more work at each update due to the requirement of computing the matrix norms. In the minibatch scenario, a higher percentage of time is spent on these calculations compared to the batch scenario, where the time-per-epoch was dominated by forward and backward passes over the entire dataset. Because of this, the \textproc{dsgd} algorithm corresponding to $\|\cdot\|_2$ is the slowest among the algorithms, taking about 4 times longer than standard \textproc{sgd}. For the \textproc{dsgd} variants using $\|\cdot\|_1$ or $\|\cdot\|_{\infty}$, calculations of the relevant matrix norms and duality maps can be done very efficiently, and hence these algorithms operate essentially at the same speed as standard \textproc{sgd}. This motivates future work into efficient variations of  \textproc{dsgd}, perhaps using approximate and/or delayed duality map and norm computations.

\begin{figure}[!t]
\begin{center}
\includegraphics[width=1\linewidth]{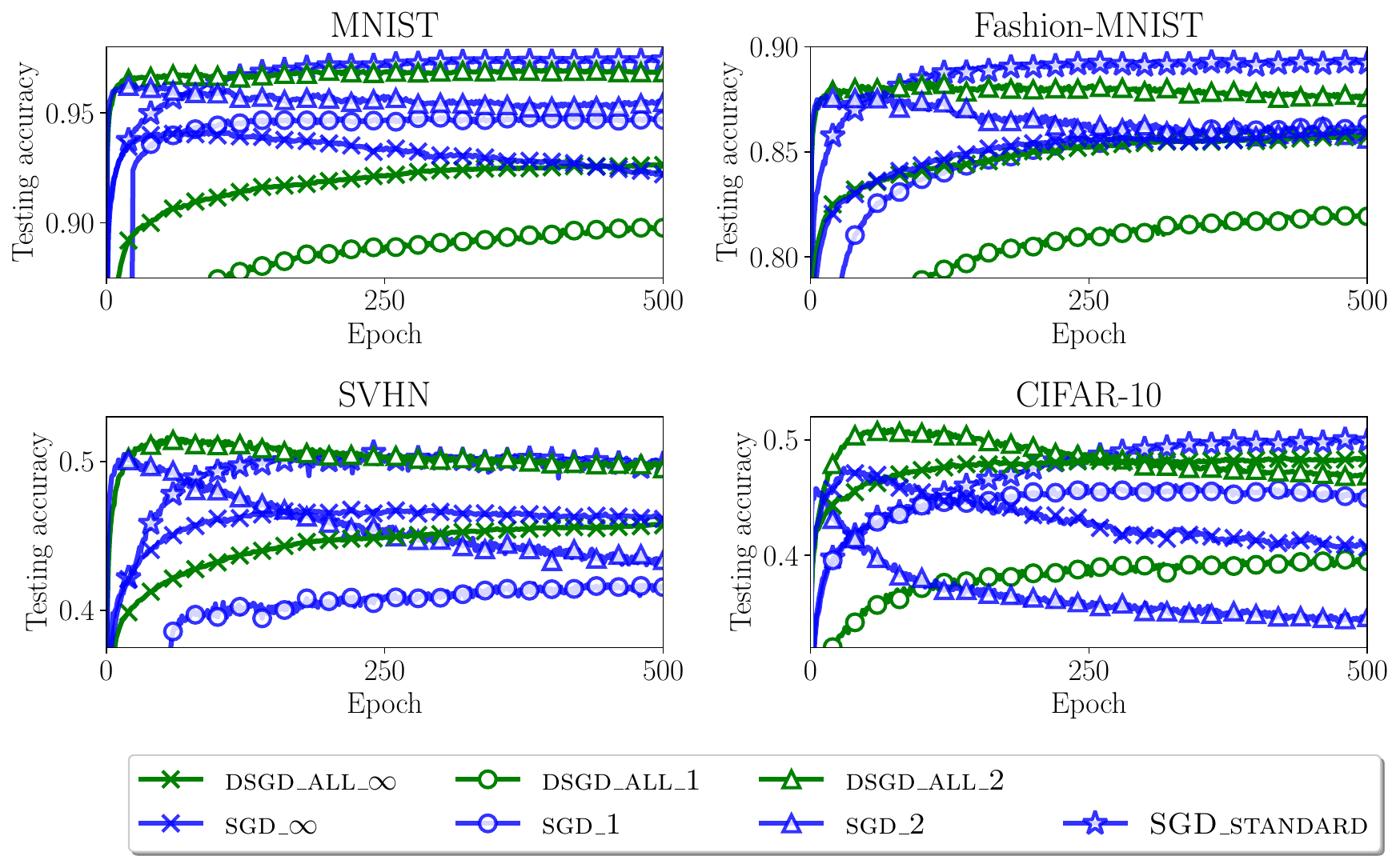}
\end{center}
\caption{
A comparison of \textproc{dsgd} with gradient descent and variants of gradient descent using several different norms.
Each figure plots the value of the testing accuracy  for the dataset. Best viewed in color. 
}\label{fig:sgd_test}
\end{figure}

  \section{Discussion}\label{sect:discuss}
  This work was motivated by the fact that  gradient smoothness assumptions used in certain optimization analyses may be too strict to be applicable in problems involving neural networks. To address this, we sought an algorithm  for training neural networks that is both practical and admits a non-asymptotic convergence analysis. Our starting point was the observation that the empirical error function for a multilayer network has a Layer-wise gradient smoothness property. We showed how a greedy algorithm that updates one layer at a time can be explained with a geometric interpretation involving a family of norms. That is, the steps of the algorithm (choosing one layer at a time in a greedy fashion) flow naturally from the gradient descent procedure, by using a certain family of norms (a geometric construct.) Different variants of the algorithm can be generated by varying the underlying norm on the state-space, and the choice of norm can have a significant impact on the practical efficiency. 

  Our abstract algorithmic framework  can in some cases provide non-asymptotic performance guarantees while making less restrictive assumptions compared to vanilla gradient decent.
In particular, the analysis does not assume that the objective function has a Lipschitz gradient in the usual Euclidean sense. The class of functions that the method applies to includes neural networks with arbitrarily many layers, subject to some mild conditions on the data set (the components of the input and output data should be bounded) and the activation function (boundedness of the derivatives of the activation function.)

Although it was expected that the method would yield step sizes that were too conservative to be competitive with standard gradient descent, this turned out not to be the case. This may be due to that that \textproc{dsgd} integrates more problem structure into the algorithm compared to standard gradient descent. Various problem data was used to construct the family of norms, such as the hierarchical structure of the network, bounds on various derivatives, and bounds on the input.

  \section*{Acknowledgments}
  This material is based upon work supported by the U.S.
Department of Energy, Office of Science, under contract
number DE-0012704.

\bibliography{super}

\begin{thebibliography}{55}
\providecommand{\natexlab}[1]{#1}
\providecommand{\url}[1]{\texttt{#1}}
\expandafter\ifx\csname urlstyle\endcsname\relax
  \providecommand{\doi}[1]{doi: #1}\else
  \providecommand{\doi}{doi: \begingroup \urlstyle{rm}\Url}\fi

\bibitem[Abraham et~al.(2012)Abraham, Marsden, and Ratiu]{abraham2012manifolds}
Ralph Abraham, Jerrold~E Marsden, and Tudor Ratiu.
\newblock \emph{Manifolds, tensor analysis, and applications}, volume~75.
\newblock Springer Science \& Business Media, 2012.

\bibitem[Absil et~al.(2009)Absil, Mahony, and Sepulchre]{absil2009optimization}
P-A Absil, Robert Mahony, and Rodolphe Sepulchre.
\newblock \emph{Optimization algorithms on matrix manifolds}.
\newblock Princeton University Press, 2009.

\bibitem[Allen-Zhu and Hazan(2016)]{zeyuan-svrg}
Zeyuan Allen-Zhu and Elad Hazan.
\newblock Variance reduction for faster non-convex optimization.
\newblock In \emph{Proceedings of the 33rd International Conference on Machine
  Learning}, pages 699--707, 2016.

\bibitem[Amari(1998)]{amari1998natural}
Shun-Ichi Amari.
\newblock Natural gradient works efficiently in learning.
\newblock \emph{Neural computation}, 10\penalty0 (2):\penalty0 251--276, 1998.

\bibitem[Bartlett et~al.(2017)Bartlett, Foster, and
  Telgarsky]{bartlettspectrally}
Peter~L Bartlett, Dylan~J Foster, and Matus~J Telgarsky.
\newblock Spectrally-normalized margin bounds for neural networks.
\newblock In I.~Guyon, U.~V. Luxburg, S.~Bengio, H.~Wallach, R.~Fergus,
  S.~Vishwanathan, and R.~Garnett, editors, \emph{Advances in Neural
  Information Processing Systems 30}, pages 6240--6249. 2017.

\bibitem[Bauschke et~al.(2017)Bauschke, Bolte, and Teboulle]{beyondlip}
Heinz~H. Bauschke, Jérôme Bolte, and Marc Teboulle.
\newblock A descent lemma beyond lipschitz gradient continuity: First-order
  methods revisited and applications.
\newblock \emph{Mathematics of Operations Research}, 42\penalty0 (2):\penalty0
  330--348, 2017.

\bibitem[Bauschke et~al.(2019)Bauschke, Bolte, Chen, Teboulle, and
  Wang]{Bauschke2019}
Heinz~H. Bauschke, J{\'{e}}r{\^{o}}me Bolte, Jiawei Chen, Marc Teboulle, and
  Xianfu Wang.
\newblock On linear convergence of non-euclidean gradient methods without
  strong convexity and lipschitz gradient continuity.
\newblock \emph{Journal of Optimization Theory and Applications}, 182\penalty0
  (3):\penalty0 1068--1087, April 2019.
\newblock \doi{10.1007/s10957-019-01516-9}.
\newblock URL \url{https://doi.org/10.1007/s10957-019-01516-9}.

\bibitem[Birgin et~al.(2017)Birgin, Gardenghi, Mart{\'i}nez, Santos, and
  Toint]{Birgin2017}
E.~G. Birgin, J.~L. Gardenghi, J.~M. Mart{\'i}nez, S.~A. Santos, and Ph.~L.
  Toint.
\newblock Worst-case evaluation complexity for unconstrained nonlinear
  optimization using high-order regularized models.
\newblock \emph{Mathematical Programming}, 163\penalty0 (1):\penalty0 359--368,
  May 2017.
\newblock ISSN 1436-4646.

\bibitem[Bolte et~al.(2018)Bolte, Sabach, Teboulle, and Vaisbourd]{bolte}
J\'{e}r\^{o}me Bolte, Shoham Sabach, Marc Teboulle, and Yakov Vaisbourd.
\newblock First order methods beyond convexity and lipschitz gradient
  continuity with applications to quadratic inverse problems.
\newblock \emph{SIAM Journal on Optimization}, 28\penalty0 (3):\penalty0
  2131--2151, 2018.

\bibitem[Boumal et~al.(2018)Boumal, Absil, and Cartis]{boumal2016globalrates}
N.~Boumal, P.-A. Absil, and C.~Cartis.
\newblock Global rates of convergence for nonconvex optimization on manifolds.
\newblock \emph{IMA Journal of Numerical Analysis}, To appear, 2018.

\bibitem[Cartis et~al.(2011)Cartis, Gould, and Toint]{cartis2011adaptive}
Coralia Cartis, Nicholas~IM Gould, and Philippe~L Toint.
\newblock Adaptive cubic regularisation methods for unconstrained optimization.
  part i: motivation, convergence and numerical results.
\newblock \emph{Mathematical Programming}, 127\penalty0 (2):\penalty0 245--295,
  2011.

\bibitem[Chidume(2009)]{chidume}
Charles Chidume.
\newblock \emph{Geometric properties of Banach spaces and nonlinear
  iterations}, volume 1965.
\newblock Springer, 2009.

\bibitem[Curtis et~al.(2017)Curtis, Robinson, and Samadi]{curtis2017trust}
Frank~E Curtis, Daniel~P Robinson, and Mohammadreza Samadi.
\newblock A trust region algorithm with a worst-case iteration complexity of
  $$o(\epsilon^{-3/2}))$$ for nonconvex optimization.
\newblock \emph{Mathematical Programming}, 162\penalty0 (1-2):\penalty0 1--32,
  2017.

\bibitem[Davidon(1959)]{davidon}
W.~C. Davidon.
\newblock Variable metric method for minimization.
\newblock \emph{AEC Research and Development Report ANL-5990 (Rev. TID-4500,
  14th Ed.)}, 11 1959.
\newblock \doi{10.2172/4222000}.

\bibitem[Davidon(1991)]{davidon1991}
William~C Davidon.
\newblock Variable metric method for minimization.
\newblock \emph{SIAM Journal on Optimization}, 1\penalty0 (1):\penalty0 1--17,
  1991.

\bibitem[Davis et~al.(2018)Davis, Drusvyatskiy, and
  MacPhee]{davis2018stochastic}
Damek Davis, Dmitriy Drusvyatskiy, and Kellie~J MacPhee.
\newblock Stochastic model-based minimization under high-order growth.
\newblock \emph{arXiv preprint arXiv:1807.00255}, 2018.

\bibitem[Deimling(1985)]{deimling1985nonlinear}
K.~Deimling.
\newblock \emph{Nonlinear functional analysis}.
\newblock Springer-Verlag, 1985.

\bibitem[Duchi et~al.(2011)Duchi, Hazan, and Singer]{duchi2011adaptive}
John Duchi, Elad Hazan, and Yoram Singer.
\newblock Adaptive subgradient methods for online learning and stochastic
  optimization.
\newblock \emph{Journal of Machine Learning Research}, 12\penalty0
  (Jul):\penalty0 2121--2159, 2011.

\bibitem[Farabet et~al.(2013)Farabet, Couprie, Najman, and LeCun]{farabet2013}
Clement Farabet, Camille Couprie, Laurent Najman, and Yann LeCun.
\newblock Learning hierarchical features for scene labeling.
\newblock \emph{IEEE transactions on pattern analysis and machine
  intelligence}, 35\penalty0 (8):\penalty0 1915--1929, 2013.

\bibitem[Fehrman et~al.(2020)Fehrman, Gess, and Jentzen]{sgdminimum}
Benjamin Fehrman, Benjamin Gess, and Arnulf Jentzen.
\newblock Convergence rates for the stochastic gradient descent method for
  non-convex objective functions.
\newblock \emph{Journal of Machine Learning Research}, 21\penalty0
  (136):\penalty0 1--48, 2020.
\newblock URL \url{http://jmlr.org/papers/v21/19-636.html}.

\bibitem[Ghadimi and Lan(2013)]{ghadimi-lan}
Saeed Ghadimi and Guanghui Lan.
\newblock Stochastic first- and zeroth-order methods for nonconvex stochastic
  programming.
\newblock \emph{SIAM Journal on Optimization}, 23\penalty0 (4):\penalty0
  2341--2368, 2013.

\bibitem[Hinton and Salakhutdinov(2006)]{hintonscience}
G.~E. Hinton and R.~R. Salakhutdinov.
\newblock Reducing the dimensionality of data with neural networks.
\newblock \emph{Science}, 313\penalty0 (5786):\penalty0 504--507, 2006.
\newblock ISSN 0036-8075.

\bibitem[Horn and Johnson(1986)]{matanal}
Roger~A. Horn and Charles~R. Johnson, editors.
\newblock \emph{Matrix Analysis}.
\newblock Cambridge University Press, New York, NY, USA, 1986.

\bibitem[Jin et~al.(2022)Jin, Xing, and He]{jin2022on}
Ruinan Jin, Yu~Xing, and Xingkang He.
\newblock On the convergence of m{SGD} and adagrad for stochastic optimization.
\newblock In \emph{International Conference on Learning Representations}, 2022.
\newblock URL \url{https://openreview.net/forum?id=g5tANwND04i}.

\bibitem[Johnson and Zhang(2013)]{svrg}
Rie Johnson and Tong Zhang.
\newblock Accelerating stochastic gradient descent using predictive variance
  reduction.
\newblock In \emph{Advances in Neural Information Processing Systems 26}, pages
  315--323. 2013.

\bibitem[Kavis et~al.(2022)Kavis, Levy, and Cevher]{kavis2022high}
Ali Kavis, Kfir~Yehuda Levy, and Volkan Cevher.
\newblock High probability bounds for a class of nonconvex algorithms with
  adagrad stepsize.
\newblock In \emph{International Conference on Learning Representations}, 2022.
\newblock URL \url{https://openreview.net/forum?id=dSw0QtRMJkO}.

\bibitem[Kingma and Ba(2014)]{kingma2014adam}
Diederik~P. Kingma and Jimmy Ba.
\newblock Adam: A method for stochastic optimization.
\newblock In \emph{Proceedings of the 3rd International Conference on Learning
  Representations (ICLR)}, 2014.

\bibitem[Krizhevsky(2009)]{Krizhevsky09}
Alex Krizhevsky.
\newblock Learning multiple layers of features from tiny images.
\newblock Technical report, University of Toronto, 2009.

\bibitem[Krizhevsky et~al.(2012)Krizhevsky, Sutskever, and Hinton]{imagenet}
Alex Krizhevsky, Ilya Sutskever, and Geoff Hinton.
\newblock Imagenet classification with deep convolutional neural networks.
\newblock In \emph{Advances in Neural Information Processing Systems 25}, pages
  1106--1114, 2012.

\bibitem[Kurita(1993)]{kurita1993}
Takio Kurita.
\newblock Iterative weighted least squares algorithms for neural networks
  classifiers.
\newblock In Shuji Doshita, Koichi Furukawa, Klaus~P. Jantke, and Toyaki
  Nishida, editors, \emph{Algorithmic Learning Theory: Third Workshop, ALT '92
  Tokyo, Japan, October 20--22, 1992 Proceedings}, pages 75--86, Berlin,
  Heidelberg, 1993. Springer Berlin Heidelberg.

\bibitem[LeCun(1998)]{lecun1998mnist}
Yann LeCun.
\newblock The mnist database of handwritten digits.
\newblock \emph{http://yann. lecun. com/exdb/mnist/}, 1998.

\bibitem[Lee et~al.(2009)Lee, Grosse, Ranganath, and Ng]{lee2009}
Honglak Lee, Roger Grosse, Rajesh Ranganath, and Andrew~Y Ng.
\newblock Convolutional deep belief networks for scalable unsupervised learning
  of hierarchical representations.
\newblock In \emph{Proceedings of the 26th annual international conference on
  machine learning}, pages 609--616. ACM, 2009.

\bibitem[Li and Orabona(2018)]{li2018convergence}
Xiaoyu Li and Francesco Orabona.
\newblock On the convergence of stochastic gradient descent with adaptive
  stepsizes.
\newblock \emph{arXiv preprint arXiv:1805.08114}, 2018.

\bibitem[Lian et~al.(2017)Lian, Zhang, Zhang, Hsieh, Zhang, and
  Liu]{decentralized}
Xiangru Lian, Ce~Zhang, Huan Zhang, Cho-Jui Hsieh, Wei Zhang, and Ji~Liu.
\newblock Can decentralized algorithms outperform centralized algorithms? a
  case study for decentralized parallel stochastic gradient descent.
\newblock In \emph{Advances in Neural Information Processing Systems 30}, pages
  5330--5340. 2017.

\bibitem[Liang et~al.(2019)Liang, Poggio, Rakhlin, and Stokes]{liang2019fisher}
Tengyuan Liang, Tomaso Poggio, Alexander Rakhlin, and James Stokes.
\newblock Fisher-rao metric, geometry, and complexity of neural networks.
\newblock In \emph{The 22nd International Conference on Artificial Intelligence
  and Statistics}, pages 888--896, 2019.

\bibitem[Lieb et~al.(1994)Lieb, Ball, and Carlen]{Lieb1994}
E.H. Lieb, Keith Ball, and E.A. Carlen.
\newblock Sharp uniform convexity and smoothness inequalities for trace norms.
\newblock \emph{Inventiones mathematicae}, 115\penalty0 (3):\penalty0 463--482,
  1994.

\bibitem[Lu et~al.(2018)Lu, Freund, and Nesterov]{relativelysmooth}
H.~Lu, R.~Freund, and Y.~Nesterov.
\newblock Relatively smooth convex optimization by first-order methods, and
  applications.
\newblock \emph{SIAM Journal on Optimization}, 28\penalty0 (1):\penalty0
  333--354, 2018.

\bibitem[Mai and Johansson(2021)]{mai21a}
Vien~V. Mai and Mikael Johansson.
\newblock Stability and convergence of stochastic gradient clipping: Beyond
  lipschitz continuity and smoothness.
\newblock In Marina Meila and Tong Zhang, editors, \emph{Proceedings of the
  38th International Conference on Machine Learning}, volume 139 of
  \emph{Proceedings of Machine Learning Research}, pages 7325--7335. PMLR,
  18--24 Jul 2021.
\newblock URL \url{https://proceedings.mlr.press/v139/mai21a.html}.

\bibitem[Moulines and Bach(2011)]{bachmoulines11}
Eric Moulines and Francis~R. Bach.
\newblock Non-asymptotic analysis of stochastic approximation algorithms for
  machine learning.
\newblock In J.~Shawe-Taylor, R.~S. Zemel, P.~L. Bartlett, F.~Pereira, and
  K.~Q. Weinberger, editors, \emph{Advances in Neural Information Processing
  Systems 24}, pages 451--459. Curran Associates, Inc., 2011.

\bibitem[Nesterov(2013)]{nesterov2013introductory}
Yurii Nesterov.
\newblock \emph{Introductory lectures on convex optimization: A basic course},
  volume~87.
\newblock Springer Science \& Business Media, 2013.

\bibitem[Netzer et~al.(2011)Netzer, Wang, Coates, Bissacco, Wu, and
  Ng]{netzer2011reading}
Yuval Netzer, Tao Wang, Adam Coates, Alessandro Bissacco, Bo~Wu, and Andrew~Y
  Ng.
\newblock Reading digits in natural images with unsupervised feature learning.
\newblock In \emph{NIPS Workshop on Deep Learning and Unsupervised Feature
  Learning}, 2011.

\bibitem[Neyshabur et~al.(2015)Neyshabur, Tomioka, and Srebro]{Neyshabur15}
Behnam Neyshabur, Ryota Tomioka, and Nathan Srebro.
\newblock Norm-based capacity control in neural networks.
\newblock In Peter Grünwald, Elad Hazan, and Satyen Kale, editors,
  \emph{Proceedings of The 28th Conference on Learning Theory}, volume~40 of
  \emph{Proceedings of Machine Learning Research}, pages 1376--1401, Paris,
  France, 03--06 Jul 2015. PMLR.

\bibitem[Nguyen(2018)]{sgd-and-hogwild}
Lam et.~al Nguyen.
\newblock Sgd and hogwild: Convergence without the bounded gradients
  assumption.
\newblock In \emph{ICML}, 2018.

\bibitem[Ollivier(2015)]{yann}
Yann Ollivier.
\newblock Riemannian metrics for neural networks i: feedforward networks.
\newblock \emph{Information and Inference: A Journal of the IMA}, 4\penalty0
  (2):\penalty0 108, 2015.

\bibitem[Reddi et~al.(2016)Reddi, Hefny, Sra, Poczos, and
  Smola]{reddi2016stochastic}
S.~Reddi, A.~Hefny, S.~Sra, B.~Poczos, and A.~Smola.
\newblock Stochastic variance reduction for nonconvex optimization.
\newblock In \emph{International conference on machine learning}, 2016.

\bibitem[Salakhutdinov and Hinton(2007)]{nlnca}
Ruslan Salakhutdinov and Geoffrey~E. Hinton.
\newblock Learning a nonlinear embedding by preserving class neighbourhood
  structure.
\newblock In \emph{International Conference on Artificial Intelligence and
  Statistics}, pages 412--419, 2007.

\bibitem[Schaul et~al.(2013)Schaul, Zhang, and LeCun]{pesky}
Tom Schaul, Sixin Zhang, and Yann LeCun.
\newblock No more pesky learning rates.
\newblock In \emph{Proceedings of the 30th International Conference on
  International Conference on Machine Learning - Volume 28}, pages
  III--343--III--351, 2013.

\bibitem[Srivastava and Salakhutdinov(2014)]{srivastava2014multimodal}
Nitish Srivastava and Ruslan Salakhutdinov.
\newblock Multimodal learning with deep boltzmann machines.
\newblock \emph{The Journal of Machine Learning Research}, 15\penalty0
  (1):\penalty0 2949--2980, 2014.

\bibitem[Tieleman and Hinton(2012)]{Tieleman2012}
T.~Tieleman and G.~Hinton.
\newblock {Lecture 6.5---RmsProp: Divide the gradient by a running average of
  its recent magnitude}.
\newblock COURSERA: Neural Networks for Machine Learning, 2012.

\bibitem[Ward et~al.(2019)Ward, Wu, and Bottou]{ward19a}
Rachel Ward, Xiaoxia Wu, and Leon Bottou.
\newblock {A}da{G}rad stepsizes: Sharp convergence over nonconvex landscapes.
\newblock In Kamalika Chaudhuri and Ruslan Salakhutdinov, editors,
  \emph{Proceedings of the 36th International Conference on Machine Learning},
  volume~97 of \emph{Proceedings of Machine Learning Research}, pages
  6677--6686, Long Beach, California, USA, 09--15 Jun 2019. PMLR.

\bibitem[Williams(1991)]{williams1991probability}
D.~Williams.
\newblock \emph{Probability with Martingales}.
\newblock Cambridge University Press, 1991.

\bibitem[Xiao et~al.(2017)Xiao, Rasul, and Vollgraf]{fashionmnist}
Han Xiao, Kashif Rasul, and Roland Vollgraf.
\newblock Fashion-mnist: a novel image dataset for benchmarking machine
  learning algorithms, 2017.

\bibitem[Zhang et~al.(2016)Zhang, J.~Reddi, and Sra]{riemannian-svrg}
Hongyi Zhang, Sashank J.~Reddi, and Suvrit Sra.
\newblock Riemannian svrg: Fast stochastic optimization on riemannian
  manifolds.
\newblock In \emph{Advances in Neural Information Processing Systems 29}, pages
  4592--4600. 2016.

\bibitem[Zhang et~al.(2020)Zhang, He, Sra, and Jadbabaie]{Zhang2020Why}
Jingzhao Zhang, Tianxing He, Suvrit Sra, and Ali Jadbabaie.
\newblock Why gradient clipping accelerates training: A theoretical
  justification for adaptivity.
\newblock In \emph{International Conference on Learning Representations}, 2020.

\bibitem[Zhang and He(2018)]{zhang2018convergence}
Siqi Zhang and Niao He.
\newblock On the convergence rate of stochastic mirror descent for nonsmooth
  nonconvex optimization.
\newblock \emph{arXiv preprint arXiv:1806.04781}, 2018.

\end{thebibliography}
\clearpage
\appendix
\section*{Appendix}
\subsection*{Further experimental details}

 For algorithms other than batch  \textproc{dsgd},  we withheld 1/6 of the training data as  validation data for tuning step-sizes.  We ran gradient descent on the remaining 5/6 of the dataset, for each choice of
$\epsilon \in \{ 0.001, 0.01, 0.1, 1, 10 \}$, and evaluated the validation error after optimization. The step-size that gave the smallest validation error was used for the full experiments.  In all cases, network weights were initially  uniformly distributed in the interval $[-1,1]$.

\begin{figure}[h]
\begin{center}
  \includegraphics[width=0.8\linewidth]{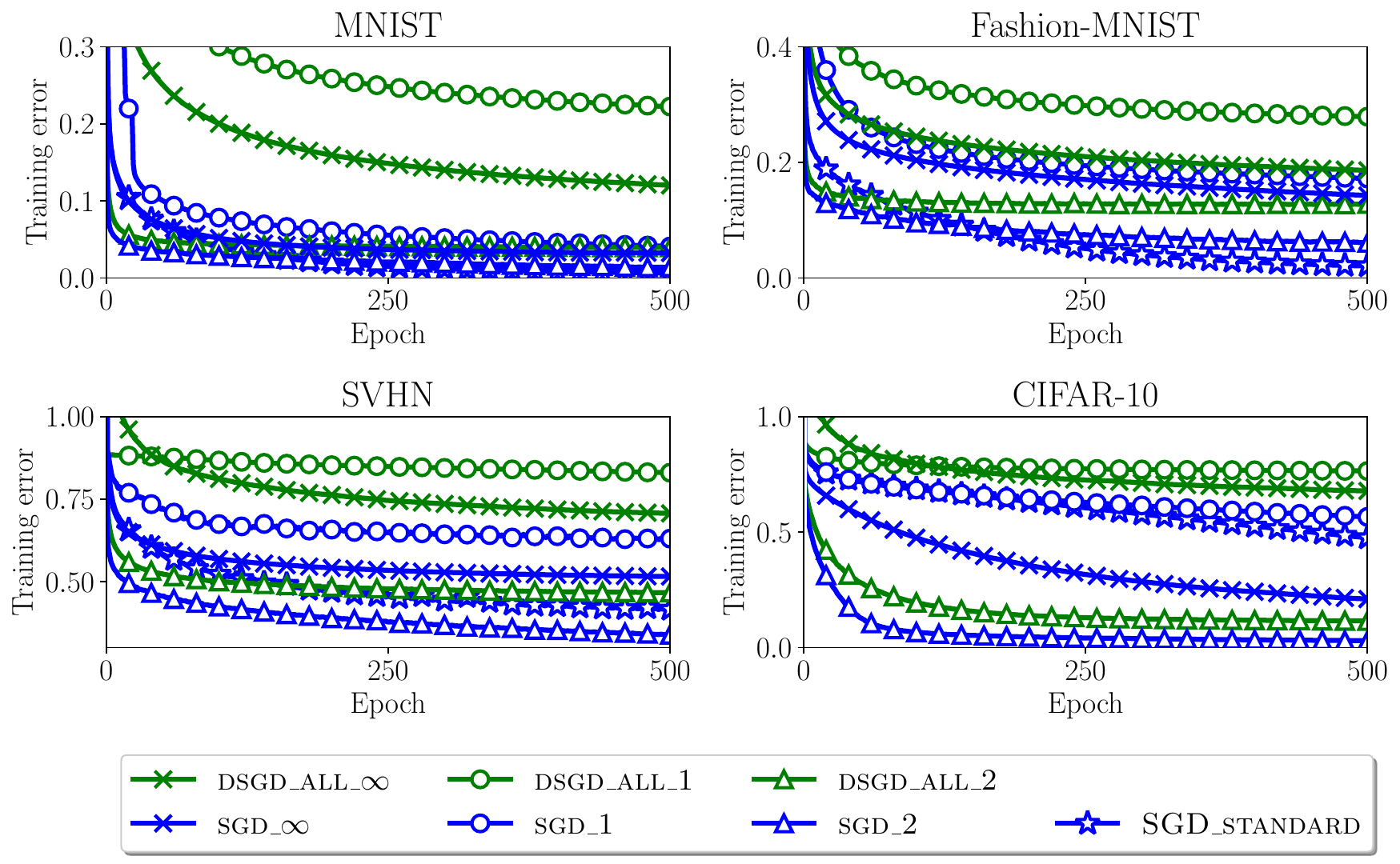}
  \vspace{-1em}
\end{center}
\caption{
A comparison of \textproc{dsgd} with gradient descent (\textproc{gd}) and variants of \textproc{gd} using several different norms.
The plots show the training error  for the dataset. Best viewed in color. 
}\label{fig:sgd_train}
\end{figure}
\vspace{-1em}
\begin{figure}[h]
\begin{center}
\includegraphics[width=0.8\linewidth]{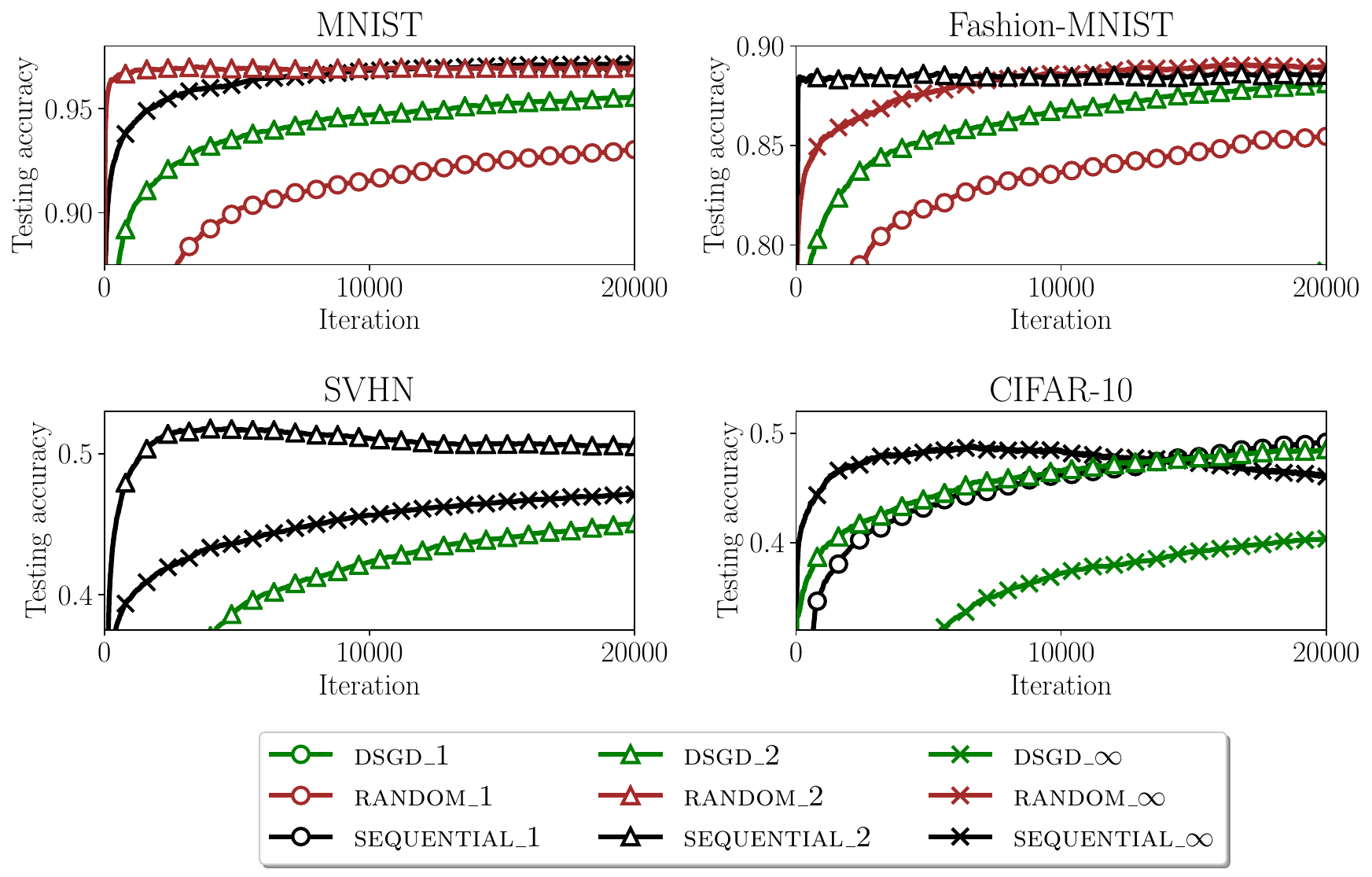}
\vspace{-1em}
\end{center}
\caption{
  A  comparison of batch \textproc{dsgd} with layer-wise algorithms. For each dataset and choice of $q\in \{1,2,\infty\}$ we plot the testing accuracy of \textproc{dsgd} as well as the best layer-wise algorithm among \textproc{random} and \textproc{sequential}.
 Best viewed in color. 
}\label{fig:batch_test}
\end{figure} 
\clearpage

\section*{Proofs of main results}
\renewcommand\thesection{A}

\subsection{Proof of Proposition \ref{prop:nn-hard}}
Let 
$w = (w_1,w_2,w_3,w_4)$  denote a particular choice of parameters.
The chain-rule gives
$$
\frac{\partial^{2} f}{\partial w_1\partial w_3}(w)
=
2y(w;1)\frac{\partial^{2} y}{\partial w_1\partial w_{3}}(w;1)
+2
\frac{\partial y}{\partial w_3}(w;1)
\frac{\partial y}{\partial w_1}(w;1).
$$
The derivatives of $y$ appearing in this equation are as follows:
\begin{subequations}
  \begin{align}
\frac{\partial y}{\partial w_3}(w;1) &=
  \sigma'(w_3 \sigma(w_1) + w_4\sigma(w_2))
  \sigma(w_1),\label{eq:df1} \\
\frac{\partial y}{\partial w_1}(w;1) &=
  \sigma'(w_3 \sigma(w_1) + w_4\sigma(w_2))
  w_3 \sigma'(w_1),
 \\
\begin{split}
  \frac{\partial^{2} y}{\partial w_1\partial w_{3}}(w;1) &= 
     \sigma''(w_3 \sigma(w_1) + w_4\sigma(w_2))
     \sigma(w_1)w_3\sigma'(w_1) \\
     &\quad+
     \sigma'(w_3 \sigma(w_1) + w_4\sigma(w_2))\sigma'(w_1).
\end{split}
  \end{align}
\end{subequations}
Let $z$ be any non-positive number, and define the curve 
$w: [0,\infty) \to \reals$ as 
$$w(\epsilon)= 
\left(1,1,\epsilon,\frac{1}{\sigma(1)}(z-\epsilon\sigma(1))\right).$$ Then
$$
\frac{\partial^{2} f}{\partial w_1\partial w_3}(w(\epsilon))
=
\bigg(2\sigma(z)\sigma''(z)\sigma(1)\sigma'(1) + 2\sigma'(z)\sigma(1)\sigma'(z)\sigma'(1) \bigg)\epsilon + \sigma'(z)\sigma'(1).
$$
Note that since $z$ is non-positive, we guarantee $\sigma''(z) \geq 0$, and therefore the coefficient of $\epsilon$ in this equation is positive. We conclude that $\lim_{\epsilon\to\infty}\frac{\partial^{2} f}{\partial w_1\partial w_3}(w(\epsilon)) = +\infty$.
\qed
\subsection{Proof of Lemma \ref{the-k-lemma}}
        Set
        $\ell_1 = \ell + \delta$ and
        $\ell_2 = -\ell$
        .
        Plugging these values into \eqref{lemma-6-eqn} of Lemma \ref{lemm-dual-ineq} yields
        \begin{equation}\label{uni-1}
          \|\delta \|^2 \geq
          \|\ell + \delta\|^2  -2\ell\cdot\rho(\ell+\delta) + c\|\ell\|^2
        \end{equation}
        Apply \eqref{lemma-6-eqn} again, this time with
        $\ell_1 = \ell$ and
        $\ell_2=\delta$,
        obtaining
        \begin{equation}\label{uni-2}
          \|\ell + \delta\|^2 \geq
          \|\ell\|^2 + 2\delta\cdot \rho(\ell) + c\|\delta\|^2
        \end{equation}
        Combining \eqref{uni-1} and \eqref{uni-2}, then,
        \begin{align*}
          \|\delta\|^2
          &\geq
            \left( \|\ell\|^2 + 2\delta\cdot\rho(\ell) + c\|\delta\|^2 \right)
            -2\ell\cdot\rho(\ell+\delta) + c\|\ell\|^2 \\
          &= (1+c)\|\ell\|^2 + 2\delta\cdot\rho(\ell) + c\|\delta\|^2  -2\ell\cdot\rho(\ell+\delta)  
        \end{align*}
        Rearranging terms and dividing both sides of the equation by two,
        \begin{align*}
          \ell\rho(\ell+\delta)
          &\geq
            \left(\frac{1+c}{2}\right)\|\ell\|^2 + \delta\cdot\rho(\ell) -
            \left(\frac{1-c}{2}\right)\|\delta\|^2
        \end{align*}
        Taking expectations, we obtain \eqref{bias-expect}.

        Next, applying \eqref{uni-cvx-eqn} with $x=2\ell$ and $y=2\delta$ yields
        \begin{equation}\label{beginner}
          \|\ell+\delta\|^2 \leq 2\left(\|\ell\|^2 + \|\delta\|^2\right) - c\|\ell-\delta\|^2
        \end{equation}
        Setting
        $\ell_1 = \ell$ and
        $\ell_2 = -\delta$ in \eqref{lemma-6-eqn} , we get
        \begin{equation}\label{advancer}
          \|\ell-\delta\|^2
          \geq \|\ell\|^2  - \delta\cdot\rho(\ell) + c\|\delta\|^2
        \end{equation}
        Combining \eqref{beginner} and \eqref{advancer},
        \begin{equation}
          \begin{split}
        \|\ell+\delta\|^2
        &\leq
        2\left(\|\ell\|^2  + \|\delta\|^2\right) -
        c\left(\|\ell\|^2  + \delta\cdot\rho(\ell) -c\|\delta\|^2\right) \\
        &=
        (2-c)\|\ell\|^2  + (2-c^2)\|\delta\|^2  + c\delta\cdot\rho(\ell)
        \end{split}
      \end{equation}
      Taking expectations gives \eqref{bias-norm}.
      \qed
\subsection{Proof of Theorem \ref{prop:sgd-prop}}
  By
  Assumption \ref{f-asu}, we know that
  \[
  f(w(t+1))
  \leq
  f(w(t))
  -
  \epsilon \frac{\partial f}{\partial w}(w(t))\cdot\rho_{w(t)}(g(t))
  +
  \epsilon^2\frac{L}{2}\|g(t)\|_{w(t)}^{2}.
  \]
  Using the definition of $\delta(t)$ given in Assumption \ref{bias-asu}, this is equivalent to
  \begin{equation}\label{pt1}
  f(w(t+1))
  \leq
  f(w(t))
  -
  \epsilon
  \frac{\partial f}{\partial w}(w(t))\cdot\rho_{w(t)}\left(\frac{\partial f}{\partial w}(w(t)) + \delta(t)\right)
  +
  \epsilon^2\frac{L}{2}\left\|\frac{\partial f}{\partial w}(w(t)) + \delta(t)\right\|_{w(t)}^{2}.
  \end{equation}                                
  Summing \eqref{pt1} over $t=1,2,\hdots,N$ yields
  \begin{equation}\label{pt1sum}
  \begin{split}
  f(w(N+1))
  \leq
  f(w(1))
  &-
   \epsilon  \sum\limits_{t=1}^{N}
  \frac{\partial f}{\partial w}(w(t))
  \cdot\rho_{w(t)}\left(\frac{\partial f}{\partial w}(w(t)) + \delta(t)\right) \\
  &\quad+
  \epsilon^2\frac{L}{2}\sum\limits_{t=1}^{N}\left\|\frac{\partial f}{\partial w}(w(t)) + \delta(t)\right\|_{w(t)}^{2}.
  \end{split}
  \end{equation}
  Rearranging terms, and noting that $f(w(N+1)) \geq f^{*}$,
  \begin{equation}\label{pt1re}
    \epsilon \sum\limits_{t=1}^{N}
  \frac{\partial f}{\partial w}(w(t))
  \cdot
  \rho_{w(t)}\left(\frac{\partial f}{\partial w}(w(t)) + \delta(t)\right)
  \leq
  G  +
  \epsilon^2\frac{L}{2}
  \sum\limits_{t=1}^{N}
  \left\|\frac{\partial f}{\partial w}(w(t)) + \delta(t)\right\|_{w(t)}^{2}.
  \end{equation}
  According to
  Equation \eqref{bias-expect}, for all $t$ it holds that 
  \begin{equation}\label{pt2}
  \mathbb{E}\left[
  \frac{\partial f}{\partial w}(w(t))
  \cdot
  \rho_{w(t)}\left(\frac{\partial f}{\partial w}(w(t)) + \delta(t)\right) \,\middle|\, \mc{F}(t-1)
  \right]                                      
  \geq
  \left(\frac{1+c}{2}\right)
  \left\|\frac{\partial f}{\partial w}(w(t))\right\|_{w(t)}^{2}
  -
  \left(\frac{1-c}{2}\right)\sigma^{2},
  \end{equation}                         
  while  Equation \eqref{bias-expect} implies
  \begin{equation}\label{pt3}
  \mathbb{E}\left[
  \left\|\frac{\partial f}{\partial w}(w(t)) + \delta(t)\right\|^{2}_{w(t)} \,\middle|\, \mc{F}(t-1)
  \right]
 \leq
  \left(2-c\right)\left\|\frac{\partial f}{\partial w}(w(t))\right\|^{2}_{w(t)}
  +
  \left(2-c^2\right)\sigma^{2}.
  \end{equation}
  For $n\geq 1$ we define the stopping time $ \tau \wedge n$ to be the minimum of $\tau$ and the constant value $n$.
  Applying
  Proposition \ref{mart-like-thm}, inequality \eqref{pt2}, and using the law of total expectation, it holds that for any $n$,
  \begin{equation}\label{mart-cons-1}
    \begin{split}
  \mathbb{E}\left[
  \sum\limits_{t=1}^{\tau \wedge n}
  \frac{\partial f}{\partial w}(w(t))
  \cdot
  \rho_{w(t)}\left(\frac{\partial f}{\partial w}(w(t)) + \delta(t)\right)
  \right]& \\
      &\hspace{-8em}\geq
  \mathbb{E}\left[
  \sum\limits_{t=1}^{\tau \wedge n}
  \left(
  \left( \frac{1+c}{2}\right)
  \left\|\frac{\partial f}{\partial w}(w(t))\right\|_{w(t)}^{2}
  -
  \left(\frac{1-c}{2}\right)\sigma^{2}
  \right)
      \right].
      \end{split}
  \end{equation}
  Applying
  Proposition \ref{mart-like-thm} a second time, in this case to inequality \eqref{pt3}, we see that  
  \begin{equation}\label{mart-cons-2}
  \mathbb{E}\left[
  \sum\limits_{t=1}^{\tau\wedge n }
  \left\|\frac{\partial f}{\partial w}(w(t)) + \delta(t)\right\|_{w(t)}^{2}
  \right]
  \leq
  \mathbb{E}\left[
  \sum\limits_{t=1}^{\tau\wedge n }\left(
  \left(2-c\right)\left\|\frac{\partial f}{\partial w}(w(t))\right\|^{2}_{w(t)}
  +
  \left(2-c^2\right)\sigma^{2}
  \right)\right].
  \end{equation}
Combining \eqref{pt1re} with \eqref{mart-cons-1} and \eqref{mart-cons-2} and rearranging terms,
\begin{equation}\label{rerr}
\begin{split}
\epsilon
  \left(\frac{1+c}{2} - \frac{L}{2}\epsilon(2-c)\right)
  \mathbb{E}\left[
  \sum\limits_{t=1}^{\tau \wedge n}
  \left\|\frac{\partial f}{\partial w}(w(t))\right\|_{w(t)}^{2}
  \right]
  \leq
G + \epsilon
\left(\frac{L}{2}\epsilon\left(2-c^2\right) + \frac{1-c}{2}\right)
\sigma^2\mathbb{E}[\tau \wedge n]
\end{split}
\end{equation}
Next, note that
\begin{equation}\label{note-oner}
\begin{split}
  \sum\limits_{t=1}^{\tau \wedge n}
  \left\|\frac{\partial f}{\partial w}(w(t))\right\|_{w(t)}^{2}
&\geq
  \sum\limits_{t=1}^{\tau \wedge n- 1}
  \left\|\frac{\partial f}{\partial w}(w(t))\right\|_{w(t)}^{2}
\\
&\geq
  \gamma
(\tau\wedge n - 1).
  \end{split}
\end{equation}
Combining \eqref{rerr} with \eqref{note-oner} yields
$$
\epsilon
\left(\frac{1+c}{2} - \frac{L}{2}\epsilon(2-c)\right)
\gamma
\mathbb{E}[ (\tau \wedge n) - 1]
\leq
G
+
\epsilon
\left(\frac{L}{2}\epsilon(2-c^2) + \frac{1-c}{2}\right)
\sigma^2\mathbb{E}[\tau \wedge n].
$$
By \eqref{step-k-cond},  this can be rearranged into
$$
\mathbb{E}[\tau \wedge n]
\leq
\frac{
2G + \epsilon\left(1+c - L\epsilon(2-c)\right)
\gamma
}{
 \epsilon
\left(1+c - L\epsilon(2-c)\right)
\gamma - \epsilon
\left(L\epsilon(2-c^2) + 1-c\right)\sigma^2
}.
$$
Since the right-hand side of this equation is independent of $n$, the claimed inequality \eqref{stoch-lemma} follows by the monotone convergence theorem. 
\qed
\subsection{Proof of Corollary \ref{non-asympt-sgd}}
We first consider case 2 of the corollary. Using the given value of $\epsilon$ in conjunction with \eqref{stoch-lemma}, 
\begin{equation}\label{tau-ineq-general}\begin{split}
  \mathbb{E}[\tau]
&\leq
\frac{
2G + \left(1+c - L\epsilon(2-c)\right)
\gamma
}{
 \epsilon
\left(1+c - L\epsilon(2-c)\right)
\gamma - \epsilon
\left(L\epsilon(2-c^2) + 1-c\right)\sigma^2
  } \\
&=
  \frac{
  4G + 2\left(1+c - L\epsilon(2-c)\right)
  \gamma
  }{
  \epsilon( (1+c)\gamma + (1-c)\sigma^2)
  }  \\
  &=
  \frac{
    \left(8LG + 4L\gamma\left(1+c - L\epsilon(2-c)\right)
  \right)((2-c)\gamma + (2-c^2)\sigma^2)
  }{
    ( (1+c)\gamma + (1-c)\sigma^2)^2
    } .
\end{split}\end{equation}
Note that $1+c-L\epsilon(2-c)  \leq 1+c  \leq 2$.
Therefore
\begin{equation}\label{num-ineq}\begin{split}
\left(8LG + 4L\gamma\left(1+c - L\epsilon(2-c)\right)
\right)((2-c)\gamma + (2-c^2)\sigma^2)
\leq \\
  8LG(2-c^2)\sigma^2 + 8LG(2-c)\gamma +  8L\gamma(2-c)\gamma + 8L\gamma(2-c^2)\sigma^2 
= \\
  8LG(2-c^2)\sigma^2 + 8L(G(2-c) + (2-c^2)\sigma^2)\gamma +  8L(2-c)\gamma^2 
\leq \\
    8LG(2-c^2)\sigma^2 + 8L(2-c^2)(G + \sigma^2)\gamma +  8L(2-c)\gamma^2. 
\end{split}\end{equation}
Combining \eqref{tau-ineq-general} with \eqref{num-ineq},
\begin{align*}
  \mathbb{E}[\tau]
  &\leq
    \frac{8LG(2-c^2)\sigma^2}
    {( (1+c)\gamma + (1-c)\sigma^2)^2}
    +
    \frac{8L(2-c^2)(G+\sigma^2)\gamma}{
    ( (1+c)\gamma + (1-c)\sigma^2)^2
    }
    +
    \frac{8L(2-c)\gamma^2}
    {( (1+c)\gamma + (1-c)\sigma^2)^2}.
\end{align*}
     Note that
    $\gamma \leq (1+c)\gamma + (1-c)\sigma^2$ implies

$$    \mathbb{E}[\tau]
  \leq
    \frac{8LG(2-c^2)\sigma^2}
    {( (1+c)\gamma + (1-c)\sigma^2)^2}
    +
    \frac{8L(2-c^2)(G+\sigma^2)}{
       (1+c)\gamma + (1-c)\sigma^2
    }
    +
    8L(2-c).
    $$

    The result for case 1 (standard \textproc{sgd}) follows by setting $c=1$ in the above equation.
    \qed
    \subsection{Proof of Corollary \ref{non-asympt-gd}}
For $t\geq 0$, set
$\eta(t) = \tfrac{\partial f}{\partial w}(w(t))$. Then
Assumption \ref{f-asu} implies
\begin{equation*}
f(w(t+1))
\leq
f(w(t))
-
\epsilon  \frac{\partial f}{\partial w}(w(t))
\cdot
\rho_{w}\left(
\frac{\partial f}{\partial w}(w(t))
\right)
+
\epsilon^{2}\frac{L}{2}\left\|\frac{\partial f}{\partial w}(w(t))\right\|^2_{w(t)}.
\end{equation*}
Invoking the duality map properties \eqref{dual-map-prop-a} and \eqref{dual-map-prop-b},
\begin{align*}
&\leq f(w(t))
-
\epsilon \left\|\frac{\partial f}{\partial w}(w(t))\right\|^{2}_{w(t)}
+
\frac{L}{2}\epsilon^{2}
\left\|\frac{\partial f}{\partial w}(w(t))\right\|^{2}_{w(t)} \\
&=
f(w(t)) + \epsilon\left(\frac{L}{2}\epsilon - 1\right)
 \left\|\frac{\partial f}{\partial w}(w(t))\right\|^{2}_{w(t)}.
\end{align*}
From the last inequality it is clear that the function decreases at  iteration $t$ unless 
$\frac{\partial f}{\partial w}(w(t))= 0$.
Summing our inequality over $t=1,2,\hdots, T$ yields
\begin{equation}\label{multi-iter-dec-pre}
 f(w(T))
 \leq
 f(w(1)) + 
 \epsilon
 \left(\frac{L}{2}\epsilon - 1\right)
 \sum\limits_{t=1}^{T}\left\|\frac{\partial f}{\partial w}(w(t))\right\|_{w(t)}^{2}.
\end{equation}
Upon rearranging terms and using that $f(w(T)) > f^{*}$, we find that
\begin{equation}\label{multi-iter-dec}
\sum\limits_{t=1}^{T}\left\|\frac{\partial f}{\partial w}(w(t))\right\|_{w(t)}^{2}
\leq \frac{2(f(w(1)) - f^{*})}{\epsilon(2 - L\epsilon ) }.
\end{equation}

Let $w^{*}$ be an accumulation point of the algorithm;  this is defined as a point such that for any
$\gamma>0$ the ball
$\{ w \in \reals^n \mid \|w - w^{*}\| < \gamma\}$ 
is entered infinitely often by the sequence $w(t)$ (any norm $\|\cdot\|$ can be used to define the ball.)
Then there is a subsequence of iterates
$w(m(1)), w(m(2)),\hdots$ with
$m(k) < m(k+1)$ such that
$w(m(k)) \rightarrow w^{*}$.
We know from \eqref{multi-iter-dec} that
$\|\frac{\partial f}{\partial w}(w(t))\|_{w(t)} \rightarrow 0$,
and the same must hold for any subsequence. Hence  
$\|\frac{\partial f}{\partial w}(w(m(k)))\|_{w(m(k))}\rightarrow 0$. 
As  the map $ (w,\ell) \mapsto \|\ell\|_{w}$ is continuous on
$\rn \times \mathcal{L}(\mathbb{R}^{n},\mathbb{R})$,(see for instance \citep[Proposition~27.7]{deimling1985nonlinear}) it must be that
$\|\frac{\partial f}{\partial w}(w^{*})\|_{w^{*}} = 0$  
Since $\|\cdot\|_{w^{*}}$ is a norm, then $\frac{\partial f}{\partial w}(w^{*}) = 0$.
\qed
\subsection{Proof of Proposition \ref{hess-bd-prop}}
Let us first recall  some notation for the composition of a bilinear map with a pair of linear maps:
  if
  $B : U  \times U \to V$ is a bilinear map then 
  $
  B
  (A_1 \oplus A_2)$ is the bilinear map which sends 
  $(z_1,z_2)$ to 
  $B( A_1 z_1, A_2 z_2)$.
  In addition,  if $B : U \times U \to V$  is a bilinear map,
then for any 
$(u_1,u_2) \in U\times U$ the inequality 
$\|B(u_1,u_2)\|_{V} \leq \|B\|\|u_1\|\|u_2\|$ holds. 
It follows that if $A_1 : Z\to U$ and $A_2 : Z\to U$ are any linear maps, then
\begin{equation}\label{bilin-ineq}
  \|B(A_1\oplus A_2)\| \leq \|B\|\|A_1\|\|A_2\|.
\end{equation}

To prove the proposition, it suffices to consider the case of a single input/output pair 
$(x,z) \in \mathbb{R}^{n_0}\times\mathbb{R}^{n_{K}}$.
In this case, we can express the function  $f$ as
\begin{equation}\label{f-def}
f(w) = J(y^{K}(x,w))
\end{equation}
where
$J(y) = \|y-z\|_{2}^{2}$
is the squared distance of a state $y$ to the target $z$ and
$y^{K}(x,w)$
is the output of a $K$-layer neural network with
input $x$. The
output $y^K$ is defined recursively as
\begin{align}\label{net-def}
\begin{split}
y^{i}(x,w) &= \begin{cases} h(y^{i-1}(x,w_{1:i-1}),w_{i})  &\text{ if } 2 \leq i \leq K, \\
  h(x,w_1) &\text{ if } i = 1,
  \end{cases}
\end{split}
\end{align}
where the function
$h(y,w)$ represents the computation performed by a single layer in the network:
\begin{equation}\label{hdef}
h_{k}(y,w) 
= 
\sigma
\bigg(
  \textstyle\sum\limits_{j=1}^{n}
    w_{k,j}y_j
\bigg), \quad  k= 1,2,\hdots, n.
\end{equation}
Taking the 
second derivative of \eqref{f-def} with respect to the weights
$w_i$ for $1 \leq i \leq K$, we find that
\begin{align}\label{df2-dw2} 
  \frac{\partial^{2} f}{\partial w_{i}^{2}}(w)
  &=
  \frac{\partial^{2} J}{\partial x^{2}}(y^{K}(x,w))
  \left(
  \frac{\partial y^{K}}{\partial w_{i}}(x,w)
  \oplus
    \frac{\partial y^{K}}{\partial w_{i}}(x,w)
  \right)
+
\frac{\partial J}{\partial y}(y^{K}(x,w))
  \frac{\partial^{2}y^{K}}{\partial w_{i}^{2}}(x,w).
\end{align}
To find bounds on these terms we will use the following identity:  for $0\leq k \leq i$,
\begin{equation}\label{x-id}
y^{i}( x, w_{1:i}) = y^{i-k}( y^{k}(x,w_{1:k}), w_{k+1:i})
\end{equation}
with the convention that $y^{0}(x) = x$.
Differentiating Equation \eqref{x-id}, with respect to $w_i$ for
$1 \leq i\leq K$ gives
\begin{equation}\label{dx-dw}
\frac{\partial y^{K}}{\partial w_{i}}(x,w_{1:K}) 
= 
\frac{\partial y^{K-i}}{\partial x}(y^{i}(x,w_{1:i}),w_{i+1:K})
\frac{\partial h}{\partial w}(y^{i-1}(x,w_{1:i-1}),w_{i})
\end{equation}
and  differentiating a second time yields
\begin{align}\label{dx2-dw2}
\begin{split}
&\frac{\partial^{2} y^{K}}{\partial w_{i}^{2}}(x,w_{1:K}) =\\&\quad\quad
\frac{\partial^{2} y^{K-i}}{\partial x^{2}}(y^{i}(x,w_{1:i}),w_{i+1:K})
\left( \frac{\partial h}{\partial w}(y^{i-1}(x,w_{1:i-1}),w_{i}) 
      \oplus \frac{\partial h}{\partial w}(y^{i-1}(x,w_{1:i-1}),w_{i})
\right) \\
&\quad+ 
\frac{\partial y^{K-i}}{\partial y}(y^{i}(x,w_{1:i}),w_{i+1:K})
\frac{\partial^{2} h}{\partial w^{2}}(y^{i-1}(x,w_{1:i-1}),w_{i}).
\end{split}
\end{align}
Next, we consider the terms 
$\frac{\partial y^n}{\partial x}$ and 
$\frac{\partial^{2}y^{n}}{\partial x^{2}}$
appearing in the two preceding equations \eqref{dx-dw}, \eqref{dx2-dw2}.
By differentiating equation \eqref{net-def} with respect to the input parameter, we have, for any input $u$ and parameters $a_1,a_2,\hdots,a_{n}$,
\begin{equation}\label{der-x-der-u}
\frac{\partial y^{n}}{\partial x}(u,a_{1:n}) 
= 
\frac{\partial h}{\partial y}\left(x^{n-1}\left(u,a_{1:n-1}\right),a_n\right)
\frac{\partial y^{n-1}}{\partial x}(u,a_{1:n-1}),
\end{equation}
and upon differentiating a second time,
\begin{align}\label{der-x2-der-u2}
\begin{split}
\frac{\partial^{2} y^{n}}{\partial x^{2}}(u,a_{1:n}) &=
\frac{\partial^{2}h}{\partial y^{2}}(y^{n-1}(u,a_{1:n}),a_n)\left(
\frac{\partial y^{n-1}}{\partial x}(u,a_{1:n-1}) \oplus
\frac{\partial y^{n-1}}{\partial x}(u,a_{1:n-1})\right)
\\&\quad+ 
\frac{\partial h}{\partial y}(y^{n-1}(u,a_{1:n-1}),a_n)
\frac{\partial^{2} y^{n-1}}{\partial x}(u,a_{1:n-1}).
\end{split}
\end{align}
We will use some bounds on
$h$ in terms of the norm
$\|\cdot\|_{q}$. 
It follows from
Lemma \ref{nn-bounds}  that the following bounds hold for any
$1 \leq q \leq \infty$:
\begin{align}\label{one-layer-der}
\begin{split}
\left\|\frac{\partial h}{\partial y}(y,w)\right\|_{q} 
&\leq 
\|\sigma'\|_{\infty}\|w\|_{q}, \quad
\left\|\frac{\partial h}{\partial w}(y,w)\right\|_{q} 
\leq 
\|\sigma'\|_{\infty}\|y\|_{q}, \\\\
\left\|\frac{\partial^{2}h}{\partial y^{2}}(y,w)\right\|_{q} 
&\leq 
\|\sigma''\|_{\infty}\|w\|_{q}^{2}, \quad
\left\|\frac{\partial^{2}h}{\partial w^{2}}(x,w)\right\|_{q} 
\leq 
\|\sigma''\|_{\infty}\|y\|_{q}^{2}.
\end{split}
\end{align}
Combining \eqref{der-x-der-u} with \eqref{one-layer-der} we obtain the following inequalities: For 
$n > 1$,
\begin{equation}\label{rec1}
\left\|\frac{\partial y^{n}}{\partial x}(u,a_{1:n})\right\|_{q} 
\leq 
  \begin{cases}
\|\sigma'\|_{\infty}
\|a_{n}\|_{q}
\left\|\frac{\partial y^{n-1}}{\partial x}(u,a_{1:n-1})\right\|_{q}
&\text{ if } n > 1, \\
\|\sigma'\|_{\infty}\|a_{1}\|_{q} &\text{ if } n = 1.
\end{cases}
\end{equation}
Combining the two cases in inequality \eqref{rec1}, and using the definition of $r_n$ we find that, for $n\geq 1$,
\begin{align}\label{r-def}
\begin{split}
\left\|\frac{\partial y^{n}}{\partial x}(u,a_{1:n})\right\|_{q}
&\leq 
r_{n}(\|a_1\|_{q},\hdots,\|a_n\|_{q}).
\end{split}
\end{align}

Now we turn to the second derivative $\frac{\partial^{2} u^{n}}{\partial x^{2}}$. 
Taking norms in Equation
\eqref{der-x2-der-u2}, and applying \eqref{one-layer-der}, \eqref{r-def}, and \eqref{bilin-ineq},
we obtain the following inequalities. If $n>1$,
\begin{equation*}
  \begin{split}
\left\|\frac{\partial^{2} y^{n}}{\partial x^{2}}(u,a_{1:n})\right\|_{q} &\\
&\hspace{-5em}\leq
\|\sigma''\|_{\infty}
\|a_{n}\|_{q}^{2}
\|\sigma'\|_{\infty}^{2(n-1)}\prod\limits_{i=1}^{n-1}\|a_i\|_{q}^{2} 
+ 
\|\sigma'\|_{\infty}\|a_n\|_{q}
    \left\|\frac{\partial^{2} x^{n-1}}{\partial y^{2}}(u,a_{1:n-1})\right\|_{q}.
    \end{split}
\end{equation*}
While for  $n=1$,
$\left\|\frac{\partial^{2} y^{n}}{\partial x^{2}}(u,a_{1:n})\right\|_{q}
\leq  \|\sigma''\|_{\infty}\|a_{1}\|_{q}^{2}$.
By definition of $v_n$, then, for all $n>0$,
\begin{equation}\label{q-def}
\left\|\frac{\partial^{2} y^{n}}{\partial x^{2}}(u,a_{1:n})\right\|_{q}
\leq 
v_{n}(\|a_{1}\|_{q},\hdots,\|a_{n}\|_{q}).
\end{equation}
Combining \eqref{dx-dw}, \eqref{one-layer-der}, and \eqref{r-def},
\begin{align}\label{dx-dw-sigma}
\begin{split}
\left\|\frac{\partial y^{K}}{\partial w_{i}}(x,w_{1:K})\right\|_{q}
&\leq 
\left\|
  \frac{\partial y^{K-i}}{\partial y}(y^{i}(x,w_{1:i}),w_{i+1:K})
\right\|_{q}
  \|\sigma'\|_{\infty}
  \|y^{i-1}(x,w_{1:i-1})\|_{q} \\
&\leq r_{K-i}(\|w_{i+1}\|_{q},\hdots,\|w_{K}\|_{q})\|\sigma'\|_{\infty}c_{q}
\end{split}
\end{align}
where the number $c_{q}$, defined in
Table \ref{table:defs} is the $q$-norm of the vector $(1,1,\hdots,1) \in  \reals^{n_{K}}$.

Combining \eqref{dx2-dw2},  
\eqref{one-layer-der}, \eqref{r-def}, and \eqref{q-def}, 
\begin{align}\label{dx2-dw2-r}
\begin{split}
&  \left\|
    \frac{\partial^{2} y^{K}}{\partial w_{i}^{2}}(x,w_{1:K})
  \right\|_{q}
 \\
&\leq\left\|
  \frac{\partial^{2} y^{K-i}}{\partial x^{2}}
  (y^{i}(x,w_{1:i}),w_{i+1:K})
\right\|_{q}
\|\sigma'\|_{\infty}^{2}c_{q}^{2}
+ 
\left\|
  \frac{\partial y^{K-i}}{\partial x}(y^{i}(x,w_{1:i}),w_{i+1:K})
\right\|_{q}
\|\sigma''\|_{\infty}c_{q}^{2}
\\
&\leq
v_{K-i}(\|w_{i+1}\|_{q},\hdots,\|w_{K}\|_{q})\|\sigma'\|_{\infty}^{2}c_{q}^{2}
+ 
r_{K-i}(\|w_{i+1}\|_{q},\hdots,\|w_{K}\|_{q})\|\sigma''\|_{\infty}c_{q}^{2}.
\end{split}
\end{align}
Now we arrive at bounding the derivatives of the function $f$. 
As shown in
Lemma \ref{euclidean-error} in the appendix, the following inequalities hold:
\begin{subequations}
  \begin{equation}
    \sup_{w,x}
    \left\|
      \frac{\partial J}{\partial y}(y^{K}(x,w))
    \right\|_{q} \leq d_{q,1},\label{dq1prop}
  \end{equation}
  \begin{equation}
    \sup_{w,x} \left\|\frac{\partial^{2} J}{\partial y^{2}}(y^{K}(x,w))\right\|_{q}  = d_{q,2}.\label{dq2prop}
  \end{equation}
\end{subequations}
where $d_{q,1}$, and $d_{q,2}$ are as in
Table \ref{table:defs}.
 Combining \eqref{df2-dw2},\eqref{dx-dw-sigma},\eqref{dx2-dw2-r}, \eqref{dq1prop} and \eqref{dq2prop},
 it holds that for $i=1,\hdots,K$,
\begin{align*}
\left\|\frac{\partial^{2} f}{\partial w_{i}^{2}}(x,w_{1:K})\right\|_{q} &\leq 
d_{q,2}\left\|\frac{\partial y^{K}}{\partial w_{i}}(x,w)\right\|_{q}^{2}
+
d_{q,1}\left\|\frac{\partial^{2}y^{K}}{\partial w_{i}^{2}}(x,w)\right\|_{q}
\\
&\leq
d_{q,2}c_{q}^{2}\|\sigma'\|_{\infty}^{2}r_{K-i}^{2}(\|w_{i+1}\|_{q},\hdots,\|w_{K}\|_{q})  
\\&\quad\quad + 
d_{q,1}c_{q}^{2}\|\sigma'\|_{\infty}^{2} v_{K-i}(\|w_{i+1}\|_{q},\hdots,\|w_{K}\|_{q}) 
\\&\quad\quad +
    d_{q,1}c_{q}^{2}
    \|\sigma''\|_{\infty}r_{K-i}(\|w_{i+1}\|_{q},\hdots,\|w_{K}\|_{q})
  \\
&= s_{K-i}(\|w_{i+1}\|_{q},\hdots,\|w_K\|_{q}) \\
&<  p_{i}(w)^{2}.
\end{align*}
\qed
\subsection{Proof of Proposition \ref{simple-gd-lem}}
  Let for any $w\in\reals^n, \Delta \in \reals^n$ and let $\epsilon >0$.
  Applying the fundamental theorem of calculus, first on the function $f$ and then on its derivative, we have
  \begin{align*}
  f(w-\epsilon\Delta)
  &=
  f(w) - \epsilon \int_{0}^{1}
  \frac{\partial f}{\partial w}(w-\lambda \epsilon \Delta)\cdot \Delta \, d\lambda \, \\
  &=
  f(w) - \epsilon \int_{0}^{1}
  \left[
    \frac{\partial f}{\partial w}(w)\cdot \Delta
    -
    \epsilon
    \int_{0}^{\lambda}
    \frac{\partial^{2} f}{\partial w}(w-u \epsilon\Delta)\cdot( \Delta,\Delta)\,
    du
    \right]\,d\lambda \\
  &=
  f(w) -
  \epsilon \frac{\partial f}{\partial w}(w)\cdot\Delta +
  \epsilon^2
  \int_{0}^{1}
  \int_{0}^{\lambda}
  \frac{\partial^{2} f}{\partial w^{2}}(w + u\epsilon\Delta)\cdot(\Delta,\Delta)
  \, du
  \, d\lambda.
  \end{align*}
  Using our assumption on the second derivative,
  $$
  \leq  f(w) -
  \epsilon \frac{\partial f}{\partial w}(w)\cdot\Delta +
  \epsilon^2\frac{L}{2}\|\Delta\|^{2}.
  $$
  Letting
  $\Delta = \rho(\frac{\partial f}{\partial w}(w))$,
  and using the two defining equations of duality maps (
  \eqref{dual-map-prop-a} and \eqref{dual-map-prop-b}), we see that
  $$
  \leq  f(w) -
  \epsilon \left\|\frac{\partial f}{\partial w}(w)\right\|^{2} +
  \epsilon^2\frac{L}{2}\left\|\frac{\partial f}{\partial w}(w)\right\|^{2}.
  $$
  Combining the terms yields the result.
  \qed
\subsection{Proof of Proposition \ref{one-space-dual}}
Let $\ell$ be given and consider $q=\infty$. For any matrix $A$ with $\|A\|_{\infty}=1$,
\begin{equation}\label{linear-ineq}
\ell(A) = \sum\limits_{i=1}^{r}\sum\limits_{j=1}^{c}\ell_{i,j}A_{i,j} \leq \sum\limits_{i=1}^{r}\max_{1\leq j\leq c}|\ell_{i,j}|
\end{equation}
since each row sum
$\sum\limits_{j=1}^{c}|A_{i,j}|$
is at most $1$. Let $m$ be the matrix defined in equation \eqref{mat-duality}. Clearly this matrix has maximum-absolute-row-sum $1$. Furthermore,
\begin{equation}\label{fmore}
  \ell(m) = \sum\limits_{i=1}^{r}\max_{1\leq j \leq c}|\ell_{i,j}|.
\end{equation}
 Combining equation \eqref{fmore} with the inequality \eqref{linear-ineq} confirms that the dual norm is
$
\|\ell\|_{\infty} 
= 
\sum\limits_{i=1}^{r}\max_{1\leq k\leq c}|\ell_{i,k}|
$.

For $q=2$, that the duality between the spectral norm is the sum of singular values, or trace norm, is well-known, and can be proved for instance as in the proof of Theorem 7.4.24 of \citep{matanal}.

For the duality maps, let the matrix $\rho_{\infty}(\ell)$ be defined as in \eqref{mat-duality}.
Then
$$
\|\rho_{\infty}(\ell)\|_{\infty}
=
\|\ell\|_{\infty}\|m\|_{\infty}
=
\|\ell\|_{\infty}
$$
and
$
\ell \cdot \rho_{\infty}(\ell)
=
\|\ell\|_{\infty}\ell(m)
=
\|\ell\|_{\infty}^{2}.
$

For $q = 2$, let the $\ell = U\Sigma V^{T}$ be the singular value decomposition of $\ell$, and let the matrix $\rho_{2}(\ell)$ be defined as in \eqref{mat-duality-two}. Then
$\ell\cdot \rho_{2}(\ell) = \|\ell\|_{2}\ell(A)$
where $A$ is the matrix
$A  = \sum\limits_{i=1}^{\rank \ell}u_iv_i^{T}$. It remains to show that
$ \ell(A) =   \sum\limits_{i=1}^{\rank \ell}\sigma_{i}(\ell)$:
\begin{align*}
  \ell(A)
  &=
  \tr \left( (U\Sigma V^{T})^{T} A \right)  \\
  &=
  \tr \left( V \Sigma^{T} U^{T} UV^{T} \right) \\
  &=
  \tr
  \left( \left(\sum\limits_{i=1}^{\rank \ell}\sigma_{i}(\ell)v_{i}u_{i}^{T}\right)\left(\sum\limits_{j=1}^{\rank \ell}u_{j}v_{j}^{T}\right)\right) \\
  &=
  \tr\left(
  \sum\limits_{j=1}^{\rank \ell}\sum\limits_{i=1}^{\rank \ell}\sigma_{i}(\ell)v_{i}u_{i}^{T}u_{j}v_{j}^{T}
  \right) \\
  &=
  \tr\left(\sum\limits_{i=1}^{\rank \ell}\sigma_{i}(\ell)v_{i}v_{i}^{T}\right) \\
  &=  \sum\limits_{i=1}^{\rank \ell}\sigma_{i}(\ell). 
\end{align*}
In the second to last inequality we used the fact that the columns of $U$ are orthogonal. In the last inequality we used the linearity of trace together with the fact that the columns of $V$ are unit vectors (that is, $\tr(v_{i}v_i^{T}) = 1$.)
\qed
\subsection{Proof of Proposition \ref{duality-product}}
First we compute the dual norm on $Z$.
For any 
$u = (u_1,\hdots,u_K) \in X_{1} \times \hdots X_K$
with 
$\|(u_1,\hdots,u_K)\|_Z=1$,
we have
\begin{align*}
\ell\cdot u  = 
(\ell_1 \cdot u_1) + \hdots + (\ell_K \cdot u_K) &\leq
\frac{p_1}{p_1}\|\ell_1\|_{X_1} \|u_1\|_{X_1} 
+ \hdots 
+ \frac{p_K}{p_K}\|\ell_K\|_{X_K} \|u_K\|_{X_K} \\&\leq 
(p_1\|u_1\|_{X_1} + \hdots + p_{K}\|u_K\|_{X_{K}})
\max_{1\leq i\leq K}\left\{\frac{1}{p_i}\|\ell_i\|_{X_i}\right\} \\
&=
\max_{1\leq i\leq K}\left\{\frac{1}{p_i}\|\ell_i\|_{X_i}\right\}.
\end{align*}
Therefore,
\begin{equation}\label{ubd}
\|\ell\|_{Z} \leq \max_{1\leq i\leq K}\left\{\frac{1}{p_i}\|\ell_i\|_{X_i}\right\}.
\end{equation}
Define $i^{*}$ as
$$i^{*} = \argmax_{1\leq i\leq K}\left\{\frac{1}{p_i}\|\ell_i\|_{X_i}\right\}.$$
To show that \eqref{ubd} is in fact an equality, consider the vector
$
u$ defined as
$$u
=
(u_1,\hdots,u_K) =
\left(
0,\hdots, \tfrac{1}{\|\ell_{i^{*}}\|p_{i^{*}}}\rho_{X_{i^{*}}}(\ell_{i^{*}}), \hdots, 0
\right).
$$
 The norm of this vector is 
$$
\|u\|_{Z}
=
p_{i^*}\frac{1}{\|\ell_{i^*}\|p_{i^*}}
\|\rho_{X_{i^*}}(\ell_{i^*})\|
=
\frac{1}{\|\ell\|_{i^*}}\|\ell_{i^*}\|
= 1,
$$
and 
$$
\ell\cdot u =
\frac{1}{\|\ell_{i^{*}}\|p_{i^*}}
\ell_{i^*}\cdot\rho_{X_{i^{*}}}(\ell_{i^*})
=
\frac{1}{\|\ell_{i^{*}}\|p_{i^*}}
\|\ell_{i^{*}}\|^{2}
=
\frac{\|\ell_{i^{*}}\|_{X_{i^*}}}{p_{i^*}}
$$
Therefore the dual norm is given by \eqref{the-dual-norm-gen}.

Next, we show that the function $\rho_Z$ defined in Proposition \ref{duality-product} is a duality map, by verifying the conditions \eqref{dual-map-prop-a} and \eqref{dual-map-prop-b}. Firstly,
\begin{align*}
\ell\cdot \rho_{Z}(\ell)
= 
\ell\cdot\left(0, \hdots, \frac{1}{\left(p_{i^{*}}\right)^{2}}\rho_{X_{i^{*}}}(\ell_{i^{*}}),\hdots , 0\right) 
&= 
\frac{1}{\left(p_{i^{*}}\right)^{2}}\ell_{i^{*}}\cdot\rho_{X_{i^{*}}}(\ell_{i^{*}}) 
\\&= 
\frac{1}{\left(p_{i^{*}}\right)^2}\|\ell_{i^{*}}\|_{X_{i^{*}}}^{2} \\
&= \|\ell\|_{Z}^{2}.
\end{align*}
This shows that \eqref{dual-map-prop-b} holds. It remains to show 
$\|\rho_{Z}(\ell)\|_{Z} \, = \|\ell\|_{Z}$.
By definition of $i^{*}$, we have
$$
\rho_{Z}(\ell) = \left(0,\hdots,\frac{1}{\left(p_{i^{*}}\right)^{2}}\rho_{X_{i^{*}}}(\ell_{i^{*}}),\hdots,0\right)
$$
so
$$
\|\rho_{Z}(\ell)\|_{Z} 
= 
p_{i^{*}}\frac{1}{\left(p_{i^{*}}\right)^{2}}\|\rho_{X_{i^{*}}}(\ell_{i^{*}})\|_{X_{i^{*}}} 
=  
\frac{1}{p_{i^{*}}}\|\ell_{i^{*}}\|_{X_{i^{*}}}
= \|\ell\|_{Z}.
$$
\qed
\subsection{Proof of Lemma \ref{lip-prop-nn}}
 Let
  $w\in W$ and
  $\eta \in \mc{L}(W,\reals)$
  be arbitrary
  . 
Let 
$i^{*} 
= 
\argmax_{1\leq i \leq K} 
\left\{\tfrac{1}{p_i(w)}\|\eta_i\|_{q}\right\}
$.
Then
$\rho_w(\eta) $
is of the form
$\rho_w(\eta) = (0, \hdots, \Delta_{i^*},\hdots,0)$,
where
$\Delta_{i^{*}} \in W_{i^{*}}$
is
$\Delta_{i^*} = \frac{1}{p_{i^*}(w)^2}\rho_q(\eta_{i^*})$.
Applying Taylor's theorem, it holds that
\begin{equation}\label{e1}
f(w + \epsilon \rho_w(\eta))
=
f(w)
+\epsilon \frac{\partial f}{\partial w}(w)\cdot\rho_w(\eta)
+ \epsilon^{2}
\int_{0}^{1}\int_{0}^{\lambda}
\frac{\partial^{2} f}{\partial w^{2}}
\left(w + u \epsilon \rho_w(\eta)\right)\cdot\left(\rho_w(\eta),\rho_w(\eta)\right) \,
du \, d\lambda.
\end{equation}
The only components of $\rho_w(\eta)$ that are potentially non-zero are those corresponding to layer $i^{*}$. Then
\begin{equation}\label{e2}
  \frac{\partial^{2} f}{\partial w^{2}}
  \left(w + u\epsilon \rho_w(\eta)\right)\cdot\left(\rho_w(\eta),\rho_w(\eta)\right)  =
  \frac{\partial^{2} f}{\partial w_{i^*}^{2}}
  \left(w + u\epsilon \rho_w(\eta)\right)\cdot
  \left(\Delta_{i^*},\Delta_{i^*}\right).
\end{equation}
According to
Proposition \ref{hess-bd-prop},
\begin{equation}\label{e3}
\left|
\frac{\partial^{2} f}{\partial w_{i^*}^{2}}(w + u\epsilon \rho_w(\eta))
\cdot
     \left(\Delta_{i^{*}},\Delta_{i^{*}}\right)
\right| \leq
p_{i^{*}}(w + u\epsilon\rho_{w}(\eta))^{2}\|\Delta_{i^{*}}\|_q^{2}.
\end{equation}
Since the function $p_{i^{*}}$ only depends on the weights in layers $(i^{*}+1),(i^* + 2),\hdots,K$, it holds that
\begin{equation}\label{e4}
  p_{i^{*}}(w + u\epsilon  \rho_w(\eta)) = p_{i^{*}}(w).
\end{equation}
By the definition of the dual norm \eqref{the-dual-norm}
\begin{equation}\label{e5}
  p_{i^*}(w^{*})\|\Delta_{i^*}\|_q = \|\eta\|_{w}.
\end{equation}
By combining Equations \eqref{e1} - \eqref{e5}, then,
$$
\left|f(w + \epsilon \rho_w(\eta))
-
f(w)
+\epsilon \frac{\partial f}{\partial w}(w)\cdot\rho_w(\eta)
\right|
\leq
\epsilon^{2}
\frac{1}{2}
\|\eta\|_w^{2}.
$$
\qed
\subsection{Proof of Lemma \ref{nn-grad-var}}

  Define the norm  $\|\cdot\|_{1,q}$ on
  $\mc{L}(W,\reals) = \mc{L}(W_1 \times \hdots \times W_K,\reals)$ as
  \begin{equation}\label{weird-norm}
    \|(\ell_1,\hdots,\ell_K)\|_{1,q} = \max_{1\leq i \leq K}\|\ell_{i}\|_{q}.
  \end{equation}
  For each $w\in W$ there is a linear map $A(w(t))$ on $W$ such that the norm
  $\|\cdot\|_{w(t)}$ on the dual space
  $\mc{L}(W,\reals)$ can be represented as
  \begin{equation}\label{repr}
  \|\ell\|_{w(t)} = \|A(w(t))\ell\|_{1,q}.
  \end{equation}
  This can be deduced from inspecting the formula \eqref{the-dual-norm}.
  Although not material for our further arguments, $A(w(t))$ is a block-structured matrix,
  with coefficients $A(w(t))_{i,j} = 0$ whenever $i,j$ correspond to parameters in separate layers, and
  $A(w(t))_{i,j} = \frac{1}{p_{k}(w)}$ when $i,j$ are both weights in layer $k$.

  In general, if   $
  q
  \in
  [
  1,
  \infty
  ]$
  and $A$ is an $n_K\times n_K$ matrix, then, with the convention that $1/\infty = 0$,
  \begin{equation}\label{2q}
    n^{-|1/2-1/q|} \|A\|_{2}
    \leq \|A\|_{q} \leq n^{|1/2 - 1/q|}\|A\|_{2}.
  \end{equation}
  This is well known; see for instance \cite[Section~5.6]{matanal}. Also, the Frobenius norm on $n_K\times n_K$ matrices satisfies
  \begin{equation}\label{2fro}
    \|A\|_{2} \leq \|A\|_{F} \leq n^{1/2}\|A\|_{2}.
  \end{equation}
  Combining \eqref{2q} and \eqref{2fro}, then,
  \begin{equation}\label{fqineq}
    n^{-1/2 -|1/2 - 1/q|} \|A\|_{F} \leq \|A\|_{q} \leq n^{|1/2 - 1/q|}\|A\|_{F}.
    \end{equation}
  It follows from \eqref{fqineq} and
  Proposition \ref{norm-equiv-impl-dual-equiv}  that for any linear functional 
  $\ell \in \mc{L}(\reals^{n_K\times n_K},\reals)$,
  \begin{equation}\label{follower}
    n^{-|1/2 - 1/q|}\|\ell\|_{F} \leq \|\ell\|_{q} \leq n^{1/2 +|1/2 - 1/q|}\|\ell\|_{F}.
  \end{equation}
  Inequality \eqref{follower}, together with the definition \eqref{weird-norm},  means that for any
  $\ell \in \mc{L}(W,\reals)$,
  \begin{equation}\label{cool-equation}
    n^{-|1/2 - 1/q|}
  \max_{1\leq i \leq K}\|\ell_i\|_{F}
  \leq
  \|\ell\|_{1,q}
  \leq  n^{1/2 +|1/2 - 1/q|}\max_{1\leq i \leq K}\|\ell_i\|_{F}.
  \end{equation}
  For any vector $u$ in $\reals^K$ we have,
  \begin{equation}\label{infty-thing}
    K^{-1/2}\|u\|_{2} \leq \|u\|_{\infty} \leq \|u\|_{2}.
  \end{equation}
  Combining \eqref{cool-equation} and \eqref{infty-thing} implies that for all $\ell \in \mc{L}(W,\reals)$,
  \begin{equation}\label{final-thing}
    K^{-1/2}  n^{-|1/2 - 1/q|}\|\ell\|_{2}
    \leq
    \|\ell\|_{1,q}
    \leq
    n^{1/2 + |1/2 - 1/q|}\|\ell\|_{2}.
  \end{equation}
  Let
  $k_3 =  K n^{1 + 2|1 - 2/q|}$.
  Then
  \begin{align*}
  \mathbb{E}\left[\left\|\delta(t)\right\|^{2}_{w(t)} \mid \mc{F}(t-1) \right]
  &= 
  \mathbb{E}\left[\left\|A(w(t))\delta(t)\right\|^{2}_{1,q} \mid \mc{F}(t-1) \right]
    \quad (\text{by } \eqref{repr})\\
  &\hspace{-5em}\leq
    n^{1 + |1 - 2/q|}
    \mathbb{E}\left[\left\|A(w(t))\delta(t)\right\|^{2}_{2} \mid \mc{F}(t-1) \right] 
   \quad (\text{by } \eqref{final-thing})
  \\ &\hspace{-5em}\leq
  \frac{1}{b}    n^{1 + |1 - 2/q|}
  \mathbb{E}\left[
    \left\|
    A(w(t))
    \left(\frac{\partial f}{\partial w}(w(t)) -\frac{\partial f_i}{\partial w}(w(t))\right)
    \right\|^{2}_{2}
       \, \middle|\, \mc{F}(t-1) \right] \\
  &\hspace{-5em}\leq
   \frac{1}{b}   k_3
  \mathbb{E}\left[
    \left\|
    A(w(t))\left(\frac{\partial f}{\partial w}(w(t)) -\frac{\partial f_i}{\partial w}(w(t))\right)
    \right\|^{2}_{1,q}
    \, \middle| \, \mc{F}(t-1)
    \right]   \quad (\text{by } \eqref{final-thing}.)
\end{align*}
In the third step, we used the fact that $b$ items in a mini-batch reduces the (Euclidean) variance by a factor of $b$ compared to using a single instance,
which we have represented with the random index $i\in\{1,2,\hdots,m\}$.

Applying the Equation \eqref{repr} once more, this yields
\begin{equation}\label{var-bd}
  \mathbb{E}\left[ \|\delta(t)\|^{2}_{w(t)} \,\middle|\, \mc{F}(t-1)\right ]
  \leq
        \frac{1}{b}k_{3}
        \mathbb{E}\left[
          \left\|\frac{\partial f}{\partial w}(w(t)) -\frac{\partial f_i}{\partial w}(w(t))\right\|^{2}_{w(t)}
          \, \middle|\, \mc{F}(t-1)
        \right].
  \end{equation}
Next, observe that for any pair $i,j$ in $\{1,2,\hdots,m\}$,
\begin{equation}\label{simple-sum}
\left\|\frac{\partial f_j}{\partial w}(w(t)) -\frac{\partial f_i}{\partial w}(w(t))\right\|_{w(t)}^{2}
\leq
2\left(\left\|\frac{\partial f_j}{\partial w}(w(t))\right\|_{w(t)}^{2}
+
\left\|\frac{\partial f_i}{\partial w}(w(t))\right\|_{w(t)}^{2}\right).
\end{equation}
Applying the chain rule to the function
$f_i$
as defined in
Equation \eqref{the-fi},
and using Inequalities \eqref{dq1prop}, \eqref{dx-dw-sigma},
we see that for all $w$, $i$, and $k$,
\begin{equation}\label{eqbd-pre}
  \begin{split}
  \left\|\frac{\partial f_i}{\partial w_k}(w)\right\|_{q}
  &\leq
  d_{q,1}\left\|\frac{\partial  y^{K}}{\partial w_k}(x_i;w)\right\|_{q} \\
  &\leq d_{q,1}r_{K-k}(\|w_{k+1}\|_{q},\hdots,\|w_{K}\|_{q})\|\sigma'\|_{\infty}c_q \\
  &=
\frac{d_{q,1}}{\sqrt{d_{q,2}}}\sqrt{d_{q,2}}c_{q}\|\sigma'\|_{\infty}r_{K-k}(\|w_{k+1}\|_{q},\hdots,\|w_{K}\|_{q}).
  \end{split}
  \end{equation}
Using the definition of $p_k$ from \eqref{p-definition}, then for any $w$, $i$, and $k$,
\begin{equation}\label{eqbd-almost}
  \left\|\frac{\partial f_i}{\partial w_k}(w)\right\|_{q}
  \leq \frac{d_{q,1}}{\sqrt{d_{q,2}}}p_{k}(w). 
\end{equation}
By examining the ratio $d_{q,1}/\sqrt{d_{q,2}}$ in the four cases presented in Table \ref{table:defs}, we see that
\begin{equation}\label{eqbd}
  \frac{d_{q,1}}{\sqrt{d_{q,2}}} = \sqrt{8}\times n^{\min\{1/2,1-1/q\}}.
  \end{equation}
Combining \eqref{eqbd-almost}, \eqref{eqbd} with the definition of the dual norm at $w$ presented at Equation \eqref{the-dual-norm},
\begin{equation}\label{grad-norm}
\left\|\frac{\partial f_i}{\partial w}(w(t))\right\|_{w(t)}
\leq
\sqrt{8}\times n^{\min\{1/2,1-1/q\}}.
\end{equation}
Using  \eqref{var-bd} and \eqref{simple-sum} together with \eqref{grad-norm},
\begin{align*}
\mathbb{E}
\left[
\|\delta(t)\|_{w(t)}^{2} \mid \mc{F}(t-1)
\right]
&\leq
  \frac{32}{b}k_3n^{\min\{1,2-2/q\}} \\
&=
  \frac{32}{b}Kn^{1+2|1-2/q|}\times n^{\min\{1,2-2/q\}} \\
&= \frac{32}{b} K n^{\max\{1+2/q, 4-4/q\}}.
  \end{align*}
  This confirms \eqref{the-var-cond}.
  \qed
\section*{Auxiliary results}
\begin{prop}\label{norm-equiv-impl-dual-equiv}
  Let $\|\cdot\|_{A}, \|\cdot\|_{B}$ be norms on $\reals^n$, such that
  for all  $u \in \reals^n$ the inequality $\|u\|_{A} \leq K \|u\|_{B}$ holds.
  Then for any  $\ell \in \mc{L}(\reals^n,\reals)$, it holds that
  $\|\ell\|_{B} \leq K \|\ell\|_{A}$.
\end{prop}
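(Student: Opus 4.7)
The plan is to unwind the definition of the dual norms and use the hypothesis $\|u\|_A \leq K\|u\|_B$ directly. By definition, $\|\ell\|_B = \sup\{\ell(u) : \|u\|_B \leq 1\}$, so I would fix a vector $u$ with $\|u\|_B \leq 1$ and seek to bound $\ell(u)$ in terms of $\|\ell\|_A$.

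The key observation is that the hypothesis gives $\|u\|_A \leq K\|u\|_B \leq K$, so the rescaled vector $u/K$ satisfies $\|u/K\|_A \leq 1$. By the definition of $\|\ell\|_A$ as the supremum of $\ell(v)$ over $\|v\|_A \leq 1$, we have $\ell(u/K) \leq \|\ell\|_A$, which by linearity gives $\ell(u) \leq K\|\ell\|_A$. Taking the supremum over $u$ with $\|u\|_B \leq 1$ yields the claim.

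There is essentially no obstacle here, since the argument is a one-line duality computation once the right rescaling is chosen. The only minor care needed is that the definition \eqref{dual-norm-def} in the paper uses $\sup_{\|u\|_w=1}\ell(u)$ rather than $\sup_{\|u\|_w \leq 1}\ell(u)$; these agree because $\ell$ is linear and one can always scale a nonzero vector up to unit norm, so this causes no issue. The entire proof should fit in a few lines.
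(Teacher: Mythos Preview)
Your proposal is correct and is essentially the same one-line duality argument as the paper's own proof. The paper writes the dual norm as $\|\ell\| = \sup_{u\neq 0}\frac{|\ell(u)|}{\|u\|}$ and bounds the ratio directly via $\frac{1}{\|u\|_B} \leq \frac{K}{\|u\|_A}$, whereas you work with the unit-ball formulation and rescale $u$ by $1/K$; these are interchangeable expressions of the same idea.
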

\begin{proof}
Given a norm $\|\cdot\|$ on $\reals^n$, the corresponding dual norm on $\mc{L}(\reals^n,\reals)$ can be expressed as $\|\ell\| = \sup_{u\neq 0}\frac{|\ell(u)|}{\|u\|}$. Using this formula, and the assumption on $\|\cdot\|_A, \|\cdot\|_B$, then,
\begin{align*}
\|\ell\|_{B} = \sup_{u\neq 0}\frac{|\ell(u)|}{\|u\|_{B}}  
\leq \sup_{u\neq 0}\frac{|\ell(u)|}{\|u\|_{A}}K 
= K \sup_{u\neq 0}\frac{|\ell(u)|}{\|u\|_{A}} = K\|\ell\|_{A}.
\end{align*}

  \end{proof}

\begin{prop}\label{simple-ex-lip}
  In the context of the neural network model defined by \eqref{nn-fn-comp} and the objective function $f$ of \eqref{opt-obj}, consider the following family of norms on $\mathbb{R}^{4}:$
  $$\|(u_1,u_2,u_3,u_4)\|_w = p_1(w)|u_1| + p_2(w)|u_2| + p_3|u_3| + p_4|u_4|,$$
  where $p_1,...,p_4$ are defined as follows:
  $$
  p_1(w)
  =
  \sqrt{\frac{1}{8}|w_3| + \frac{5}{128}|w_3|^2 + 1},
  $$
  $$
  p_2(w)
  =
  \sqrt{\frac{1}{8}|w_4| + \frac{5}{128}|w_4|^2 + 1 },
  $$
  $$p_3(w) = p_4(w) = \sqrt{\frac{5}{8}}$$
  Then a duality structure for the family of norms $\|\cdot\|_w$ is given by
  $$
  \rho_w(\ell_1,..,\ell_4)
  =
  \left(0,\hdots,\frac{1}{p_{i^*}(w)^2}\ell_{i^*},\hdots,0\right) \text{ where }
  i^*  = \argmax_{1\leq i \leq 4}
  \left\{ \frac{1}{p_i(w)}|\ell_i|\right\}.
  $$
  Additionally, with this family of norms and duality structure, Assumption \ref{f-asu}  holds with $L=1$.
\end{prop}
\begin{pf}
  We provide an abbreviated proof, and for more details see the detailed proof for the general case given in Lemma \ref{lip-prop-nn}.
  Firstly, that $\rho_w$ is indeed a duality structure for the specified family of norms follows from Proposition \ref{duality-product}.
  Next, we must verify the Lipschitz like condition of \eqref{f-asu}.
  The following inequalities, which following from basic calculus, will be essential:
  \begin{equation}\label{nn-dfdw1}
    \left|\tfrac{\partial^2 f}{\partial w_1^2}(w)\right| \leq  \frac{1}{8}|w_3| + \frac{5}{128}|w_3|^2
    \end{equation}
  $$\left|\tfrac{\partial^2 f}{\partial w_2^2}(w)\right| \leq  \frac{1}{8}|w_4| + \frac{5}{128}|w_4|^2$$
  $$\left|\tfrac{\partial^2 f}{\partial w_3^2}(w)\right| \leq \frac{5}{8}  $$
  $$\left|\tfrac{\partial^2 f}{\partial w_4^2}(w)\right| \leq \frac{5}{8}  $$
  Above, we have used the bounds $\|\sigma\|_{\infty} =1, \|\sigma'\|_{\infty}=\frac{1}{4}$, and $\|\sigma''\|_{\infty} \leq \frac{1}{4}$.
  Let $\eta = (\eta_1,\eta_2,\eta_3,\eta_4)$ be an arbitrary vector in $\mathbb{R}^4$.
  Let us  investigate the four possible cases for $\rho_w(\eta)$.
  First, consider the case of $\rho_w(\eta) = (\frac{1}{p_1(w)^2}\eta_1,0,0,0)$. Observe that, by \eqref{nn-dfdw1}, the function
  $\epsilon \mapsto f(w+\epsilon\rho_w(\eta))$ has a $K$-Lipschitz continuous gradient, where $K=\frac{|\eta_1|^2}{p_1(w)^4}\left(\frac{1}{8}|w_3| + \frac{5}{128}|w_3|^2\right) \leq \frac{|\eta_1|^2}{p_1(w)^2}$. Hence
  \begin{equation}
    \begin{split}
\left|
f(w + \epsilon \rho_{w}(\eta)) - f(w) - \epsilon \frac{\partial f}{\partial w}(w)\cdot\rho_{w}(\eta)
\right|
&\leq
\frac{1}{2}\epsilon^2\frac{|\eta_1|^2}{p_1(w)^2} \\
&= \frac{1}{2}\epsilon^2\|\eta\|_w^2
\end{split}
\end{equation}
  The case of $\rho_w(\eta) = (0,\frac{1}{p_2(w)^2}\eta_2,0,0)$ is handled similarly, due to the symmetry between $w_1$ and $w_2$.
  
  If $\rho_w(\eta) = (0,0\frac{1}{p_3(w)^2}\eta_3,0)$ then, similarly, the function
  $\epsilon \mapsto f(w+\epsilon\rho_w(\eta))$ has a $K$-Lipschitz gradient where
  $K=\frac{|\eta_3|^2}{p_3(w)^4}\frac{5}{8} = \frac{|\eta_3|^2}{p_3(w)^2}$, leading to the inequality
    \begin{equation*}
    \begin{split}
\left|
f(w + \epsilon \rho_{w}(\eta)) - f(w) - \epsilon \frac{\partial f}{\partial w}(w)\cdot\rho_{w}(\eta)
\right|
&\leq \frac{1}{2}\epsilon^2\|\eta\|_w^2
\end{split}
\end{equation*}
  Again, by symmetry the case $\rho_w(\eta) = (0,0,0,\frac{1}{p_4(w)^2}\eta_4)$ is handled similarly.  
  \end{pf}
\begin{prop}\label{nn-unbounded-var}
  In the context of the neural network model  defined by  \eqref{nn-fn-comp},
  consider the  objective function
$$
f(w_1,w_2,w_3,w_4)
=
\frac{1}{2}|y(w_1,w_2,w_3,w_4;1)|^{2} +  \frac{1}{2}|y(w_1,w_2,w_3,w_4;0) - 1|^2,
$$
which corresponds to training the network to map input $x=1$ to output $0$, and input $x=0$ to output $1$. 
Define
$\delta_1(w) = 2y(w;1)\tfrac{\partial y}{\partial w}(w;1)$
and
$\delta_2(w) = 2(y(w;0) - 1)\tfrac{\partial y}{\partial w}(w;0)$, so that
$$\tfrac{\partial f}{\partial w}(w) = \frac{1}{2}[\delta_1(w) + \delta_2(w)]$$
Consider the gradient estimator that computes
$\delta_1(w)$ or $\delta_2(w)$ with equal probability.
Then
$$\lim_{w\to\infty}\frac{1}{2}\sum\limits_{i=1}^2\|\delta_i(w) - \tfrac{\partial f}{\partial w}(w)\|^2 = +\infty$$
\end{prop}
\begin{proof}
  It suffices to show that $\|\delta_1(w) - \delta_2(w)\|^2$ is an increasing function of $\|w\|$.
  Focusing on the component of $\delta_1(w) - \delta_2(w)$ corresponding to $w_1$, we have
\begin{align*}
  \delta_{1,1}(w) &= 2y(w;1)\tfrac{\partial y}{\partial w_1}(w;1) \\
  &= 2y(w;1)\sigma'(w_3\sigma(w_1) + w_4\sigma(w_2))w_3\sigma'(w_1)
\end{align*}
and
\begin{align*}
\delta_{2,1}(w) &= 2(y(w;0)-1)\tfrac{\partial y}{\partial w_1}(w;0) \\
&=
2(y(w;0)-1)\sigma'(w_3\sigma(0) + w_4\sigma(0)w_3)\times 0  = 0.
\end{align*}
Hence
$$
|\delta_{1,1}(w) - \delta_{2,1}(w)|^2
=
|2y(w;1)\sigma'(w_3\sigma(w_1) + w_4\sigma(w_2))w_3\sigma'(w_1)|^2.
$$
Let $z$ be any number, and define the curve $w : [0,\infty) \to \reals$ as
$$w(\epsilon) = \left(1,1,\epsilon,\frac{1}{\sigma(1)}(z- \epsilon\sigma(1))\right).$$
Note that
$$
w_{3}\sigma(w_1) + w_{4}\sigma(w_2)
=
\epsilon\sigma(1) +
\frac{1}{\sigma(1)}[z- \epsilon\sigma(1)]\sigma(1)
=
y
$$
and therefore
$$
|\delta_{1,1}(w(\epsilon)) - \delta_{2,1}(w(\epsilon))|^2 
=
|2\sigma(z)\sigma'(z)\sigma'(1)|^2\epsilon^2.
$$
Clearly, the right hand side of this equation tends to $\infty$ as $\epsilon \rightarrow \infty$.
\end{proof}
\begin{lem}[Corollary 4.17 of \citep{chidume}]\label{lemm-dual-ineq}
Let $\|\cdot\|$ be a norm on $\reals^n$ that is $2$-uniformly convex with parameter $c$, and let $\rho$ be a  duality map for $\|\cdot\|$.
Then for any
$\ell_1,\ell_2$ in $\mc{L}(\reals^n,\reals)$,
\begin{equation}\label{lemma-6-eqn}
  \|\ell_1+\ell_2\|^{2}
  \geq \|\ell_1\|^{2} + 2 \ell_2\cdot\rho(\ell_1) + c\|\ell_2\|.
  \end{equation}
\end{lem}
\begin{proof}
  This is follows from Corollary 4.17 of \citep{chidume}, using $p=2$.
\end{proof}
\begin{prop}\label{mart-like-thm}
Let
$\tau$
be a stopping time with respect to a filtration
$\{\mc{F}_t \}_{t=0,1,\hdots}$.
Suppose there is a number
$c <\infty$
such that
$\tau \leq c$
with probability one.
Let
$x_1,x_2,\hdots$
be any sequence of random variables  such that each $x_t$ is $\mc{F}_{t}$-measurable and
$\mathbb{E}[\|x_t\|] < \infty$.
Then
\begin{equation}\label{stopping}
\mathbb{E}\left[\sum\limits_{t=1}^{\tau}x_t\right]
=
\mathbb{E}\left[
  \sum\limits_{t=1}^{\tau}
  \mathbb{E}\left[x_t \,\middle|\, \mc{F}_{t-1} \right]\right].
\end{equation}
\end{prop}

\begin{proof}
We argue that \eqref{stopping} is a consequence of the optional stopping theorem (Theorem 10.10 in \citep{williams1991probability}).
Define $S_0= 0$ and for $t\geq 1$, let
$S_t =\sum\limits_{i=1}^{t}\left(x_i - \mathbb{E}\left[x_i \,\middle|\, \mc{F}_{i-1}\right]\right)$.
Then
$S_0,S_1,\hdots$
is a martingale with respect to the filtration
$\{\mc{F}_{t}\}_{t=0,1,\hdots}$,
and the optional stopping theorem implies
$\mathbb{E}[S_{\tau}] = \mathbb{E}[S_0]$.
But
$\mathbb{E}[S_0] = 0$, and therefore
$\mathbb{E}[S_{\tau}] = 0$,
which is  equivalent to \eqref{stopping}.
\end{proof}

\begin{lem}\label{nn-bounds}
 Let $h:\reals^n\times \reals^{n\times n} \to \mathbb{R}^{n}$ be the function defined by Equation \eqref{hdef}.
 Let $\mathbb{R}^{n}$ have the norm $\|\cdot\|_{q}$ for $1 \leq q \leq \infty$ 
and equip the matrices $\mathbb{R}^{n\times n}$ with the corresponding induced norm.
  Then the following inequalities hold:
$$\left\|\frac{\partial h}{\partial y}(y,w)\right\|_q \leq \|\sigma'\|_{\infty}\|w\|_q,\quad\quad\quad\quad
\left\|\frac{\partial h}{\partial w}(y,w)\right\|_q \leq \|\sigma'\|_{\infty}\|y\|_q,$$
$$\left\|\frac{\partial^{2}h}{\partial y^{2}}(y,w)\right\|_q 
\leq \|\sigma''\|_{\infty}\|w\|_q^{2},\quad\quad\quad\quad
\left\|\frac{\partial^{2}h}{\partial w^{2}}(y,w)\right\|_q \leq \|\sigma''\|_{\infty}\|y\|_q^{2}.$$
\end{lem}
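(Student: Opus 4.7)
The plan is to compute each derivative in closed form via the chain rule applied to the componentwise formula $h_k(x,w) = \sigma\left(\sum_j w_{k,j} x_j\right)$, then factor the result as a composition of a diagonal matrix whose entries come from derivatives of $\sigma$, and a linear or bilinear action involving $w$ and $x$. The $\|\cdot\|_q$-norm of each factor can then be bounded separately, and submultiplicativity of the induced operator norm finishes the argument.

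For the first-order bounds, differentiating $h_k$ gives $\tfrac{\partial h}{\partial x}(x,w) = D_{\sigma'} w$, where $D_{\sigma'}$ is the $n\times n$ diagonal matrix whose $k$-th diagonal entry is $\sigma'\left(\sum_j w_{k,j} x_j\right)$. Since each such entry has absolute value at most $\|\sigma'\|_\infty$, the induced $q$-norm of $D_{\sigma'}$ is at most $\|\sigma'\|_\infty$ for both $q\in\{2,\infty\}$: for $q=\infty$ the induced norm of a diagonal matrix is its largest absolute diagonal entry (it is the largest absolute row sum), and for $q=2$ it is again the largest absolute diagonal entry (the largest singular value of a diagonal matrix). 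Submultiplicativity then gives $\|\tfrac{\partial h}{\partial x}(x,w)\|_q \leq \|\sigma'\|_\infty \|w\|_q$. For $\tfrac{\partial h}{\partial w}(x,w)$, evaluating on an arbitrary matrix $V$ produces the vector $D_{\sigma'} V x$, so $\|\tfrac{\partial h}{\partial w}(x,w) \cdot V\|_q \leq \|\sigma'\|_\infty \|V\|_q \|x\|_q$, which is precisely the second inequality after taking the supremum over $\|V\|_q = 1$.

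Differentiating once more, the second derivatives act on pairs of vectors or matrices as $\tfrac{\partial^{2} h}{\partial x^{2}}(x,w)\cdot(u,v) = D_{\sigma''}\,(wu \odot wv)$ and $\tfrac{\partial^{2} h}{\partial w^{2}}(x,w)\cdot(V_1,V_2) = D_{\sigma''}\,(V_1 x \odot V_2 x)$, where $\odot$ denotes the Hadamard (componentwise) product and $D_{\sigma''}$ is the analogous diagonal matrix with entries $\sigma''\left(\sum_j w_{k,j} x_j\right)$. The key auxiliary step is the Hadamard inequality $\|a\odot b\|_q \leq \|a\|_q \|b\|_q$ for $q\in\{2,\infty\}$: for $q=\infty$ it is immediate, and for $q=2$ it follows from $\|a\odot b\|_2 \leq \|a\|_\infty \|b\|_2 \leq \|a\|_2 \|b\|_2$. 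Combining this with $\|D_{\sigma''}\|_q \leq \|\sigma''\|_\infty$ and the operator bounds $\|wu\|_q \leq \|w\|_q\|u\|_q$ and $\|V_i x\|_q \leq \|V_i\|_q\|x\|_q$ yields the two second-order inequalities after taking suprema.

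The only step requiring any care is the Hadamard product inequality in the $q=2$ case, since unlike the $\infty$-norm the bound $\|a\odot b\|_2 \leq \|a\|_2\|b\|_2$ is not a direct consequence of the norm's definition; the trick is to peel off one factor in the $\infty$-norm and use $\|\cdot\|_\infty \leq \|\cdot\|_2$. Everything else is a mechanical application of the chain rule and submultiplicativity of the induced matrix norms, and the structure of the proof is the same for both values of $q$ allowed by Assumption \ref{q-asu}.
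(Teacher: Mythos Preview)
Your proposal is correct and follows essentially the same approach as the paper: compute each derivative explicitly, factor out a diagonal matrix with entries $\sigma'(\cdot)$ or $\sigma''(\cdot)$, and bound via submultiplicativity together with the Hadamard inequality $\|a\odot b\|_q \le \|a\|_q\|b\|_q$. In fact you go slightly beyond the paper by supplying the one-line justification for the Hadamard bound when $q=2$, which the paper simply asserts.
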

\begin{proof}
  Define the pointwise product of two vectors in $\reals^n$ as $u \odot v = (u_1v_1,\hdots,u_nv_n)$.
  We will rely on the following two properties that are shared by the norms
  $\|\cdot\|_{q}$
  for
  $1\leq q \leq \infty$. Firstly, for all vectors $u,v$,
  \begin{equation}\label{nn-hadamard-asu}
  \|u\odot v\|_q \leq \|u\|_q\|v\|_q.
  \end{equation}
Secondly,   for any $n\times n$  diagonal matrix $D$,
$
    \|D\|_q = \max_{1\leq i \leq n} \,|D_{i,i}|.
$
    Recall that the component functions of $h$ are
    $h_{i}(y,w) = \sigma\left(\sum\limits_{i=k}^{n}w_{i,k}y_{k}\right)$. Then for $1\leq i,j \leq n$,
$$
\frac{\partial h_{i}}{\partial y_{j}}(y,w) =
\sigma'\left(\sum\limits_{k=1}^{n}w_{i,k}y_k\right)w_{i,j}.
$$
Set $D(x,w)$ to be the $n\times n$ diagonal matrix
\begin{equation}\label{nn-d-def}
D(y,w)_{i,i} = \sigma'\left(\sum\limits_{k=1}^{n}w_{i,k}y_k\right).
\end{equation}
Then
$\frac{\partial h}{\partial y}(y,w) = D(y,w)w$.
Hence
\begin{align*}
  \left\|\frac{\partial h}{\partial y}(y,w)\right\|_q
  &= \|D(y,w)w\|_q \\
  &\leq \|D(y,w)\|_q\|w\|_q  
  \\
  &= \sup_{1\leq i \leq n}|D(y,w)_{i,i}| \|w\|_q 
  \\
  &\leq \|\sigma'\|_{\infty}\|w\|_q. 
\end{align*}
Observe that for $1\leq i,j,l \leq n$,
$$\frac{\partial h_{i}}{\partial w_{j,l}}(y,w) = 
\begin{cases}
\sigma'\left(\sum\limits_{k=1}^{n}w_{i,k}y_k\right)y_{l} &\text{ if } j=i, \\
0 &\text{ else. } 
\end{cases}$$
Let
$\Delta$ be an $n\times n$ matrix such that $\|\Delta\|_q=1$.
Then
$\tfrac{\partial h}{\partial w}(y,w)\cdot\Delta$
is a vector in $\reals^n$ with $i$th component 
\begin{align*}
  \left(\frac{\partial h}{\partial w}(y,w)\cdot\Delta\right)_i &=
                                                                 \sum\limits_{j=1}^{n}\sum\limits_{l=1}^{n}
\frac{\partial h_{i}}{\partial w_{j,l}}(y,w)\Delta_{j,l} \\
  &=
  \sum\limits_{l=1}^{n}\frac{\partial h_{i}}{\partial w_{i,l}}(y,w)\Delta_{i,l} \\
  &=
    \sum\limits_{l=1}^{n}
    \sigma'\left(\sum\limits_{k=1}^{n}w_{i,k}y_k\right)y_{\ell}\Delta_{i,l} \\
  &=
    \sigma'\left(\sum\limits_{k=1}^{n}w_{i,k}y_k\right)
    \sum\limits_{l=1}^{n}\Delta_{i,l}y_{\ell} \\
  &=
  (D(x,w)\Delta y)_{i}
\end{align*}
where $D$ is as in \eqref{nn-d-def}. Hence
\begin{align*}
\left\| \frac{\partial h}{\partial w}(y,w)\cdot\Delta \right\|_q &=
\|D(y,w)\Delta y\|_q \\
                                                                 &\leq \|D(y,w)\|_q\|\Delta\|_q\|y\|_q 
  \\
&= \sup_{1\leq i\leq n}\,|D(y,w)_{i,i}|\|\Delta\|_q\|y\|_q 
\end{align*}
and therefore
$
\left\|\frac{\partial h}{\partial w}(y,w)\right\|_q \leq \|\sigma'\|_{\infty}\|y\|_{q}.
$
Observe that
$$\frac{\partial^{2} h_{i}}{\partial y_{j}\partial y_{l}}(y,w) = \sigma''\left(\sum\limits_{k=1}^{n}w_{i,k}y_k\right)w_{i,j}w_{i,l}.$$
That means
$\frac{\partial^{2} h }{\partial y_j\partial y_{\ell}}(y,w)\cdot(u,v)$
is a vector with components
\begin{align*}
\left(  \frac{\partial^{2}h}{\partial y_j\partial y_{\ell}}(x,w)\cdot(u,v)\right)_i
  &=
    \sum\limits_{j=1}^{n}\sum\limits_{\ell=1}^{n} \sigma''\left(
    \sum\limits_{k=1}^{n}w_{i,k}y_k
    \right)w_{i,j}w_{i,l}u_jv_{\ell} \\&
  =
  \sigma''\left(\sum\limits_{k=1}^{n}w_{i,k}y_k\right)
  \left(\sum\limits_{j=1}^{n}w_{i,j}u_j\right)
  \left(\sum\limits_{\ell=1}^{n}w_{i,l}v_{\ell}\right) \\
  &=
  E(y,w)_{i,i}(Wu)_i(Wv)_i
\end{align*}
where $E$ is the diagonal matrix
\begin{equation}\label{nn-e-def}
  E(y,w)_{i,i} = \sigma''\left(\sum\limits_{k=1}^{n}w_{i,k}y_k\right).
  \end{equation}
Using the notation $\odot$ for the entry-wise  product of vectors, then
$$
\frac{\partial^{2}h}{\partial y^{2}}(x,w)\cdot(u,v)
=
E(x,w)\cdot( (wu) \odot (wv) ).
$$
Then
\begin{align*}
\left\|
\frac{\partial^{2} h }{\partial y^{2} }(y,w)\cdot\left(u,v\right)
\right\|_q &=
\|E(y,w)\cdot( wu \odot wv )\|_q \\
&\leq \|E(y,w)\|_q\|(wu)\odot (wv)\|_q \\
&\leq\|E(y,w)\|_q\|wu\|_q\|wv\|_q 
\\
&\leq \|E(y,w)\|_q\|w\|_q^{2}\|u\|_q\|v\|_q 
\end{align*}
Hence
$
\left\|\frac{\partial^{2}h}{\partial y^{2}}(y,w)\right\|_q
\leq \|\sigma''\|_{\infty}\|w\|_{q}^{2}$.
Observe that
$$
\frac{\partial^{2} h_{i}}{\partial w_{j,l}\partial w_{k,m}}(x,w)
= 
\begin{cases}
\sigma''\left(\sum\limits_{k=1}^{n}w_{i,k}y_k\right)y_{l}y_{m} 
&\text{ if } j=i \text{ and } k = i,\\
0 &\text{ else. } 
\end{cases}
$$
Hence for matrices $u,v$,
$\frac{\partial^{2} h}{\partial w^{2}}(y,w)\cdot(u,v)$
is a vector with entries
\begin{align*}
\left(\frac{\partial^{2} h_i}{\partial w^{2}}(y,w)\cdot\left(u,v\right)\right)_i
 &=
\sum\limits_{j=1}^{n}\sum\limits_{l=1}^{n}\sum\limits_{k=1}^{n}\sum\limits_{m=1}^{n}
\frac{\partial^{2} h_i}
     {\partial w_{j,l}\partial w_{k,m}}
     (y,w)u_{j,l}v_{k,m} \\
&=
\sum\limits_{l=1}^{n}\sum\limits_{m=1}^{n}
\frac{\partial^{2} h_i}{\partial w_{i,l}\partial w_{i,m}}
(y,w)u_{i,l}v_{i,m} \\
&=
\sum\limits_{l=1}^n\sum\limits_{m=1}^n
\sigma''\left(\sum\limits_{k=1}^{n}w_{i,k}x_k\right)
y_{\ell}y_{m}u_{i,l}v_{i,m} \\
&=
\sigma''\left(\sum\limits_{k=1}^{n}w_{i,k}y_k\right)
\left(
  \sum\limits_{l}u_{i,l}y_{\ell}
\right)
\left(
\sum\limits_{m}v_{i,m}x_m
\right) \\
&=
E(y,w)_{i,i}(uy)_i(vy)_i
\end{align*}
where $E$ is as in \eqref{nn-e-def}. Hence
$$
\frac{\partial^{2} h}{\partial w^{2}}(y,w)\cdot(u,v)
= E(y,w)\cdot( (uy) \odot (vy) )
$$
which means
\begin{align*}
  \left\|
  \frac{\partial^{2} h}{\partial w^{2}}(y,w)\cdot(u,v)\right\|_q &\leq
\|E(y,w)\|_q\|uy\|_q\|vy\|_q \\
&\leq \|E(y,w)\|_{q}\|u\|_{q}\|v\|_{q}\|y\|_{q}^{2}.
\end{align*}
Therefore  $\left\|\frac{\partial^{2} h}{\partial w^{2}}(y,w)\cdot(u,v)\right\|_q \leq
\|\sigma''\|_{\infty}\|y\|_q^{2}.$
\end{proof}
\begin{lem}\label{euclidean-error}
  Let
  $z\in\mathbb{R}^{n}$
  and define the function $J:\reals^n \to \reals$ as 
  $$
  J(y)
  =
  \sum\limits_{i=1}^{n}(y_i - z_i)^{2}
  $$
  Then  for all $y,z$  in $\reals^n$ such that
  $\|y-z\|_{\infty} \leq 2$, and $1\leq q \leq \infty$,
  \begin{equation}\label{dj-norm-w-z}
    \left\|\frac{\partial J}{\partial y}(y)\right\|_{q}  \leq
    \begin{cases}
    4 &\text{ if } q = 1, \\ 
    4n^{(q-1)/q} &\text{ if } 1 < q < \infty, \\
    4n  & \text{ if } q= \infty,
    \end{cases}
  \end{equation}
and
$$
\left\|\frac{\partial^{2} J}{\partial y^{2}}(y)\right\|_{q}  \leq
\begin{cases}
  2 &\text{ if } 1 \leq q \leq 2, \\
  2n^{(q-2)/q} &\text{ if } 2 < q < \infty, \\
  2n &\text{ if } q = \infty.
  \end{cases}
$$
\end{lem}
\begin{proof}
By direct calculation, the components of the derivative of $J$ are
$\frac{\partial J}{\partial y_i}(y) = 2(y_i-z_i)$,
and therefore
$$\left\|\frac{\partial J}{\partial y_i}(y)\right\|_q = 2\|y_i-z_i\|_{q^*}.$$
where $\|\cdot\|_{q^*}$ represents the norm dual to $\|\cdot\|_q$.
When $q=1$, the dual norm is $\|\cdot\|_{\infty}$, for $1 \leq q < \infty$, the dual of norm $\|\cdot\|_q$ is $\|\cdot\|_{\frac{q}{q-1}}$ and finally
the dual of the norm $\|\cdot\|_{\infty}$ is $\|\cdot\|_1$.
This yields
  \begin{equation}\label{dj-norm}
    \left\|\frac{\partial J}{\partial x}(y)\right\|_{q}  =
    \begin{cases}
    2\|y-z\|_{\infty} &\text{ if } q= 1 \\ 
    2\|y-z\|_{q/(q-1)} &\text{ if } 1 < q < \infty, \\
    2\|y-z\|_1 & \text{ if } q= \infty
    \end{cases}
  \end{equation}
Equation (\ref{dj-norm-w-z}) follows by combining \eqref{dj-norm} with our assumption that $\|y-z\|_{\infty} \leq 2$.
  
For the second derivative, the components are
  $$\frac{\partial^{2}J}{\partial y_i \partial y_j} = \begin{cases}
0 &\text{ if } i\neq j, \\
2 &\text {if } i = j.
\end{cases}$$
Then for any vectors $u,v$,
$$
\frac{\partial J^{2}}{\partial y^2}(y)\cdot(u,v) = 2\sum\limits_{i=1}^{n}u_iv_i.
$$
Therefore
\begin{align*}
  \left\| \frac{\partial J^{2}}{\partial y^2}(y) \right\|_q
  &=2 \sup_{\|u\|_q=\|v\|_q=1}\sum\limits_{i=1}^{n}u_iv_i\\
  &= 2\sup_{\|u\|_q=1}\|u\|_{q^*}
\end{align*}
To bound the final term in the above equation, we consider  four cases.
The first case is that  $q=1$. In this situation, $\|u\|_{q^*} = \|u\|_{\infty} \leq 1$. The second case is that  $1<q<2$. Then $q/(q-1) > q$, and hence $\|u\|_{q^*} \leq \|u\|_q = 1$.
The third case is when  $2 \leq q < \infty$. Here, $q/(q-1) \leq q$, and we appeal to the following  inequality: If $1 \leq r < q$, then
$\|\cdot\|_{r} \leq n^{\frac{1}{r} - \frac{1}{q}}\|\cdot\|_{q}$.
Applying this inequality with $r=q/(q-1)$, we obtain
$\|u\|_{q^*} \leq n^{(q-2)/q}\|u\|_q = n^{(q-2)/q}.$
 Finally, if $q=\infty$ we have $\|u\|_{q^*} = \|u\|_{1} \leq n$.
\end{proof}

\end{document}